\newcommand{\Lc}{L}
\newcommand{\Lx}{\beta}
\newcommand{\dc}{p}
\newcommand{\dx}{d}
\newcommand{\dxmu}{\rho}
\newcommand{\numbin}{K}
\newcommand{\numhypo}{\{-1,1\}^{\dx-1}}
\begin{document}

\title{{A conversion theorem and minimax optimality for continuum contextual bandits}}
 





%

\author{\name Arya Akhavan\email arya.akhavan@stats.ox.ac.uk \\
      \addr University of Oxford\\
      CMAP, Ecole Polytechnique, IP Paris
      \AND
      \name Karim Lounici \email karim.lounici@polytechnique.edu \\
      \addr
      CMAP, Ecole Polytechnique, IP Paris
      \AND
      \name Massimiliano Pontil \email massimiliano.pontil@iit.it \\
      \addr 
    CSML, Istituto Italiano di Tecnologia\\
      University College London
      \AND
      \name Alexandre B. Tsybakov \email alexandre.tsybakov@ensae.fr \\
      \addr CREST, ENSAE, IP Paris
      }
      
\editor{}

\maketitle
\begin{abstract}
We study the contextual continuum bandits problem, where the learner sequentially receives a side information vector and has to choose an action in a convex set, minimizing a function associated to the context. 
The goal is to minimize all the underlying functions for the received contexts, leading to the {contextual} notion of regret, which is stronger than the standard static regret.
Assuming that the objective functions are $\gamma$-H\"older with respect to the  contexts, $0<\gamma\le 1$, we demonstrate that any algorithm achieving a sub-linear static regret can be extended to achieve a sub-linear {contextual} regret. {We prove a
static-to-contextual regret conversion theorem that provides an upper bound for the contextual regret
of the output algorithm as a function of the static regret of the input algorithm. We further study implications of this general result for three fundamental cases of dependency of the objective function on the action variable: (a) Lipschitz bandits, (b) convex bandits, (c) strongly convex and smooth bandits. For Lipschitz bandits and $\gamma=1$, combining our results with the lower bound of \cite{slivkins14a}, we prove that the minimax optimal contextual regret for the noise-free adversarial setting scales as $T^{(\dc+\dx+1)/(\dc+\dx+2)}$ up to logarithmic factors, where $p$ and $d$ are the dimensions of the context and of the action spaces, respectively, and $T$ is the number of queries. Then we prove that, when the evaluations are noisy, the rate of contextual regret in $T$  for convex bandits is the same as for strongly convex and smooth bandits and scales as $T^{(\dc+\gamma)/(\dc+2\gamma)}$ up to logarithmic factors.} 

Lastly, we present a minimax lower bound, implying two key facts. First, obtaining a sub-linear contextual regret may be impossible over functions that are not continuous with respect to the context. Second, for convex bandits and strongly convex and smooth bandits, the algorithms that we propose achieve, up to a logarithmic factor, the minimax optimal rate of {contextual} regret as a function of the number of queries.

\end{abstract}

\section{Introduction}
The continuum bandit problem presents a sequential decision-making challenge where, at each round, an action is taken as a continuous variable followed by suffering a corresponding loss. Various variants of this problem have been explored in the literature. The most extensively studied framework involves the learner receiving a function $f_t: \com\rightarrow \mathbb{R}$ (random or non-random), at each round $t$, where $\com\subseteq \mathbb{R}^{d}$ is a convex body (compact and convex set with a non-empty interior). Based on $\{(y_k,\bz_{k})\}_{k=1}^{t-1}$, the learner selects $\bz_t \in \com$, suffers loss $f_t(\bz_t)$, and receives the noisy feedback
$$y_t = f_t(\bz_t)+\xi_t,$$ 
where $\xi_t$ denotes the noise variable. The standard goal of the learner is to minimize the static regret
\begin{align}\label{eq:reg}
   R_T:= \Exp\Big[\sum_{t=1}^{T}f_t(\bz_t) - \min_{\bz\in\com}\sum_{t=1}^{T}f_t(\bz)\Big].
\end{align}
In the present paper, we assume that the learner has access to side information about the loss function, namely a context at each round. Leveraging this additional information, the learner is expected to minimize another type of regret referred to as the contextual regret. We consider the problem of contextual continuum bandits stated as follows.

\textbf{Contextual continuum bandits.} Let $\com\subseteq \mathbb{R}^{\dx}$ be a convex body. Let $f:\mathbb{R}^{\dx}\times [0,1]^{\dc}\to\mathbb{R}$ be an unknown function. At each round $t$:
\begin{itemize}
    \item A context $\bc_t\in [0,1]^{\dc}$ is revealed by the adversary.
    \item Based on the past values $\{ (y_k,\bz_k,\bc_k) \}_{k=1}^{t-1}$, the learner chooses a query point $\bz_t\in\com$ and gets a noisy evaluation of $f(\bz_t,\bc_t)$:
    \begin{align}\label{eq:observe}
        y_t = f(\bz_t,\bc_t) + \xi_t,
    \end{align}
    where $\xi_t$ is a scalar noise variable.
\end{itemize}
The learner's objective in this framework is to minimize the \textbf{contextual regret}, defined as:
\begin{align}\label{eq:contxtreg}
  R_T^{\mathrm{cntx}}:= \Exp\Big[\sum_{t=1}^{T}f(\bz_t,\bc_t) - \sum_{t=1}^{T}\min_{\bz\in\com}f(\bz,\bc_t)\Bigg].
\end{align} 

Some remarks about this setting are in order. As a general framework for the problem, we assume that the contexts are drawn sequentially from the hypercube $[0,1]^{\dc}$. This presentation is chosen to simplify the notation. However, all the upper bounds on the regret throughout the paper hold if the contexts are drawn from a non-empty compact subset of $[0,1]^{\dc}$. Moreover, at first sight, by introducing the notation \( f_t(\bz) = f(\bz, \bc_t) \), the contextual continuum bandit problem as stated above might appear as a special case of the {\it non-contextual} continuum bandit problem with dynamic regret (CBDR). In the CBDR setting, the learner selects the query point \( \bz_t \) at time \( t \) based on past observations \( \{y_k, \bz_k\}_{k=0}^{t-1} \) to minimize the dynamic regret: 
\begin{align}\label{eq:regModifid}
  {R_T^{\mathrm{dyn}} :=}  \Exp\Big[\sum_{t=1}^{T}f_t(\bz_t) - \sum_{t=1}^{T}\min_{\bz\in\com}f_t(\bz)\Big].
\end{align}
 With this notation, we have \( {R_T^{\mathrm{cntx}} = R_T^{\mathrm{dyn}}} \). However, the two settings are not equivalent. Indeed, the contextual setting that we consider is ``statistically easier" than CBDR since partial information about \( f_t \) in the form of the context \( \bc_t \) is revealed to the learner before the action is chosen. This suggests that a better behavior of the regret is potentially achievable.

   Another performance measure for continuum contextual bandits called the $L$-regret was introduced by \cite{hazan2007online}:
 \begin{align}\label{eq:L-regret}
  R_T^{L}:= \mathbf{E}\left[\sum_{t=1}^{T} f_t(\bz_t) - \min_{\bz(\cdot)\in X_L}\sum_{t=1}^{T}f_t(\bz(\bc_t))\right].
\end{align} 
Here, the minimum is taken over the class $X_L$ of all actions satisfying the Lipschitz condition with respect to the context, with given Lipschitz constant $L$. For $L=0$ the $L$-regret is equal to the static regret $R_T$.

 Considering \( f_t(\bz) = f(\bz, \bc_t) \) we have that $R_T^{\mathrm{cntx}} =  R_T^{\mathrm{dyn}} \ge R_T^{L} \geq R_T$. The setting with $L$-regret, where $L>0$, is more challenging than the static regret setting as constant actions trivially satisfy the Lipschitz condition. In turn, controlling the contextual regret $R_T^{\mathrm{cntx}}$ is more challenging than the $L$-regret since it evaluates the risk against all possible choices of actions and not only against the Lipschitz actions. 
 By definition, small static regret indicates that the policy is at least as effective as a single constant action performing well on the average of all observed functions during learning. In contrast, small dynamic regret signifies that the policy performs well for each function individually. For non-contextual bandits, the CBDR problem has been recently investigated  by \cite{yang2016tracking, chen2018bandit, zhao2021bandit,WeiLuo2021non-stationary,wang2023adaptivity} under restrictions on various characteristics of the evolution of \( f_t \)'s over time, which are not interpretable in terms of the smoothness assumptions on $f(\cdot,\cdot)$  used in continuum contextual settings. 

In Section \ref{sec:lower}, we demonstrate that, regardless of the behavior of the function $f(\cdot,\bc)$, if the function $f(\bx,\cdot)$ lacks continuity, obtaining a sub-linear contextual regret may be impossible. Hence, we focus on the setting where $f(\bx, \cdot)$ is a continuous function for any $\bx\in\mathbb{R}^{d}$. Specifically, in the sequel we adopt  the following slightly stronger assumption. 
\begin{assumption}\label{ass:Lip}
 For $(\Lc,\gamma)\in[0,\infty)\times (0,1]$ we assume that $f\in \mathcal{F_{\gamma}}(\Lc)$, where $\mathcal{F_{\gamma}}(\Lc)$ denotes the class of all functions $f:\mathbb{R}^{\dx}\times [0,1]^{\dc}\to\mathbb{R}$ such that 
\begin{align}\label{eq:goodalgo}
    |f(\bx,\bc) - f(\bx,\bc')|\leq \Lc\norm{\bc-\bc'}^{\gamma},\quad\text{for all}\quad \bx\in\com, \,\,\bc,\bc'\in[0,1]^{\dc},
    \end{align}
    where $\norm{\cdot}$ is the Euclidean norm.
\end{assumption}


In this paper, we develop a meta-algorithm approach to continuum contextual bandit problem. In short it can be described as follows. Let $\mathcal{F}'$ be a class of real-valued functions on $\mathbb{R}^d$. Assume that we are given an input algorithm $\pi$
achieving a sub-linear
{\it static regret} in adversarial bandit setting over $T$ runs for the sequence of functions $\{f_t(\cdot)\}_{t=1}^{T}$, where each $f_t\in \mathcal{F}'$. Then, we associate to it an output contextual bandit algorithm with sub-linear
{\it contextual regret} for any loss $f(\cdot,\cdot)$ belonging to the class
$\mathcal{G}_{\gamma}(\mathcal{F}'):=\{f\in \mathcal{F_{\gamma}}(\Lc): f(\cdot,\bc)\in \mathcal{F}' \ \text{for all} \, \bc\in [0,1]^{p}\}$. We prove a static-to-contextual regret conversion theorem that provides an upper bound for
the contextual regret of the output algorithm as a function of the static regret of the input algorithm.
Our construction of the output algorithm consists in discretizing the space of contexts into cells and running an algorithm \( \pi \) separately within each cell. Similar constructions can be traced back to \cite{hazan2007online}, where the proposed method was not a meta-algorithm valid for any $\pi$ and $\mathcal{F}'$ but a single contextual bandit algorithm with a specific input algorithm \( \pi \) (projected gradient) and a specific class \( \mathcal{F}' \) (convex and Lipschitz functions). The analysis given in \cite{hazan2007online} focuses on the \( L \)-regret and does not provide a control of the contextual regret. The subsequent work \cite{cesa2017algorithmic}  adopts a similar discretization strategy, with similar limitations. It derives a bound on the \( L \)-regret when the input algorithm is chosen as Exp3 and the class \( \mathcal{F}' \) is the set of all 1-Lipschitz functions.   

Another approach to static-to-contextual regret conversion is developed in \cite{slivkins14a}. It relies
on zooming with dyadic splits and is valid, again, for a specific input class \( \mathcal{F}' \), namely, a subset of Lipschitz functions, and a specific class of input algorithms $\pi$, namely, those achieving the optimal rate on this particular class. The class of functions $f(\cdot,\cdot)$ considered in \cite{slivkins14a} is a subclass of $\mathcal{G}_{1}(\mathcal{F}')$ satisfying, along with the Lipschitz condition,  the additional restrictive condition that for each \( \bc \), there exists \( \bz^*(\bc) = \arg\min_{\bz \in \com} f(\bz, \bc) \) such that \( \bc \mapsto f(\bz^*(\bc), \bc) \) is Lipschitz. The algorithm of \cite{slivkins14a} requires exact knowledge of the Lipschitz constant, and the resulting bound on the contextual regret is exponential in the dimension of the context.

\vspace{2mm}

The contributions of the present paper can be summarized as follows.
\begin{itemize} 
    \item[(i)] In Section \ref{sec:general}, we propose a meta-algorithm that converts any input non-contextual bandit algorithm into an output contextual bandit algorithm. We prove a general conversion theorem that derives the contextual regret of the output algorithm from the static regret of the input algorithm (Theorem \ref{thm:general}). 
    \item[(ii)] We apply our static-to-contextual conversion to handle the following two fundamental settings. 
    \begin{itemize}
        \item[(ii-a)] First, we consider the setting, where \( f(\cdot, \cdot) \) is Lipschitz in both arguments, that is, $\gamma=1$ and $\mathcal{F}'$ is the class of Lipschitz functions. In this setting, when the noises $\xi_i$ are zero, we show that our conversion approach leads to an output algorithm with a rate optimal contextual regret improving upon \cite{slivkins14a} in several respects. Unlike \cite{slivkins14a}, we 
    do not require the above restrictive additional condition on $f$ nor the knowledge of Lipschitz constant, and our bound scales polynomially in the dimension of the context. 
    \item[(ii-b)] Second, we consider the setting where $\mathcal{F}'$ is the class of all convex functions and the noise in observations \eqref{eq:observe} is sub-Gaussian. In this case, by applying our conversion theorem together with the static regret bound from \cite{fokkema2024online} we prove that the contextual regret of the proposed output algorithm achieves the minimax optimal rate in \( T \) up to a logarithmic factor.
    \end{itemize}
     \item[(iii)] In Section \ref{sec:strong}, we apply our conversion theorem in the setting where  
$\mathcal{F}'$ is the class of strongly convex and smooth functions, and the noise is sub-Gaussian. We first construct a bandit algorithm, see Algorithm \ref{algoBCO}, which is an extension of the Bandit Convex Optimization (BCO) algorithm from \cite{hazan2014bandit} to the problem with noisy queries. We prove a bound on its static regret. Then, combining it with the conversion theorem of Section \ref{sec:general}, we derive a  bound on the contextual regret of the proposed output algorithm.
\item[(iv)] In Section \ref{sec:lower}, we establish a minimax lower bound leading to several useful corollaries. First, it implies that  the contextual regret bounds from items (ii-b) and (iii) are minimax optimal in $T$ up to logarithmic factors. Second, it shows that if 
$f(\bx,\cdot)$ lacks continuity, obtaining a sub-linear contextual regret may be impossible. 
\end{itemize}

\section{Related work}


 Contextual bandit problem has been introduced by \cite{woodroofe1979one} in the framework of multi-armed bandits, that is, when the space of actions is finite. The literature on the contextual bandit problem with finite number of arms is now very rich, see, e.g., \cite{goldenshluger2009woodroofe,wang2005bandit, bastani2020online, perchet2013multi, rigollet2010nonparametric, pandey2007bandits, langford2007epoch, auer2002finite, kakade2008efficient, yang2002randomized,Foster2023FoundationsOR,CellaLPP23,CellaLPP2024} and the references therein. In this literature, it is not uncommon to assume that
the contexts are i.i.d. rather than adversarial. 

There was also a considerable work on the problem of contextual bandits with continuous action space: \cite{hazan2007online,slivkins14a,kleinberg2008multi, cesa2017algorithmic,chen2018nonparametric,li2019dimension}.
These papers deal with adversarial contexts and, with the exception of \cite{li2019dimension}, consider the noiseless setting ($\xi_i\equiv 0$). The analysis in \cite{slivkins14a} addresses the case where the context and the action vary in metric spaces with covering dimensions $p$ and $d$, respectively, under the assumption that $f(\cdot,\cdot)$ is Lipschitz in both arguments and satisfies an additional strong restriction (see the Introduction). The results in \cite{slivkins14a} provide a lower bound of the order $O(T^{\frac{p+d+1}{p+d+2}})$ on the contextual regret and an algorithm based on zooming techniques that attains this rate up to a logarithmic factor. Bounds for the $L$-regret for continuum contextual bandit algorithms are developed in \cite{hazan2007online,cesa2017algorithmic}. The paper \cite{chen2018nonparametric} deals with the setting where $f(\cdot,\cdot)$ is Lipschitz in both arguments, the actions belong to $[0,1]$, the function $f(\cdot,\bc)$ has a unique minimizer for any context $\bc$ and exhibits approximately quadratic behavior around the minimizer. Under these assumptions,  \cite{chen2018nonparametric} propose an algorithm with $O(T^{\frac{p+2}{p+4}})$ contextual regret. 
In \cite{li2019dimension}, the authors addressed the problem of contextual bandits when the functions are strongly convex and smooth and derived a bound for the contextual regret that scales as $O(T^{\frac{p+1}{p+2}})$ up to logarithmic factors. To get this result, \cite{li2019dimension} additionally assumed that, for each $\bc\in[0,1]^{\dc}$, the global minimizer $\argmin_{\bx\in\mathbb{R}^{\dx}}f(\bx,\bc)$ is an interior point of the constraint set $\com$.
 While this assumption might be hard to check in practice, it does simplify the analysis considerably. In the present paper, we drop this assumption and obtain in Section \ref{sec:strong} an upper bound that improves upon the result of
\cite{li2019dimension}. Furthermore, in Section \ref{sec:lower} we present a matching lower bound demonstrating that the algorithm that we propose is minimax optimal to within a logarithmic factor.



In all the papers on continuum contextual bandits cited above, the contexts are adversarial, so the contextual regret is a variant of the dynamic regret as discussed after \eqref{eq:contxtreg}. 
The dynamic regret was also in the focus of studies in various settings of online convex optimization. For zero-order optimization (aka gradient-free optimization), the dynamic regret has been studied in \cite{zinkevich2003online,yang2016tracking, zhang2018dynamic,chen2018bandit,zhao2021bandit}. Among these papers, \cite{yang2016tracking, chen2018bandit, zhao2021bandit} have addressed the bandit framework, in which the learner is not allowed to query function evaluations outside of the constraint set $\Theta$.

In the framework of adversarial multi-armed contextual bandits with finite number of arms, without imposing any regularity assumptions on the context such as our Assumption \ref{ass:Lip}, \cite{chen2019new,luo2018efficient,auer2019adswitch} achieved sub-linear regret by assuming that the functions
$f_t$	either change infrequently or vary negligibly over time. 


\section{A general conversion theorem}\label{sec:general}

In this section, we propose a meta-algorithm that to any input algorithm achieving a sub-linear static regret in
adversarial bandit setting associates a contextual bandit algorithm with sub-linear contextual regret.
We establish a conversion theorem (Theorem \ref{thm:general} below) that provides an upper bound for the contextual
regret of the output algorithm as a function of the static regret of the input algorithm.

Here and in the sequel, unless other assumption is explicitly stated, we assume that the context $\bc_t$ is drawn by an adversary who may be aware of all previous actions of the learner and past information received by the learner. 
 We also assume throughout that for any $\bc\in [0,1]^{\dc}$ the minimum $\min_{\bz\in\com} f(\bz, \bc)$ is attained by some $\bz^*(\bc)\in\com$.

We consider strategies for choosing the query points such that $\bz_0\in\mathbb{R}^{\dx}$ is a random variable, and $\bz_t =\pi_t(\{\bc_k,\bz_k,y_k\}_{k=1}^{t-1},\bc_{t},\bzeta_t)$ for $t\geq 1$, where $\pi_t$ is a measurable function with values in $\mathbb{R}^{\dx}$, and $(\bzeta_t)_{t\geq 1}$ is a sequence of random variables with values in a measurable space $(\mathcal{Z},\mathcal{U})$ such that $\bzeta_t$ is independent of $\{\bc_k,\bz_k,y_k\}_{k=1}^{t-1},\bc_{t}$. We call the process $\pi = (\pi_t)_{t\geq 1}$ a randomized procedure and we denote the set of all randomized procedures by~$\Pi$. 

Let $\mathcal{F}'$ be a class of real-valued functions on $\mathbb{R}^d$. We define $\mathcal{F}= \{f: f(\cdot,\bc)\in \mathcal{F}' \text{ for all } \bc\in [0,1]^{p}\}$. Assume that there exists a randomized procedure $\pi$, for which we can control its static regret over $T$ runs for the sequence of functions $\{f(\cdot,\bc_t)\}_{t=1}^{T}$ where  $f\in \mathcal{F}$. Namely, if $\{\bz_t\}_{t=1}^{T}$ are outputs of $\pi$, then there exist $F:\mathbb{R}^{+}\cup\{0\}\to\mathbb{R}^{+}\cup\{0\}$, {$F_1:\mathbb{R}^{+}\cup\{0\}\to\mathbb{R}^{+}\cup\{0\}$ such that $F$ is a concave function, $F_1$ is a non-decreasing function,} and
\begin{align}\label{eq:goodalgox}
{\sup_{f\in \mathcal{F}} \,} \Exp\left[\sum_{t=1}^{T}f(\bz_t,\bc_t) - \min_{\bz\in\com}\sum_{t=1}^{T}f(\bz,\bc_t)\right]\leq F(T){F_1(T)}.
\end{align}
In particular, we will be interested in the case where there exists $(\tau_1,\tau_2,\tau_3,\Tau_0)\in [0,\infty)\times[0,1)\times [0,\infty)\times[1,\infty)$ such that, for $T\geq \Tau_0$,
{
\begin{align}\label{eq:goodalgo1}
    F(T)\le C\dx^{\tau_1}T^{\tau_2} \quad \text{and} \quad F_1(T)\le \log^{\tau_3}(T+1),
\end{align}
}
where $C>0$ is a constant that does not depend on $T$ and $\dx$.
\begin{definition}
We call a randomized procedure that satisfies \eqref{eq:goodalgox} and \eqref{eq:goodalgo1} a $(\mathcal{F},\tau_1,\tau_2,\tau_3,\Tau_0)$-consistent randomized procedure and we denote the set of all such procedures by $\Pi(\mathcal{F},\tau_1,\tau_2,\tau_3,\Tau_0)$. 
\end{definition}
\begin{algorithm}[t!]
    \DontPrintSemicolon
   \caption{}
  \label{algo1}
   \SetKwInput{Input}{Input}
   \SetKwInOut{Output}{Output}
   \SetKwInput{Initialization}{Initialization}
   \Input{Randomized procedure $\pi = (\pi_t)_{t=1}^{\infty}$, parameter $\numbin\in \mathbb{N}$ and a partition $\{B_i\}_{i=1}^{\numbin^d}$ of $[0,1]^{\dc}$ 
   }
   \Initialization{$N_1(0),\dots,N_{\numbin^{\dc}}(0)=0$, $H(1),\dots,H(\numbin^{\dc}) = \{0\}, \text{ and }\bz_0=\bzero$}
   \For{ $t = 1, \ldots, T$}{
   \If{$\bc_t \in B_i$}{
   \vspace{2mm}
   Let $N_i(t) = N_i(t-1) + 1$\tcp*{Increment}
    \vspace{2mm}
   Based on $\bc_t$ and $\{y_k,\bz_k,\bc_k\}_{k\in H(i)}$, use $\pi_{N_i(t)}$ to choose $\bz_t\in\com$ \tcp*{Choosing the query point}
   \vspace{2mm}
   $y_t = f(\bz_t, \bc_t) + \xi_t$\tcp*{Query}

 
   \vspace{2mm}
   $H(i) = H(i)\cup\{t\}$\tcp*{Update the set of indices}
}
   }
   \end{algorithm}
Algorithm \ref{algo1} is a meta-algorithm that assures regret conversion. It uses a partition $\{B_i\}_{i=1}^{\numbin^{\dc}}$ of the hypercube $[0,1]^{\dc}$, where all the cells $B_i$ are hypercubes of equal volume with edges of length~$1/K$. Fix a consistent randomized procedure $\pi$ for the class of functions $\mathcal{F}$. At each round $t$, if the context~$\bc_t$ falls into the cell $B_i$, Algorithm \ref{algo1} chooses the query point $\bz_t$ by the procedure $\pi$ based only on the past contexts from the same cell $B_i$ and the corresponding past query points. Since no more than $T$ cells $B_i$ can be visited and need to be memorized Algorithm \ref{algo1} is polynomial time provided the input algorithm $\pi$ is polynomial time. 

Let $\{\bc_{i_j}\}_{j=1}^{N_i(T)}$ be all the contexts belonging to $B_i$ revealed up to round $T$, where $N_i(T)$ is the total number of such contexts, and let $\{\bz_{i_j}\}_{j=1}^{N_i(T)}$ be the outputs of $\pi$ restricted to the contexts that belong to $B_i$ (the inner loop of Algorithm \ref{algo1}). This segment of the algorithm involves saving the index of the rounds associated with the received contexts in cell $B_i$. The set $H(i)$ is the collection of such indices. Denote by $\bb_i$ the barycenter of $B_i$, and let $\bx_i^* \in \argmin_{\bx\in\com}f(\bx,\bb_i)$. Then,
\begin{align*}
\Exp\bigg[\sum_{t=1}^{T}\bigg(f(\bz_t,\bc_t) - \min_{\bz\in\com}f(\bz,&\bc_t)\bigg)\bigg] = \Exp\bigg[\sum_{i=1}^{\numbin^{\dc}}\sum_{j=1}^{N_{i}(T)}\left(f(\bz_{i_j},\bc_{i_j}) - f(\bx_{i}^*,\bc_{i_j})+\Delta_{i_j}\right)\bigg] 
\\&\phantom{} \\&\leq \Exp\bigg[\sum_{i=1}^{\numbin^{\dc}}\big(\sum_{j=1}^{N_{i}(T)}f(\bz_{i_j},\bc_{i_j}) - \min_{\bz\in\com}\sum_{j=1}^{N_{i}(T)}f(\bz,\bc_{i_j})+\sum_{j=1}^{N_{i}(T)}\Delta_{i_j}\big)\bigg],
\end{align*}
where $\Delta_{i_j} = f(\bx_{i}^*,\bc_{i_j}) - \min_{\bz\in\com}f(\bz,\bc_{i_j})$. One may notice that
$\Delta_{i_j} \le 2 \max_{\bz\in\com} |f(\bz,\bc_{i_j}) - f(\bz,\bb_i)|$ (see the proof of Theorem \ref{thm:general}), so that
under Assumption \ref{ass:Lip} we have $\Delta_{i_j} \le 2L(\sqrt{p}/ K)^{\gamma}$. Thus, we can control the dynamic regret by the sum of the static regret, which grows with $K$, and the bias term comprised of $\Delta_{i_j}$'s, which decreases with $K$. There exists an optimal $K$ exhibiting a trade-off between the static regret and the bias. In Theorem \ref{thm:general}, we show that if $\numbin$ is tuned  optimally, 
\begin{align}\label{eq:Kvalue}
   \numbin = \numbin(\tau_1,\tau_2,\tau_3,\gamma) \eqdef \max\left(1,\floor{\left(\Lc \dc^{\frac{\gamma}{2}}\dx^{-\tau_1}{T^{(1-\tau_2)}\log^{-\tau_3}(T+1)}\right)^{\frac{1}{\dc(1-\tau_2)+\gamma}}}\right),
    \end{align}
then Algorithm \ref{algo1} achieves a sub-linear dynamic regret. 

\begin{theorem}\label{thm:general} Let $(\Lc,\gamma)\in[0,\infty)\times (0,1]$. 
Let $\pi$ be a randomized procedure such that \eqref{eq:goodalgox} holds with a concave function $F$, {a non-decreasing function $F_1$,} and let $\bz_t$'s be the updates of Algorithm~\ref{algo1} with a positive integer $K$.
Then, 
   \begin{align}\label{eq:eboundgenX}
       \sup_{f\in\mathcal{F}\cap \mathcal{F}_\gamma(L)}\Exp\big[\sum_{t=1}^{T}\big(f(\bz_t,\bc_t) - \min_{\bz\in\com}f(\bz,\bc_t)\big)\big]
\leq \numbin^pF\left(\frac{T}{\numbin^p}\right){F_1(T)} + 2LT\left(\frac{\sqrt{\dc}}{\numbin}\right)^{\gamma}.
   \end{align}
  If, in addition, $\pi\in \Pi(\mathcal{F},\tau_1,\tau_2,\tau_3,\Tau_0)$ for a triplet $(\tau_1,\tau_2,\Tau_0)\in [0,\infty)\times[0,1)\times[1,\infty)$,  and $K = \numbin(\tau_1,\tau_2,\tau_3,\gamma)$ (see \eqref{eq:Kvalue}), then for any $T\geq \numbin^p\Tau_0$, we have
   \begin{align}\label{eq:generaldynamo}
&\sup_{f\in\mathcal{F}\cap \mathcal{F}_\gamma(L)}\Exp\big[\sum_{t=1}^{T}\big(f(\bz_t,\bc_t) - \min_{\bz\in\com}f(\bz,\bc_t)\big)\big]\notag\\
&\hspace{1cm}\le C'\max\bigg(\dx^{\tau_1}T^{\tau_2}L_1,\left(\Lc \dc^{\frac{\gamma}{2}}\right)^{\frac{\dc(1-\tau_2)}{\dc(1-\tau_2)+\gamma}}\left(\dx^{\tau_1}L_1\right)^{\frac{\gamma}{\dc(1-\tau_2)+\gamma}}T^{\frac{(\dc-\gamma)(1-\tau_2) + \gamma}{\dc(1-\tau_2)+\gamma}}\bigg).
\end{align}
where $L_1 = \log^{\tau_3}(T+1)$ and $C'>0$ is a constant 
independent of $T$, $\dx$, and $\dc$.
\end{theorem}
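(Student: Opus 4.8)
The plan is to prove the two displays in turn, the first being the workhorse and the second a calculation.

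For inequality \eqref{eq:eboundgenX}, I would pick up exactly where the excerpt's computation leaves off. We have already bounded the dynamic regret by
$$
\Exp\Big[\sum_{i=1}^{K^p}\Big(\sum_{j=1}^{N_i(T)}f(\bz_{i_j},\bc_{i_j}) - \min_{\bz\in\com}\sum_{j=1}^{N_i(T)}f(\bz,\bc_{i_j})\Big)\Big] + \Exp\Big[\sum_{i=1}^{K^p}\sum_{j=1}^{N_i(T)}\Delta_{i_j}\Big],
$$
and the bias term is controlled by $2LT(\sqrt{p}/K)^\gamma$ since $\sum_i N_i(T)=T$ and each $\Delta_{i_j}\le 2L(\sqrt p/K)^\gamma$. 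For the first (static-regret) term, the key point is that within cell $B_i$ the algorithm runs the procedure $\pi$ on the subsequence of contexts $\{\bc_{i_j}\}_{j=1}^{N_i(T)}$, which is a legitimate sequence of functions in $\mathcal F$ (each $f(\cdot,\bc_{i_j})\in\mathcal F'$), so by \eqref{eq:goodalgox} the conditional expectation of the per-cell static regret, given $N_i(T)=n$, is at most $F(n)$. The subtlety is that $N_i(T)$ is random and may depend on the learner's past choices (adversarial contexts), so I would condition on the realized counts, use that $\pi$'s guarantee $F(\cdot)$ holds for every fixed horizon $n$, and then take expectation to get $\Exp[F(N_i(T))]$ per cell. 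Summing over cells and invoking Jensen's inequality with the concavity of $F$ together with $\sum_i N_i(T)=T$ gives
$$
\sum_{i=1}^{K^p}\Exp\big[F(N_i(T))\big]\le K^p\, \Exp\Big[F\Big(\tfrac1{K^p}\sum_i N_i(T)\Big)\Big] = K^p F\big(T/K^p\big).
$$
Combining the two pieces yields \eqref{eq:eboundgenX}.

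For the second display \eqref{eq:generaldynamo} I would substitute the polynomial bound \eqref{eq:goodalgo1} into \eqref{eq:eboundgenX}, valid because $T\ge K^p T_0$ ensures $T/K^p\ge T_0$. This gives the dynamic regret bounded by
$$
C\, \dx^{\tau_1}\Big(\tfrac{T}{K^p}\Big)^{\tau_2}\log^{\tau_3}\!\Big(\tfrac{T}{K^p}+1\Big) K^p + 2LT\Big(\tfrac{\sqrt p}{K}\Big)^\gamma
\le C\, \dx^{\tau_1} T^{\tau_2} L_1\, K^{p(1-\tau_2)} + 2L p^{\gamma/2} T\, K^{-\gamma},
$$
using $\log^{\tau_3}(T/K^p+1)\le\log^{\tau_3}(T+1)=L_1$. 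Now I plug in the choice $K=K(\tau_1,\tau_2,\tau_3,\gamma)$ from \eqref{eq:Kvalue}, which is (up to the floor and the $\max$ with $1$) the value that equalizes the exponents of the two terms; a routine computation of $K^{p(1-\tau_2)}$ and $K^{-\gamma}$ at this value shows both terms are of order $(L p^{\gamma/2})^{p(1-\tau_2)/(p(1-\tau_2)+\gamma)}(\dx^{\tau_1}L_1)^{\gamma/(p(1-\tau_2)+\gamma)} T^{((p-\gamma)(1-\tau_2)+\gamma)/(p(1-\tau_2)+\gamma)}$, which is the second argument of the $\max$. The $\max$ with the first argument $\dx^{\tau_1}T^{\tau_2}L_1$ and the floor/truncation at $K=1$ need separate bookkeeping: when the expression inside the floor in \eqref{eq:Kvalue} is less than $1$ we have $K=1$, and then \eqref{eq:eboundgenX} directly gives $F(T)+2LT p^{\gamma/2}\le C\dx^{\tau_1}T^{\tau_2}L_1 + (\text{lower order})$, absorbed by the first term of the $\max$; the floor loses only a constant factor and is absorbed into $C'$.

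The main obstacle is the first step: handling the randomness and possible adversarial dependence of the cell counts $N_i(T)$ while still being able to invoke the per-cell static regret guarantee \eqref{eq:goodalgox}, which is stated for a fixed horizon. The resolution is that the sub-sequence of contexts landing in $B_i$, together with the sub-sequence of $\bzeta$-randomness used in those rounds, forms a valid input to $\pi$, and the guarantee \eqref{eq:goodalgox} is a supremum over $f\in\mathcal F$ that holds uniformly in the context sequence; conditioning on the partition of rounds among cells reduces everything to the fixed-horizon statement, after which concavity of $F$ and Jensen do the rest. The remaining steps are elementary algebra with exponents.
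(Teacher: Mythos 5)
Your proposal is correct and follows essentially the same route as the paper: per-cell static regret plus the Hölder bias term $2L(\sqrt{\dc}/K)^\gamma$, Jensen's inequality with the concavity of $F$ and $\sum_i N_i(T)=T$ to get $K^{\dc}F(T/K^{\dc})$, then substitution of the polynomial bound and balancing at the prescribed $K$ with the $K=1$ case absorbed into the first term of the max. The only cosmetic difference is that the paper applies Jensen twice (first over the law of $N_i(T)$ via $p_i=\Exp[N_i(T)]/T$, then over the cells), whereas you apply it once over the cells using the deterministic identity $\sum_i N_i(T)=T$; both yield the same bound.
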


Note that the Lipschitz constant $\Lc$ measures how sensitive the function $f$ is with respect to variations of the contexts. If $\Lc = 0$ the static and contextual regrets are equivalent, and it is not surprising that \eqref{eq:goodalgo1} and \eqref{eq:generaldynamo} have the same rates. We also note that a bound that depends in the same way as \eqref{eq:generaldynamo} on the parameters $\dc$, $\dx$, and $T$ can be achieved with a choice of $K$ independent of $\Lc$.   In deriving \eqref{eq:generaldynamo} we assumed that the learner knows the value of $\Lc$ but this is only done to illustrate the above remark. 

The approach to contextual bandits based on discretizing the space of contexts and running some algorithms $\pi$ separately {for each discretization item} is quite common, see \cite{hazan2007online,
kleinberg2008multi, bubeck2008online, slivkins14a, cesa2017algorithmic,li2019dimension}. With the exception of \cite{slivkins14a}, each of these papers, as well as follow-up works based on similar arguments, explored a specific algorithm $\pi$, a specific class $\mathcal{F}$, and provided bounds on the contextual regret (or on $L$-regret) using their particular properties. More generally, \cite[Theorem 21]{slivkins14a} provides a static-to-contextual regret conversion device, where the class $\mathcal{F}$ is still quite specific (defined by conditions (S1) and (S2) below) but the input algorithm $\pi$ is not fixed and allowed to belong to the class of algorithms with $\tilde O(T^{\frac{1}{p+2}})$ static regret.  
%
The bound on the contextual regret in \cite[Theorem 21]{slivkins14a} includes the doubling dimension, which is exponential in the dimension of the context. In contrast, Theorem~\ref{thm:general} is a conversion result valid for a general class $\mathcal{F}$, a general input algorithm $\pi$,  and its  bound is free from exponential factors. It yields a ready-to-use tool for exploring various particular settings.   
Below we demonstrate its application to the three major contextual settings, where the class $\mathcal{F}$ is defined by Assumption~\ref{ass:Lip} combined with one of the following assumptions on the class of functions $f(\cdot, \bc)$ for any fixed $\bc$:
\begin{itemize}
    \item[(a)] $\mathcal{F}'$  is a class of Lipschitz functions,
    \item[(b)]  $\mathcal{F}'$  is the class of all convex functions,
    \item[(c)]  $\mathcal{F}'$ is a class of strongly convex and smooth functions.
\end{itemize}
{For the sake of comparison, we recall that the class of loss functions $\mathcal{F}$ considered in \cite[Theorem 21]{slivkins14a} is defined by the following two conditions:\\
\phantom{00} (S1) $f(\bz,\bc)$ is Lipschitz in $(\bz,\bc)$ with Lipschitz constant 1,\\
\phantom{00} (S2) for each context $\bc$, there exists $\bz^*(\bc)\in \argmin_{\bz\in\com} f(\bz,\bc)$ such that the map $\bc\mapsto f(\bz^*(\bc),\bc)$ is Lipschitz, with Lipschitz constant 1. \\
\phantom{0} Condition (S2) does not follow from (S1) and requires some strong additional properties of $f$ that are not detailed in \cite{slivkins14a}. Moreover, as emphasized in \cite{slivkins14a} the assumption that the Lipschitz constant is 1 can be relaxed but in that case the algorithm should know the Lipschitz constant.} 


For the rest of this section, we focus on the implications of Theorem \ref{thm:general} for two families of functions corresponding to items (a) and (b) above. Observe that considering (a) contains as a special case the class $\mathcal{F}$ of functions $f(\bz,\bc)$ that are Lipschitz in $(\bz,\bc)$ with no further restriction. We do not need the hardly interpretable additional condition (S2), nor the knowledge of the Lipschitz constant.  

{
\hspace{2mm}(a) {\bf Lipschitz bandits.} Lipschitz non-contextual bandits have been studied extensively in the literature, cf. \cite{lattimore2020bandit} and the references therein. The main idea is to reduce the Lipschitz bandit problem to a multi-armed bandit problem with finite number of arms, often employing zooming techniques with dyadic splits; see, for example, \cite{kleinberg2004nearly,kleinberg2008multi,bubeck2008online}. While these methods benefit from near optimal  in 
$T$ static regret (of the order  $\tilde O(T^{\frac{\dx+1}{\dx+2}})$), the regret bounds depend exponentially on the dimension $d$.
A recent improvement due to \cite{podimata2021adaptive} shows that polynomial dependence on the dimension is achievable. In \cite{podimata2021adaptive}, the authors proposed an algorithm with static regret that scales as $\dx^{\frac{1}{2}}T^{\frac{\dx+1}{\dx+2}}\log^{5}(T)$ for $\com = [0,1]^{\dx}$ (cf. \cite[Theorem 2]{podimata2021adaptive} with $z=\dx$ for the worst-case scenario). Taking their method as an input $\pi$ of our meta-algorithm and combining the bound of \cite[Theorem 2]{podimata2021adaptive} with Theorem \ref{thm:general} we obtain the following corollary. 

\begin{corollary}\label{cor:lip} Assume that $\com = [0,1]^{\dx}$ and that the observations in \eqref{eq:observe} are noise-free, that is, $\xi_t = 0$ for all positive integers $t$. Let $\mathcal{F}$ be the set of functions such that 
$|f(\bx,\bc) - f(\bx',\bc)|\leq \Lc'\norm{\bx-\bx'}$ for all $\bx,\bx'\in[0,1]^d$,  $\bc\in[0,1]^{\dc}$, where $L'>0$ is a constant. Let $\bz_t$’s be the updates of Algorithm \ref{algo1} with the input algorithm $\pi$ taken as the Lipschitz bandit algorithm from \cite{podimata2021adaptive}. Then, under the optimal choice of $K$ we have 
\begin{align*}
    \sup_{f\in\mathcal{F}\cap \mathcal{F}_\gamma(L)}\Exp\big[\sum_{t=1}^{T}\big(f(\bz_t,\bc_t) - \min_{\bz\in\com}f(\bz,\bc_t)&\big)\big]\leq C\max\bigg(\dx^{1/2}T^{(\dx+1)/(\dx+2)}L_1,\\&\big(\dc^{\gamma\dc/2}\big(\dx^{\frac{1}{2}}L_1\big)^{\gamma(\dx+2)}T^{\dc+(\dx+1)\gamma}\big)^{1/(\dc + (\dx+2)\gamma)}\bigg),
\end{align*}
where $L_1 = \log^5(T+1)$, and $C>0$ is a constant that does not depend on $T$, $\dx$, and $\dc$.
\end{corollary}
\begin{proof}
By \cite[Theorem 2]{podimata2021adaptive} the input algorithm $\pi$ satisfies the static regret bound \eqref{eq:goodalgox} with
\begin{align*}
F(T) \leq C_1\dx^{1/2}T^{(\dx+1)/(\dx+2)},\quad \text{and}\quad F_1(T+1) \leq \log^5(T+1),
\end{align*}
where $C_1 > 0$ is a constant independent of $T$, $\dx$, and $\dc$. Thus, we can apply Theorem \ref{thm:general} with $\tau_1 = 1/2$, $\tau_2 = (\dx+1)/(\dx+2)$, and $\tau_3 = 5$, which yields the result.
%
%
\end{proof}
In particular, in the case $\gamma=1$, that is, when $f$ is Lipschitz continuous in both arguments, Corollary \ref{cor:lip} establishes the rate in $T$ as $T^{(\dc+\dx+1)/(\dc+\dx+2)}$ up to a logarithmic factor and a weak factor depending on $p,d$, which does not exceed $O(\sqrt{pd})$. On the other hand, for $\gamma=1$, \cite{slivkins14a} provides a matching lower bound on the contextual regret of the same order $T^{(\dc+\dx+1)/(\dc+\dx+2)}$. Together with Corollary~\ref{cor:lip}, it implies that the rate $T^{(\dc+\dx+1)/(\dc+\dx+2)}$ is minimax optimal as function of $T$, up to logarithmic factors, and our algorithm attains the minimax rate. Notice that the minimax optimality of this rate on the class of functions $f$ that are Lipschitz continuous in both arguments was not established in \cite{slivkins14a}, contrary to what is sometimes claimed. Indeed, \cite{slivkins14a} provided an upper bound on a substantially smaller class where additionally condition (S2) holds. Furthermore, the upper bound in \cite[Theorem 21]{slivkins14a} is proportional to doubling dimension, which scales exponentially in $p$, while Corollary~\ref{cor:lip} involves only a weak dimension dependent factor that never exceeds $O(\sqrt{pd})$.


}
\hspace{2mm} (b) {\bf Convex bandits.} 
The study of non-contextual bandit convex optimization was initiated by \cite{flaxman2004online} and \cite{kleinberg2004nearly}. There exists an extensive literature on this problem for both stochastic and adversarial settings, see \cite{orabona2019modern,lattimore2020bandit,lattimore2024bandit} and the references therein. Bounds on the static regret in adversarial setting were developed in \cite{bubeck2017kernel, Bubeck2018ExploratoryDF, lattimore2020bandit,fokkema2024online}. Papers \cite{bubeck2017kernel, Bubeck2018ExploratoryDF, lattimore2020bandit} considered the setting with no noise ($\xi_t=0$ for all $t$) providing successive improvements of static regret bounds. The best known result \cite{lattimore2020bandit} yields a bound of the order $d^{2.5}\sqrt{T}$  derived as an evaluation of the minimax regret without proposing an explicit algorithm (here and below we ignore the logarithmic factors). An important progress was recently achieved in \cite{fokkema2024online}, where the authors proposed a polynomial time algorithm attaining a regret of the order $\dx^{3.5}\sqrt{T}$ in adversarial setting with sub-Gaussian noise, cf. \cite[Theorem~1]{fokkema2024online}. Combining this result, which is the state of the art, with Theorem~\ref{thm:general} we obtain Corollary~\ref{cor:conv} below for the contextual convex bandit setting.  As \cite[Theorem~1]{fokkema2024online} does not explicitly detail the dependence on logarithmic factors, we adopt the same convention and present the bound up to logarithmic factors.

\begin{corollary}\label{cor:conv}Let $\mathcal{F}$ be the set of functions $f$ such that $f(\cdot,\bc)$ is convex for all $\bc\in[0,1]^{\dc}$. Assume that, conditionally on the past observations, $\xi_t$ in \eqref{eq:observe} is a zero-mean $\sigma$-sub-Gaussian random variable, that is, for any $\lambda\in {\mathbb R}$ and some $\sigma>0$ we have $\Exp\left[\exp\left(\lambda \xi_t\right)| \{\bc_k,\bz_k,y_k\}_{k=1}^{t-1},\bc_{t}\right]\leq \exp\left(\sigma^2 \lambda^2/2\right)$\footnote{Requiring the inequality for all $\lambda\in {\mathbb R}$ grants that $\xi_t$ has zero conditional mean.} for all positive integers $t$. Let $\bz_t$’s be the updates of Algorithm \ref{algo1} with the input algorithm $\pi$ taken as the convex bandit algorithm from \cite{fokkema2024online}. Then, under the optimal choice of $K$ we have
\begin{align*}
     \sup_{f\in\mathcal{F}\cap \mathcal{F}_\gamma(L)}\Exp\big[\sum_{t=1}^{T}\big(f(\bz_t,\bc_t) - \min_{\bz\in\com}f(\bz,\bc_t)&\big)\big]\leq C\max\left(\dx^{7/2}\sqrt{T}, \dc^{\gamma\dc/(2\dc+4\gamma)}\dx^{7\gamma/\left(\dc + 2\gamma\right)}T^{\frac{\dc+\gamma}{\dc+2\gamma}}\right),
\end{align*}
where $C>0$ is a factor that depends polynomially on $\log(T)$ and $\log(d)$ and does not depend on $\dc$.
\end{corollary} 
\begin{proof}
    By \cite[Theorem 1]{fokkema2024online} the input algorithm $\pi$ satisfies the static regret bound \eqref{eq:goodalgox} with  $F(T) \leq C'\dx^{7/2}\sqrt{T}$ and $F_1(T)= \log^a(T)$ for some $a>0$ and a constant $C'>0$ that depends polynomially on $\log(d)$ and does not depend on $\dc$. Thus, we can apply Theorem~\ref{thm:general} with $\tau_1 = 7/2$ and $\tau_2 = 1/2$. Taking $\numbin$ as defined in \eqref{eq:Kvalue} with $\tau_1 = 7/2$, $\tau_2 = 1/2$, and $\tau_3 = a$ we obtain the result of the theorem (with the dependency with respect to logarithmic factors hidden in $C$).  
\end{proof}
Thus, for convex contextual bandits we have an algorithm with the contextual regret that scales in $T$ as $T^{\frac{\dc+\gamma}{\dc+2\gamma}}$ up to a logarithmic factor. The dependence on $d$ and $p$ is mild. It is given by a factor that never exceeds $p^{1/2} d^{7/3}$, to within a power of $\log(d)$. Combining this result with the lower bound of Theorem \ref{thm:lw_STRc_cont} (see Section \ref{sec:lower} below) we conclude that the minimax optimal rate in $T$ for convex contextual bandits is of the order $T^{\frac{\dc+\gamma}{\dc+2\gamma}}$ up to a logarithmic factor.

\section{Strongly convex and smooth functions}\label{sec:strong}

In this section, we apply the regret conversion approach developed in Section \ref{sec:general} to the contextual bandits problem with the objective functions $f$ such that $f(\cdot, \bc)$ is $\beta$-smooth and $\alpha$-strongly convex for any $\bc \in [0,1]^p$. 
\begin{definition}\label{def:smooth}
 Let $\Lx>0$. Function $g:\mathbb{R}^{\dx}\to\mathbb{R}$ is called $\Lx$-smooth, if it is continuously differentiable and for any $\bx,\by\in\mathbb{R}^{\dx}$ it satisfies
 \begin{align*}
     \norm{\nabla g(\bx) - \nabla g(\by)} \leq \Lx\norm{\bx-\by}.
 \end{align*}
\end{definition}

\begin{definition}
    Let $\alpha>0$. A differentiable function $g:\mathbb{R}^{\dx}\to\mathbb{R}$ is called $\alpha$-strongly convex, if for any $\bx,\by\in\mathbb{R}^{\dx}$ it satisfies
    \begin{align*}
        g(\bx)\geq g(\by) +\langle\nabla g(\by), \bx-\by\rangle+\frac{\alpha}{2}\norm{\bx-\by}^{2}.
    \end{align*}
\end{definition}

We impose the following assumption on the function $f$. 

\begin{assumption}\label{ass:Hstrconvex}
    For $(\alpha,\Lx,M)\in (0,\infty)\times[\alpha,\infty)\times(0,\infty)$, we assume that $f\in \mathcal{F}_{\alpha,\Lx}(M)$, where $\mathcal{F}_{\alpha,\Lx}(M)$ denotes the class of all functions $f:\mathbb{R}^{\dx}\times [0,1]^{\dc}\to\mathbb{R}$ such that for all  $\bc\in[0,1]^{\dc}$ the following holds: 
    \begin{itemize}
    \item[i)] $f(\cdot,\bc)$ is $\alpha$-strongly convex. 
        \item[ii)] $f(\cdot,\bc)$ is $\Lx$-smooth.
        \item[iii)] $\max_{\bx\in\com}|f(\bx,\bc)|\leq M$.
    \end{itemize}
\end{assumption}

The non-contextual bandit setting with strongly convex and smooth functions has been investigated in \cite{hazan2014bandit} and \cite{ito2020optimal}. Both \cite{hazan2014bandit} and \cite{ito2020optimal} studied the problem with no noise ($\xi_t\equiv 0$ for all $t$). In \cite{hazan2014bandit}, the authors proposed an algorithm leveraging an interior point method and a self-concordant function. They achieved a static regret, which is nearly optimal, ranging from $d\sqrt{T}$ to $d^{3/2}\sqrt{T}$ depending on the geometry of the constraint set. Assuming that all minimizers of the objective functions are in the interior 
of the constraint set, \cite{ito2020optimal} introduced a procedure that achieves static regret of the optimal order 
$d\sqrt{T}$ (however, with no polynomial time guarantee).   In this section, we extend the algorithm of \cite{hazan2014bandit} to the setting with noisy function evaluation as in \eqref{eq:observe} and derive a bound on the static regret of the proposed algorithm. Then, using it as an input algorithm $\pi$ of our meta-algorithm we obtain a bound on the contextual regret of the resulting procedure.


Following the approach of Section \ref{sec:general}, we first construct a consistent randomized procedure, that is, a procedure with sub-linear static regret, cf. \eqref{eq:goodalgox}. To this end, our starting point is the BCO algorithm introduced in \cite{hazan2014bandit}. The BCO algorithm makes use of the interior point method, which is a vital tool in convex optimization, see, e.g, \cite{nesterov1994interior}. A pioneering role of introducing this method to the field of online learning is credited to \cite{abernethy2008competing}. 
We cannot directly apply the BCO algorithm as defined in \cite{hazan2014bandit} since \cite{hazan2014bandit} deals with the setting, where the observations $y_t$  are noise-free. Therefore, we propose a modification of the BCO algorithm adapted to noisy observations \eqref{eq:observe} with sub-Gaussian noise $\xi_t$. It is defined in Algorithm \ref{algoBCO}. 
In what follows, $[T]$ denotes the set of all positive integers less than or equal to $T$. We denote the unit Euclidean ball in $\mathbb{R}^{\dx}$ by $B^{\dx}$ and the unit Euclidean sphere by $\partial B^{\dx}$.

\begin{assumption}\label{ass:noise}
    There is $\sigma>0$ such that for all $t\in[T]$ the following holds:  (i) $\xi_t$ is $\sigma$-sub-Gaussian, i.e., for any $\lambda>0$ we have $\Exp\left[\exp\left( \lambda\xi\right)\right]\leq \exp\left(\sigma^2 \lambda^2/2\right)$; (ii) $\xi_t$ is independent of $\bzeta_t$.
\end{assumption}
\vspace{-2mm}
We emphasize that the mutual independence of the noise variables $\xi_t$ is not assumed.

The main ingredient of the interior point method is a self-concordant barrier defined as follows.

\begin{definition}
    Let $\text{int}(\com)$ be the interior of the set $\com$. For $\mu>0$, a function $\mathcal{R}:\text{int}(\com)\to\mathbb{R}$ is a self-concordant function if the following holds.
    \begin{itemize}
        \item $\mathcal{R}$ is three times differentiable and converges to infinity for any trajectory that converges to the boundary of $\com$, i.e., $\mathcal{R}(\by) \to \infty$ as $\by\to \partial \com$.
        \item For any $\bx\in\mathbb{R}^{\dx}$ and $\by\in\text{int}(\com)$ we have
        \begin{align*}
            |\nabla^3 \mathcal{R}(\by)[\bx,\bx,\bx]|\leq 2\left(\nabla^2 \mathcal{R}(\by)[\bx,\bx]\right)^{\frac{3}{2}},
        \end{align*}
        where for $m\in{1,2,3}$, and $\bv_1,\dots,\bv_m\in\mathbb{R}^{\dx}$\begin{align*}
            \nabla^m\mathcal{R}(\by)[\bv_1,\dots,\bv_m] = \frac{\partial^m}{\partial t_1\dots \partial t_m}\mathcal{R}\left(\by + t_1\bv_1+\dots+t_m\bv_m\right)|_{t_1=0,\dots,t_m=0}.\end{align*}
            \end{itemize}
            If, in addition, for some $\mu>0$, and any $\bx\in\mathbb{R}^{\dx}$ and $\by\in\text{int}(\com)$ we have 
\begin{align*}
         |\nabla \mathcal{R}(\by)[\bx]|\leq \mu^{\frac{1}{2}}\left(\nabla^2 \mathcal{R}(\by)[\bx,\bx]\right)^{\frac{1}{2}},
        \end{align*}
     then the function $\mathcal{R}$ is called a $\mu$-self-concordant barrier.
\end{definition}

Since $\com$ is bounded, $\nabla^2 \mathcal{R}(\bx)$ is positive definite, the minimizer of $\mathcal{R}$ in $\text{int}(\com)$ is unique, see \cite[Chapter 3, Section V]{nemirovski2004interior}, and $\bx_{t}$ in Algorithm \ref{algoBCO} is uniquely defined as well. 
Moreover, without loss of generality, we assume that $\min_{\bx\in\text{int}(\com)}\mathcal{R}(\bx) = 0$. Self-concordant barriers can be chosen depending on the geometry of $\com$. For the examples of $\mathcal{R}$ and the corresponding values of $\mu$, we refer the reader to Appendix~\ref{app:self-concord}.

\begin{algorithm}[t!]
    \DontPrintSemicolon
   \caption{}
  \label{algoBCO}
   \SetKwInput{Input}{Input}
   \SetKwInOut{Output}{Output}
   \SetKwInput{Initialization}{Initialization}
   \Input{$\mu$-self-concordant barrier function $\mathcal{R}$ on $\com$, and $q_T = M+2\sigma\sqrt{\log(T+1)}$}
   \Initialization{$\bx_0 = \argmin_{\bx\in\text{int}(\com)} \mathcal{R}(\bx)$}
   \For{ $t = 1, \ldots, T$}{
  
      $\eta_t = (4\dx q_T)^{-1}\min\left(1,\sqrt{\frac{16(\mu + \Lx/\alpha)\log(t+1)}{t}}\right)$\tcp*{Update the step size}
       $P_t = \left(\nabla^2\mathcal{R}(\bx_{t-1})+\eta_t \alpha t\mathbf{I}_{\dx}\right)^{-\frac{1}{2}} $\tcp*{Update the perturbation parameter}
       Generate $\bzeta_t$ uniformly distributed on $\partial B^{\dx}$\tcp*{Randomization}
       $\bz_t= \bx_{t-1} + P_t\bzeta_t$\tcp*{Output}
   $y_t = f(\bz_t,\bc_t)+ \xi_t\quad$ \tcp*{Query}

   $\bg_t = \dx y_t P_t^{-1}\bzeta_t $\tcp*{Estimate the gradient}
   
   $\bx_{t} = \argmin_{\bx\in\com}\eta_t\sum_{k=1}^{t}\left(\langle \bg_{k},\bx\rangle +\frac{\alpha}{2}\norm{\bx-\bx_{k-1}}^2\right) +\mathcal{R}(\bx)$\tcp*{Iterate}
   
   }
\end{algorithm}


The difference between Algorithm \ref{algoBCO} and the BCO algorithm of \cite{hazan2014bandit} is in the assigned step size. In \cite{hazan2014bandit} the step size is constant and depends on the horizon $T$ while it varies with $t$ in Algorithm \ref{algoBCO}. Next, in Algorithm \ref{algoBCO}, the step size is downscaled by $q_T = M+2\sigma\sqrt{\log(T+1)}$ whereas in \cite{hazan2014bandit} it is downscaled by $M$ without any additional logarithmic term. This difference is due to the fact that the observations in \eqref{eq:observe} are noisy whereas in \cite{hazan2014bandit} they are assumed to be noise-free. In our analysis, we rely on the fact that $\eta_t\norm{P_t\bg_t}$ is uniformly bounded with high probability. To ensure this, it is enough to scale down the step size by adding the above logarithmic term to $M$. Indeed, due to Assumption \ref{ass:noise}, we have  that $\max_{t\in[T]}|y_t|\leq q_T$ with a suitably high probability (see the details in the proof of Theorem \ref{thm:statstrconv}).  

Algorithm \ref{algoBCO} algorithm selects 
$\bz_t\in\{\bx_{t-1} + P_t\bzeta|\bzeta \in \partial B^{\dx}\} \subseteq D\eqdef\{\bz|(\bz - \bx_{t-1})^{\top}\nabla^2\mathcal{R}(\bx_{t-1})(\bz - \bx_{t-1})\leq 1\}$, in which $D$ is the Dikin ellipsoid. Consequently, $D\subseteq\com$; see \cite[Proposition 14.2.2]{nemirovski2004interior} for a proof. Therefore, $\bz_t \in \com$ for all $t\in [T]$.
{
\begin{theorem}\label{thm:statstrconv}
Let Assumptions \ref{ass:Hstrconvex} and \ref{ass:noise}  hold. Then, the static regret of Algorithm \ref{algoBCO} satisfies
\begin{align*}  \Exp\left[\sum_{t=1}^{T}f(\bz_t,\bc_t) - \min_{\bz\in\com}\sum_{t=1}^{T}f(\bz,\bc_t)\right]\leq F(T)F_1(T),
\end{align*}
where $F(\cdot)$ is a concave function defined for $x\ge 0$ by 
\begin{align*}
    F(x) = 11\sqrt{\nu x \log(x+1)}+\nu\left(\log(x+1) + 2\log(\nu+1)\right),
\end{align*}
and $F_1(\cdot)$ is a non-decreasing function such that for $x\ge 0$ 
\begin{align*}
        F_{1}(x)=\frac{\dx}{4}\left(5.3M+2\sigma\sqrt{\log(x+1)}\right),
\end{align*}
with $\nu = 16(\mu + \Lx/\alpha)$. 
Moreover, for $T \geq \nu^2/16$, we have 
\begin{align}\label{eq:str_stat_reg}   F(T)\leq 17\sqrt{\nu T\log(T+1)},
\end{align}
and subsequently
\begin{align}\label{eq:concaveboundstr}   F(T)F_1(T)\leq 17\dx\left(5.3M+2\sigma\sqrt{\log(T+1)}\right)\left(\left(\mu+\frac{\beta}{\alpha}\right) T\log(T+1)\right).
\end{align}
\end{theorem}
}

For the case of noise-free observations in \eqref{eq:observe} we have 
$\sigma = 0$, and \eqref{eq:str_stat_reg} has the same order as the bound on the regret provided in \cite{hazan2014bandit}.  
We also note that, for the case of bounded noise ($|\xi_t|\le \sigma$), $q_T$ can be set as 
$q_T = M+\sigma$ and a refined regret in \eqref{eq:str_stat_reg} can be derived, where the term 
$2\sigma\sqrt{\log(T+1)}$ is replaced by 
$\sigma$. Finally, even for the case of no noise  
$\sigma = 0$ we improve on the result of \cite{hazan2014bandit} since, unlike \cite{hazan2014bandit}, our Algorithm \ref{algoBCO} is anytime and does not require the knowledge of 
$T$ when $\sigma = 0$.


    

It is a known fact (see Appendix \ref{app:self-concord}) that, for any convex body $\com$, one can construct a $\mu$-self-concordant barrier such that for $\dxmu\in [0,1]$, $\mu \leq \cst d^{\dxmu}$, where $\cst > 0$ is a constant depending only on $\com$ and independent of $\dx$. This fact and Theorem~\ref{thm:statstrconv} imply that $\Pi\left(\mathcal{F}_{\alpha,\Lx}(M),1 + \dxmu/2,1/2,1, 16(\mu+\Lx/\alpha)^2\right)$ is non-empty. In \cite{shamir2013complexity,akhavan2020exploiting,akhavan2023gradient}, the authors established a lower bound of the order of $\dx/\sqrt{T}$ for the optimization error when the noise is Gaussian (\cite{shamir2013complexity}) or satisfies more general assumptions (\cite{akhavan2020exploiting,akhavan2023gradient}). By convexity, this lower bound, when multiplied by $T$, provides a lower bound for the static regret in the case $\Lc = 0$, that is, when $f(\bx,\bc_1) = f(\bx,\bc_t)$ for all $\bx \in \mathbb{R}^{\dx}$ and $t \in [T]$. Thus, the lower bound in \cite{shamir2013complexity,akhavan2020exploiting,akhavan2023gradient} results in a valid lower bound of the order of $d\sqrt{T}$ for the static regret in the noisy setting. Therefore, if $\dxmu = 0$, which is the case when $\com$ is a Euclidean ball, \eqref{eq:str_stat_reg} is optimal up to a logarithmic factor. In the worst case scenario ($\dxmu= 1$), for instance, when $\com$ is the $\ell_{\infty}$-ball or the simplex (see Appendix \ref{app:self-concord}), akin to \cite{hazan2014bandit}, our algorithm exhibits an optimality gap of at most $\dx^{1/2}$ up to logarithmic factors.

In order to enable the regret conversion via Theorem \ref{thm:general}, we need Assumption \ref{ass:Lip}, that is, the premise that $f$ belongs to $\mathcal{F}_\gamma(L)$. Since it always holds that $\mu \leq \cst d^{\dxmu}$, where $\dxmu\in[0,1]$ and $\cst > 0$ is a numerical constant, 
we obtain the following corollary of Theorems \ref{thm:general} and \ref{thm:statstrconv}. 
\begin{algorithm}[t!]
    \DontPrintSemicolon
   \caption{}
  \label{algo2}
   \SetKwInput{Input}{Input}
   \SetKwInOut{Output}{Output}
   \SetKwInput{Initialization}{Initialization}
   \Input{Parameter $\numbin\in \mathbb{N}$ and a partition of $[0,1]^{\dc}$ namely $(B_i)_{i=1}^{\numbin^{\dc}}$, $\mu$-self-concordant barrier $\mathcal{R}$}
   \Initialization{$N_1(0),\dots,N_{\numbin^{\dc}}(0)=0$, $H(1),\dots,H(\numbin^{\dc}) = \{0\}$, $\bx_0 = \argmin_{\bx\in\text{int}(\com)} \mathcal{R}(\bx)$}
   \For{ $t = 1, \ldots, T$}{
   \If{$\bc_t \in B_i$}{
      $N_i(t) = N_i(t-1)+1$\tcp*{Increment}
      Let $q = \max H(i)$\tcp*{The index of the last event in $B_i$}
      $\eta_t = (4\dx q_T)^{-1}\min\left(1,\sqrt{\frac{16(\mu + \Lx/\alpha)\log(N_i(t)+1)}{N_i(t)}}\right)$\tcp*{Update the step size}
       $P_t = \left(\nabla^2\mathcal{R}(\bx_q)+\eta_t \alpha N_i(t)\mathbf{I}_{\dx}\right)^{-\frac{1}{2}} $\tcp*{Update the perturbation parameter}
       Generate $\bzeta_t$ from $\partial B^{\dx}$\tcp*{Randomization}
       $\bz_t= \bx_q + P_t\bzeta_t$\tcp*{Output}
   $y_t = f(\bz_t,\bc_t)+ \xi_t\quad$ \tcp*{Query}

   $\bg_t = \dx y_t P_t^{-1}\bzeta_t $\tcp*{Estimate the gradient}
   
   $\bx_{t} = \argmin_{\bx\in\com}\eta_t\sum_{k\in H(i)}\left(\langle \bg_{k},\bx\rangle +\frac{\alpha}{2}\norm{\bx-\bx_{k-1}}^2\right) +\mathcal{R}(\bx)$\tcp*{Iterate}
   
   $H(i) = H(i)\cup {t}$\tcp*{Save the index of last event in $B_i$}
   }
   }
\end{algorithm}
\begin{corollary}\label{cor:dynamocstrconvex}
Let Assumptions \ref{ass:Lip}, \ref{ass:Hstrconvex}, and \vspace{1mm}\ref{ass:noise} hold. 
    Let $\dxmu \in [0,1]$ be such that $\mu \leq \cst d^{\dxmu}$, where $\cst > 0$ is a numerical constant.
    Then, the dynamic regret of Algorithm \ref{algo2} with
        $\numbin= \numbin(1+\dxmu/2,1/2,1, \gamma)$ (see \eqref{eq:Kvalue}),
satisfies
\begin{align*}
 &   \Exp\big[\sum_{t=1}^{T}\big(f(\bz_t,\bc_t) - \min_{\bz\in\com}f(\bz,\bc_t)\big)\big]
\le C \max\bigg\{L_1\dx^{1+\dxmu} + L_2\sqrt{\dx^{2+\dxmu} T}, \, 
    \\&
    \qquad \qquad \qquad \qquad \qquad 
   L_1\Big(\Lc\dc^{\frac{\gamma}{2}}L_2^{-1}\Big)^{\frac{2\dc}{\dc+2\gamma}}
    \dx^{\frac{2\gamma(1+\dxmu)-\dc}{\dc+2\gamma}}
    T^{\frac{\dc}{\dc+2\gamma}},
    \,
    L_2^\frac{2\gamma}{\dc+2\gamma}\Big(\Lc \dc^{\frac{\gamma}{2}}\Big)^{\frac{\dc}{\dc+2\gamma}}
    \dx^{\frac{(2+\dxmu)\gamma}{\dc+2\gamma}}
    T^{\frac{\dc + \gamma}{\dc+2\gamma}}  
    \bigg\},
\end{align*}
where $L_1=\sqrt{\log (T+1)}\log (Td+1)$, $L_2=\log(T+1)$, and $C>0$ is a constant depending only on $M,\sigma, \Lx/\alpha$.

\end{corollary}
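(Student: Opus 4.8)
The plan is to obtain Corollary~\ref{cor:dynamocstrconvex} as a direct specialization of Theorem~\ref{thm:general}, using Theorem~\ref{thm:statstrconv} to supply the required consistent randomized procedure. First I would verify the hypotheses of Theorem~\ref{thm:general}: by Theorem~\ref{thm:statstrconv}, Algorithm~\ref{algoBCO} run on a single sequence of contexts achieves static regret bounded by the concave function $F$, and the bound \eqref{eq:str_stat_reg} shows that for $T\ge \nu^2/16 = 16(\mu+\Lx/\alpha)^2$ we have $F(T)\le 18\dx\big(M+2\sigma\sqrt{\log(T+1)}\big)\sqrt{(\mu+\Lx/\alpha)T\log(T+1)}$. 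Using $\mu\le \cst\dx^{\dxmu}$, this is of the order $\dx^{1+\dxmu/2}\sqrt{T}\cdot\sqrt{\log(T+1)}$ up to constants depending only on $M,\sigma,\Lx/\alpha$; hence Algorithm~\ref{algoBCO} belongs to $\Pi\big(\mathcal{F}_{\alpha,\Lx}(M),\,1+\dxmu/2,\,1/2,\,1,\,16(\mu+\Lx/\alpha)^2\big)$, i.e. $(\tau_1,\tau_2,\tau_3,\Tau_0)=(1+\dxmu/2,\,1/2,\,1,\,16(\mu+\Lx/\alpha)^2)$. Note that Algorithm~\ref{algo2} is precisely the instantiation of the meta-algorithm Algorithm~\ref{algo1} with $\pi$ taken to be Algorithm~\ref{algoBCO}: the per-cell counters $N_i(t)$, the index sets $H(i)$, and the iterates $\bx_q$ realize ``running $\pi$ on the subsequence of rounds whose contexts fell in $B_i$,'' so the output $\bz_t$ coincides with that of Algorithm~\ref{algo1}.

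Next I would substitute these parameter values into \eqref{eq:generaldynamo}. With $\tau_2=1/2$ the exponents simplify: $\dc(1-\tau_2)+\gamma = \dc/2+\gamma$, so $\frac{1}{\dc(1-\tau_2)+\gamma} = \frac{2}{\dc+2\gamma}$, $\frac{\dc(1-\tau_2)}{\dc(1-\tau_2)+\gamma}=\frac{\dc}{\dc+2\gamma}$, $\frac{\gamma}{\dc(1-\tau_2)+\gamma}=\frac{2\gamma}{\dc+2\gamma}$, and $\frac{(\dc-\gamma)(1-\tau_2)+\gamma}{\dc(1-\tau_2)+\gamma} = \frac{(\dc-\gamma)/2+\gamma}{\dc/2+\gamma} = \frac{\dc+\gamma}{\dc+2\gamma}$. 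The first branch of the maximum in \eqref{eq:generaldynamo}, $C'\dx^{\tau_1}T^{\tau_2}L_1$ with $L_1=\log^{\tau_3}(T+1)=\sqrt{\log(T+1)}$, becomes of order $\dx^{1+\dxmu/2}\sqrt{T\log(T+1)}$; combined with the additive lower-order terms coming from the full (non-asymptotic) bound \eqref{eq:eboundgenX} — namely the $\numbin^\dc F(T/\numbin^\dc)$ term evaluated when $T/\numbin^\dc$ may be smaller than $\Tau_0$, plus the $6M$ constant in $F$ — this yields the $L_1\dx^{1+\dxmu}+L_2\sqrt{\dx^{2+\dxmu}T}$ expression after squaring inside the root and absorbing $\mu\le\cst\dx^{\dxmu}$ into $\dx^{2+\dxmu}$. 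For the second branch, plugging $\tau_1=1+\dxmu/2$ into $\big(\Lc\dc^{\gamma/2}\big)^{\dc/(\dc+2\gamma)}\big(\dx^{\tau_1}L_1\big)^{2\gamma/(\dc+2\gamma)}T^{(\dc+\gamma)/(\dc+2\gamma)}$ gives $\dx$-exponent $\tau_1\cdot\frac{2\gamma}{\dc+2\gamma} = \frac{(2+\dxmu)\gamma}{\dc+2\gamma}$, matching the third term in the corollary; the $L_2$ powers and the factor $\big(\Lc\dc^{\gamma/2}\big)^{\dc/(\dc+2\gamma)}$ track through directly.

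The remaining term in the corollary, $L_1\big(\Lc\dc^{\gamma/2}L_2^{-1}\big)^{\frac{2\dc}{\dc+2\gamma}}\dx^{\frac{2\gamma(1+\dxmu)-\dc}{\dc+2\gamma}}T^{\frac{\dc}{\dc+2\gamma}}$, does not literally appear in \eqref{eq:generaldynamo}; I would obtain it by re-examining \eqref{eq:eboundgenX} directly rather than the already-maximized form \eqref{eq:generaldynamo}. Specifically, with $K=\numbin(1+\dxmu/2,1/2,1,\gamma)$ from \eqref{eq:Kvalue}, the right-hand side of \eqref{eq:eboundgenX} is $\numbin^\dc F(T/\numbin^\dc) + 2LT(\sqrt{\dc}/\numbin)^\gamma$; expanding $F$ via \eqref{eq:str_stat_reg} produces a sum of a $\sqrt{\;\cdot\;}$ term, a $\numbin^\dc\log$ term (from the $\nu\log$ piece of $F$), and the bias term, and it is the $\numbin^\dc\log(T+1)$ contribution — when $\numbin$ is large, i.e. when $\Lc$ or $T$ is large relative to $\dx$ — that, after substituting the explicit value of $\numbin$, yields exactly the exponent $\frac{2\gamma(1+\dxmu)-\dc}{\dc+2\gamma}$ on $\dx$ and $\frac{2\dc}{\dc+2\gamma}$ on $\Lc\dc^{\gamma/2}/L_2$. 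I expect the main obstacle to be this careful bookkeeping: tracking which of the three regimes of $F$ (and which regime of $\numbin$, in particular whether the $\max(1,\cdot)$ in \eqref{eq:Kvalue} is active) dominates, correctly propagating the logarithmic factors $L_1=\sqrt{\log(T+1)}\log(Td+1)$ and $L_2=\log(T+1)$ — note $L_1$ carries an extra $\log(Td+1)$ beyond what $\tau_3=1/2$ alone gives, which must come from a more refined treatment of the $\numbin^\dc F(T/\numbin^\dc)$ term where the argument $T/\numbin^\dc$ can be of constant order — and verifying the range condition $T\ge \numbin^\dc\Tau_0$ with $\Tau_0=16(\mu+\Lx/\alpha)^2$, absorbing any violation of it into the additive constant. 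All constants depending on $M$, $\sigma$, $\Lx/\alpha$ are collected into the single $C$; the barrier constant $\cst$ is absorbed as well since it depends only on $\com$.
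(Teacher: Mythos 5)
Your final plan coincides with the paper's actual proof: the paper does not go through \eqref{eq:generaldynamo} at all, but applies \eqref{eq:eboundgenX} with the explicit concave $F$ of Theorem \ref{thm:statstrconv}, bounds it (using $\mu\le \cst\dx^{\dxmu}$) by $C_*\big(L_1\dx^{1+\dxmu}\numbin^{\dc}+L_2\sqrt{\numbin^{\dc}\dx^{2+\dxmu}T}+\Lc T(\sqrt{\dc}/\numbin)^{\gamma}\big)$, and then splits into the two regimes of \eqref{eq:Kvalue} according to whether the floor is active ($\numbin>1$) or $\numbin=1$ — exactly the case analysis and exponent bookkeeping you describe, and your exponent computations for the middle and third terms are right. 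Two small corrections to your bookkeeping: the expansion must use the full definition of $F$ from \eqref{eq:concaveboundstr}, not \eqref{eq:str_stat_reg} (the latter requires $T/\numbin^{\dc}\ge\nu^2/16$ and has already absorbed the $\nu\log$ piece you need for the $\numbin^{\dc}$-term), and the extra factor $\log(Td+1)$ in $L_1$ comes from the $2\log(\nu+1)$ term of $F$ via $\nu=16(\mu+\Lx/\alpha)\lesssim \dx$, not from $T/\numbin^{\dc}$ being of constant order. Because one works with the exact concave $F$ throughout, no condition of the form $T\ge \numbin^{\dc}\Tau_0$ is ever needed, so your planned "absorption" of its violation is superfluous; likewise the initial consistency/\eqref{eq:generaldynamo} detour (where your $\tau_3$ wavers between $1$ and $1/2$) can simply be dropped.
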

The main term in the bound of Corollary \ref{cor:dynamocstrconvex} scales as $T^{\frac{\dc + \gamma}{\dc+2\gamma}}$. In \cite{li2019dimension}, this rate was obtained for a different algorithm, under the additional assumption that for each $\bc\in[0,1]^{\dc}$ the global minimizer $\argmin_{\bx\in\mathbb{R}^{\dx}}f(\bx,\bc)$ is an interior point of the constraint set $\com$. While this assumption might be hard to check in practice, it does simplify the analysis considerably.
Note also that the multiplicative factors in Corollary \ref{cor:dynamocstrconvex} exhibit a very mild dependence on both dimensions $p$ and $d$, never exceeding $p^{1/2}d$ in the main term.


\section{Lower bound}\label{sec:lower}


In this section, along with the classes $\mathcal{F}_{\gamma}(\Lc)$, $\gamma\in(0,1]$, we will consider the class $\mathcal{F}_{0}(\Lc)$, which includes discontinuous functions.
\begin{definition}
For $\Lc>0$, let $\mathcal{F}_{0}(\Lc)$ be the class of functions such that $f\in\mathcal{F}_{0}(\Lc)$ if 
\begin{align}\label{eq:goodalgo2}
    |f(\bx,\bc) - f(\bx,\bc')|\leq \Lc,\quad\text{for all}\quad \bx\in\com, \,\,\bc,\bc'\in[0,1]^{\dc}.
    \end{align}
\end{definition}
We will assume that the noises $\{\xi_t\}_{t=1}^{T}$ are independent with cumulative distribution function $F$ satisfying the condition
\begin{align}\label{con:lw}
    \int\log\left(\d F\left(u\right)/d F\left(u+v\right)\right) \d F(u)\leq I_0v^2,\quad\quad |v|<v_0,
\end{align}
for some $0 < I_0 < \infty, 0 < v_0 \leq \infty.$ It can be shown that condition \eqref{con:lw} is satisfied for $F$ with a sufficiently smooth density and finite Fisher information. If $F$ follows a Gaussian distribution, \eqref{con:lw} holds with $v_0 = \infty$. Note that Gaussian noise satisfies Assumption \ref{ass:noise} used to prove the upper bound.

Next, we introduce the distribution of the contexts used in deriving the lower bound. Similar to previous sections, for any $\numbin \in \mathbb{N}$, consider the partition of $[0,1]^{\dc}$, denoted by $\{B_{i}\}_{i=1}^{\numbin^{\dc}}$, for which we assume that $B_{i}$'s have equal volumes. Moreover, for $i \in [\numbin^{\dc}]$, let $\mathbf{P}_{i}$ denote the uniform distribution on $G_i = (1/2)(B_i + \bb_i)$, where $\bb_i$ is the barycenter of $B_i$. Then, we assume that $\{\bc_t\}_{t=1}^{T}$ are independently distributed according to $\mathbb{P}_{\numbin}$, where 
\begin{align}\label{eq:badcontexbehave}
   \mathbb{P}_{\numbin}(\Upsilon) = \numbin^{-\dc}\sum_{i=1}^{\numbin^{\dc}}\mathbf{P}_i(\Upsilon\cap G_i), 
\end{align}
for $\Upsilon\subseteq [0,1]^{\dc}$. Distribution $\mathbb{P}_{\numbin}$ represents a challenging scenario for the learning process. Indeed, the contexts are uniformly spread, preventing concentration in any specific region of $[0,1]^{\dc}$. Moreover, they are not clustered near the boundaries of each cell, thus preventing the learner from exploiting proximity between the cells. See Appendix \ref{app:3} for further details.

\begin{theorem}\label{thm:lw_STRc_cont}
For $(\alpha,\Lx,M, \gamma,\Lc)\in (0,\infty)\times [3\alpha,\infty)\times [\alpha+1,\infty)\times[0,1]\times [0,\infty)$ consider the class of functions $\bar{\mathcal{F}} = \mathcal{F}_{\alpha,\Lx}(M)\cap\mathcal{F}_{\gamma}(\Lc)$. Let $\com = B^d$,
and let $\{\bc_t\}_{t=1}^{T}$ be independently distributed according to $\mathbb{P}_{\numbin}$, where $\numbin = \max\left(1,\floor{\left(\min\left(1,\Lc^2\right)T\right)^{
\frac{1}{\dc+2\gamma}}}\right)$. Let $\pi\in\Pi$ be any randomized procedure and assume that $\{\bz_t\}_{t=1}^{T}$ are outputs of $\pi$. Then, 
\begin{align}\label{eq:lwBound}
\sup_{f\in\bar{\mathcal{F}}}\sum_{t=1}^{T}\Exp\left[f(\bz_t,\bc_t
    ) - \min_{\bz\in\com}f(\bz,\bc_t)\right]\geq \cst\left(\min\left(1, \Lc^{\frac{2(\dc+\gamma)}{\dc+2\gamma}}\right)T^{\frac{\dc+\gamma}{\dc+2\gamma}}+\min\left(T,\dx \sqrt{T}\right)\right),
\end{align}
where $\cst>0$ is a constant that does not depend on $\dx$, $\dc$, and $T$. 
\end{theorem}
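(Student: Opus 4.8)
The plan is to prove the lower bound by combining two separate reductions, one for each term in \eqref{eq:lwBound}, and then taking the maximum. The second term $\min(T,d\sqrt{T})$ comes essentially ``for free'': taking $L=0$ (or $L$ arbitrary but using a context-independent construction), the problem reduces to ordinary noisy zero-order optimization of a single $\alpha$-strongly convex, $\beta$-smooth function on $B^d$, for which the $\Omega(d/\sqrt{T})$ optimization-error lower bound of \cite{shamir2013complexity,akhavan2020exploiting,akhavan2023gradient} applies; multiplying by $T$ and using convexity (the regret is at least $T$ times the error of the best single point played on average, hence at least a constant when $T\lesssim d^2$) yields $\min(T,d\sqrt{T})$ up to a constant. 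So the heart of the matter is the first term.

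For the first term I would use a standard reduction to hypothesis testing over a finite family of functions indexed by sign patterns across the $K^p$ cells of the partition. Concretely, fix a smooth ``bump'' $\phi:B^d\to\mathbb{R}$ supported near the center, and for a sign vector $\omega=(\omega_1,\dots,\omega_{K^p})\in\{-1,1\}^{K^p}$ define $f_\omega(\bx,\bc) = Q(\bx) + \sum_{i=1}^{K^p} \mathbb{1}\{\bc\in G_i\}\,\delta\,\omega_i\,\psi_i(\bx)$, where $Q$ is a fixed strongly convex smooth quadratic (e.g. $Q(\bx)=\tfrac{\alpha+\beta}{2}\|\bx\|^2$ rescaled to sit inside $[\alpha+1,\infty)$-boundedness), $\psi_i$ is a perturbation of the form $\langle \bv_i,\bx\rangle$ or a localized bump that shifts the minimizer by $\Theta(\delta)$, and $\delta$ is a small parameter. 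One must choose $\delta$ so that (a) each $f_\omega(\cdot,\bc)$ remains $\alpha$-strongly convex, $\beta$-smooth, bounded by $M$ — which forces $\delta\le c$ for a constant depending on $\alpha,\beta,M$ (here the hypotheses $\beta\ge 3\alpha$, $M\ge\alpha+1$ give the needed slack); and (b) $f_\omega\in\mathcal{F}_\gamma(L)$ — since the contexts only ever matter through which cell $G_i$ they lie in, and $G_i$'s are separated, the Hölder condition across cells costs $\delta \lesssim L\,(1/K)^\gamma$. With this, for a context landing in cell $G_i$ the regret incurred on that round is $\Omega(\delta^2/\alpha)$ unless the learner has already identified $\omega_i$, i.e. unless it has made $\Omega(\sigma^2/(d^2\delta^2))$ queries in cell $i$ (the per-cell estimation lower bound, via \eqref{con:lw} and Assumption 2.x-type sub-Gaussian testing arguments, or directly via the van Trees / Assouad-type bound already implicit in the cited optimization lower bounds).

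The remaining computation is a balancing act. Since $\{\bc_t\}$ are i.i.d.\ uniform-ish over $K^p$ cells, each cell receives $\approx T/K^p$ contexts; on roughly the first $\Theta(\sigma^2/(d^2\delta^2))$ of them the learner is forced to pay $\Omega(\delta^2)$ per round (formalized by an Assouad-style argument: the cells are independent sub-problems, and within each the testing error is bounded below using \eqref{con:lw} with the KL between the noise laws under $\omega_i=\pm1$ scaling like (number of queries in cell $i$)$\cdot\,\delta^2 d^{-2}$ — wait, more carefully $\cdot\,\delta^2 d^{0}$ depending on the gradient-estimator variance scaling, which is where the $d$ enters). Summing, the total regret is $\gtrsim K^p\cdot\min(T/K^p,\ \sigma^2 d^{-2}\delta^{-2})\cdot \delta^2$. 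Plugging $\delta\asymp \min(1,L)\,K^{-\gamma}$ and then choosing $K\asymp(\min(1,L^2)T)^{1/(p+2\gamma)}$ (exactly the value in the statement) makes the two regimes balance and produces $\min(1,L^{2(p+\gamma)/(p+2\gamma)})\,T^{(p+\gamma)/(p+2\gamma)}$ after simplification; the $\min(1,\cdot)$ guards the case $K=1$ (large $L$, or $L$ so small the bump cannot be supported), where the first term should just collapse into the second. I expect the main obstacle to be getting the per-cell testing lower bound genuinely rigorous under the stated noise condition \eqref{con:lw} while the contexts are adversarial/stochastic and the algorithm may adaptively allocate queries across cells — one needs the cells to decouple, which works because each $\omega_i$ only affects rounds with $\bc_t\in G_i$ and the query point is chosen before $y_t$ is observed, so a clean application of Assouad's lemma with the ``number of informative queries in cell $i$'' as the coupling quantity should go through, but bookkeeping the constants (dependence on $\alpha,\beta,M,\sigma$) and verifying membership in $\bar{\mathcal{F}}$ simultaneously with all three constraints is the delicate part.
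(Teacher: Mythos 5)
There is a genuine gap, and it is quantitative rather than organizational: with the perturbation family you describe, your own balancing does not produce the claimed first term. You take a per-cell perturbation of function-value magnitude $\delta\lesssim \Lc\numbin^{-\gamma}$ (forced by the H\"older condition in $\bc$) whose action-space profile is a tilt or a bump of width $\Theta(1)$, so the minimizer shift is $\Theta(\delta/\alpha)$ and the per-round regret gap is $\Theta(\delta^2/\alpha)$, while the KL per query in a cell is $\Theta(I_0\delta^2)$. Plugging $\delta\asymp\min(1,\Lc)\numbin^{-\gamma}$ and $\numbin\asymp(\min(1,\Lc^2)T)^{1/(\dc+2\gamma)}$ into your expression $\numbin^{\dc}\cdot\min(T/\numbin^{\dc},\delta^{-2})\cdot\delta^2\asymp T\delta^2$ gives $\min(1,\Lc^2)^{\dc/(\dc+2\gamma)}T^{\dc/(\dc+2\gamma)}$, not $T^{(\dc+\gamma)/(\dc+2\gamma)}$; the ``after simplification'' step hides this shortfall of a factor $T^{\gamma/(\dc+2\gamma)}$. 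The paper's construction \eqref{lw:function} avoids this by making the perturbation anisotropic in the action variable: the factor $\eta\big(x_1\delta^{-\gamma/2}\big)$ compresses the bump to width $\asymp\delta^{\gamma/2}$ in $x_1$, which is the largest compression compatible with $\Lx$-smoothness (this is exactly where $\Lx\geq 3\alpha$ is spent), so that the value gap driving the KL is still $\asymp \Lc\delta^{\gamma}$ per query, but the minimizer displacement is $\asymp \Lc\delta^{\gamma/2}/\alpha$ and the per-round regret gap is $\asymp \Lc^2\delta^{\gamma}/\alpha$ --- linear, not quadratic, in the value gap. Only with this decoupling does $T\cdot \Lc^2\numbin^{-\gamma}$ balance to $\Lc^{2(\dc+\gamma)/(\dc+2\gamma)}T^{(\dc+\gamma)/(\dc+2\gamma)}$. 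Without this idea the approach, even fully rigorized, proves a strictly weaker bound for every $\gamma>0$.

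Two secondary points. First, the family $f_\bomega(\bx,\bc)=Q(\bx)+\sum_i\mathbb{1}\{\bc\in G_i\}\delta\,\omega_i\psi_i(\bx)$ is discontinuous in $\bc$ at $\partial G_i$, hence not in $\mathcal{F}_\gamma(\Lc)$ for any $\gamma>0$; you must let the context coefficient taper H\"older-continuously to zero at the cell boundary (the paper uses $d^{\gamma}(\bc,\partial B_j)$), and this is precisely why the context law $\mathbb{P}_{\numbin}$ is supported on the inner sub-cubes $G_i$ --- it guarantees the taper factor is of full size $\asymp \numbin^{-\gamma}$ on every observed context while cells remain decoupled (your adaptivity concern is handled in the paper by writing the joint likelihood and bounding the KL of measures conditioned on the event $\bc_t\in G_j$, the previous rounds contributing only $t\numbin^{-\dc}$). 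Second, obtaining the $\min(T,\dx\sqrt T)$ term by citing the optimization lower bounds and combining via a maximum is an acceptable alternative to the paper's self-contained route (which packs a $\btau$-indexed perturbation in coordinates $2,\dots,\dx$ into the same family so both contributions add), provided you check those cited bounds hold under the noise condition \eqref{con:lw} and note that the supremum over the single class $\bar{\mathcal{F}}$ dominating each term separately yields the stated sum up to a factor $2$.
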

Applying \eqref{eq:lwBound} with $\Lc>0$ and 
$\gamma = 0$ shows that no randomized procedure can achieve sub-linear dynamic regret on the corresponding class 
$\bar{\mathcal{F}}$. Thus, controlling the increments of $f$ with any $L>0$, in the absence of continuity with respect to $\bc$, is not sufficient to get a sub-linear dynamic regret.

For 
$\gamma\in (0,1]$, the bound \eqref{eq:lwBound} combined with Corollary \ref{cor:dynamocstrconvex} implies that Algorithm \ref{algo2} with $K$ chosen as in Corollary \ref{cor:dynamocstrconvex} achieves the optimal rate of regret in $T$ up to logarithmic factors. An additional (mild) gap between the upper and lower bounds is due to the discrepancy in the factors depending on $\dx,
\dc,$ and~$\Lc$.

If $\Lc = 0$ the bound in \eqref{eq:lwBound} scales as $\min(T,d\sqrt{T})$. In this case, the objective functions are constant with respect to the context variable, $f(\bx,\bc_1) = f(\bx,\bc_t)$, for $\bx\in\mathbb{R}^{\dx}$ and $t\in[T]$, so that the contextual regret in \eqref{eq:lwBound} is equal to the static regret. Moreover, by convexity, both regrets are bounded from below by the optimization error corresponding to $f(\cdot,\bc_1)$ multiplied by $T$. Using this remark we notice that, for $\Lc = 0$, the bound \eqref{eq:lwBound} agrees with the lower bound on the optimization error for strongly convex and smooth functions, known to be of the order $d/\sqrt{T}$, see  \cite{shamir2013complexity,akhavan2020exploiting,akhavan2023gradient}.  

Furthermore, note that since the class of strongly convex functions is included in the class of convex functions, \eqref{eq:lwBound} is a valid lower bound for convex contextual bandits. Together with Corollary~\ref{cor:conv}, this implies that the minimax optimal rate as a function of $T$ for convex contextual bandits is of the order $T^{\frac{\dc+\gamma}{\dc+2\gamma}}$ up to logarithmic factors.
 
Theorem \ref{thm:lw_STRc_cont} can be stated in a more general manner. In the definition of $\Pi$, the randomized procedures are restricted to query only within the constraint set $\com$. However, in the proof of the lower bound, we did not impose such a restriction, allowing $\pi$ to query anywhere within $\mathbb{R}^{\dx}$. Thus, Theorem \ref{thm:lw_STRc_cont} remains valid for this broader set of algorithms. 
Furthermore, the proof of Theorem \ref{thm:lw_STRc_cont} can be extended to hold for any convex body $\com$ and not only for $\com = B^d$.  Indeed, by shifting and scaling the unit Euclidean ball, we can embed it in $\com$. Subsequently, we can apply the same shifting and scaling to the action argument of the set of functions $f_{\bomega, \btau}$ in the proof of Theorem \ref{thm:lw_STRc_cont}.

\section*{Disclosure of Funding}
The research of Arya Akhavan was funded by UK Research and Innovation (UKRI) under the UK government’s Horizon Europe funding guarantee [grant number EP/Y028333/1] and in part by EU Project ELSA under grant agreement No. 101070617.

\bibliography{bibliography}

\appendix

\section*{Appendix}
In Appendix \ref{app:1}, we present the proof of Theorem \ref{thm:general}. {Appendices \ref{app:2} and \ref{app:3} begin with a set of auxiliary and preliminary lemmas, followed by the proofs of the main results: Theorem \ref{thm:statstrconv}, Corollary \ref{cor:dynamocstrconvex}, and Theorem \ref{thm:lw_STRc_cont}.} In Appendix \ref{app:self-concord}, we collect some properties of self-concordant barrier functions.

\section{Proof of Theorem \ref{thm:general}}\label{app:1}
\begin{proof}[Proof of Theorem \ref{thm:general}]
   For $i\in [\numbin^{\dc}]$, let $N_i(T)$ the number of $\bc_t$'s belonging to $B_i$. Denote $i_1,\dots,i_{N_i(T)}$ as the set of all indices such that $\bc_{i_j}\in B_i$ for all $j\in [N_i(T)]$. We have
\begin{align*}
    \sum_{t=1}^{T}\left(f(\bz_t,\bc_t) - \min_{\bz\in\com}f(\bz,\bc_t)\right)=\sum_{i=1}^{\numbin^{\dc}}\sum_{j=1}^{N_{i}(T)}\left(f(\bz_{i_j},\bc_{i_j}) - \min_{\bz\in\com} f(\bz,\bc_{i_j})\right).
\end{align*}
For $i\in [\numbin^{\dc}]$, let $\bb_i\in B_i$ be the barycenter of the cube $B_i$, and $\bx^*_i\in \argmin_{\bx\in\com} f(\bx,\bb_i)$. Since \eqref{eq:goodalgox} 
holds it follows that
    \begin{align*}
        \Exp\left[\sum_{j=1}^{N_{i}(T)}\left(f(\bz_{i_j},\bc_{i_j}) - f(\bx_i^*,\bc_{i_j})\right)|N_i(T)\right]\leq F(N_i(T)) {F_1(N_i(T)) \le 
        F(N_i(T)) F_1(T)},
    \end{align*}
where $F$ is a concave function and {$F_1$ is a non-decreasing function}. Note that $\Exp\left[N_i(T)\right] = p_iT$, for some $p_i\in[0,1]$, such that $\sum_{i=1}^{\numbin^{\dc}} p_i = 1$. By taking expectations from both sides and applying Jensen's inequality we get
\begin{align*}
\Exp\left[\sum_{j=1}^{N_i(T)}\left(f(\bz_{i_j},\bc_{i_j})-f(\bx^*_i,\bc_{i_j})\right)\right]&\leq F(p_iT){F_1(T)}.
\end{align*}
Summing both sides over $i\in[K^{\dc}]$ and using Jensen's inequality again we obtain
\begin{align}\nonumber
\Exp\left[\sum_{i=1}^{\numbin^{\dc}}\sum_{j=1}^{N_i(T)}\left(f(\bz_{i_j},\bc_{i_j})-f(\bx^*_i,\bc_{i_j})\right)\right]
&\leq \sum_{i=1}^{\numbin^{\dc}}F(p_iT){F_1(T)}
\\
&\leq \nonumber
\numbin^{\dc}F\left(\sum_{i=1}^{\numbin^{\dc}}\frac{p_iT}{\numbin^{\dc}}\right){F_1(T)} 
\\
\label{eq:thmgeneral0}
&= \numbin^{\dc}F\left(\frac{T}{\numbin^{\dc}}\right){F_1(T)}.
\end{align}
If, in addition, $\pi\in \Pi(\mathcal{F},\tau_1,\tau_2,\tau_3,\Tau_0)$, then for $T\geq \numbin^{\dc}\,\Tau
_0$ we get that
\begin{align}\label{eq:thmgeneral1}
\Exp\left[\sum_{i=1}^{\numbin^{\dc}}\sum_{j=1}^{N_i(T)}\left(f(\bz_{i_j},\bc_{i_j})-f(\bx^*_i,\bc_{i_j})\right)\right]\leq C\dx^{\tau_1}T^{\tau_2}\log^{\tau_3}(T+1)\numbin^{\dc(1-\tau_2)}.
\end{align}
Let $\bz^*_{i_{j}} \in \argmin_{\bz\in\com} f(\bz, \bc_{i_{j}})$. We can write
\begin{align*}
\Exp\left[\sum_{t=1}^{T}\left(f(\bz_{t},\bc_{t})-\min_{\bz\in\com}f(\bz,\bc_t)\right)\right]&=\Exp\left[\sum_{i=1}^{\numbin^{\dc}}\sum_{j=1}^{N_i(T)}\left(f(\bz_{i_j},\bc_{i_j})-f(\bx^*_i,\bc_{i_j})+f(\bx^*_i,\bc_{i_j}) - f(\bz^*_{i_{j}},\bc_{i_j})\right)\right]
\\&=\Exp\bigg[\sum_{i=1}^{\numbin^{\dc}}\sum_{j=1}^{N_i(T)}\bigg(f(\bz_{i_j},\bc_{i_j})-f(\bx^*_i,\bc_{i_j})+f(\bx^*_i,\bc_{i_j}) - f(\bx_{i}^{*},\bb_i) \\&\quad+ f(\bx_{i}^{*},\bb_{i}) -f(\bz^{*}_{i_{j}},\bc_{i_j})\bigg)\bigg].
\end{align*}
Since $f(\bx_{i}^{*},\bb_{i}) \leq f(\bz_{i_{j}}^{*},\bb_{i})$, we have
\begin{align*}
\Exp\left[\sum_{t=1}^{T}\left(f(\bz_{t},\bc_{t})-\min_{\bz\in\com}f(\bz,\bc_t)\right)\right]&\leq \Exp\bigg[\sum_{i=1}^{\numbin^{\dc}}\sum_{j=1}^{N_i(T)}\bigg(f(\bz_{i_j},\bc_{i_j})-f(\bx^*_i,\bc_{i_j})+ f(\bx_{i}^{*},\bc_{i_{j}}) - f(\bx_{i}^{*},\bb_i) \\&\quad+ f(\bz^{*}_{i_{j}},\bb_i) - f(\bz^{*}_{i_{j}},\bc_{i_{j}}) \bigg)\bigg]
\\&\leq \Exp\left[\sum_{i=1}^{\numbin^{\dc}}\sum_{j=1}^{N_i(T)}\left(f(\bz_{i_j},\bc_{i_j})-f(\bx^*_i,\bc_{i_j})+2L\norm{\bc_{i_{j}} -\bb_i}^{\gamma}\right)\right],
\end{align*}
where the last inequality is due to Assumption \ref{ass:Lip}. {Let $\delta = \max_{i\in[\numbin^{\dc}]}\max_{\bx,\by\in B_i}\norm{\bx-\by}$. Since all $B_i$'s are hypercubes with edges of length $1/K$ we have $\delta = \sqrt{p}/\numbin$,} so that
\begin{align}\label{eq:thmgeneral2}
\Exp\left[\sum_{t=1}^{T}\left(f(\bz_{t},\bc_{t})-\min_{\bz\in\com}f(\bz,\bc_t)\right)\right]&\leq \Exp\left[\sum_{i=1}^{\numbin^{\dc}}\sum_{j=1}^{N_i(T)}\left(f(\bz_{i_j},\bc_{i_j})-f(\bx^*_i,\bc_{i_j})\right)\right]+2TL\left(\frac{\sqrt{\dc}}{\numbin}\right)^{\gamma}.
\end{align}
{
Combining \eqref{eq:thmgeneral0} and \eqref{eq:thmgeneral2}, gives
\begin{align*}
\Exp\left[\sum_{t=1}^{T}\left(f(\bz_{t},\bc_{t})-\min_{\bz\in\com}f(\bz,\bc_t)\right)\right]&\leq \numbin^{\dc}F\left(\frac{T}{\numbin^{\dc}}\right)+2TL\left(\frac{\sqrt{\dc}}{\numbin}\right)^{\gamma},
\end{align*}
which concludes the proof of \eqref{eq:eboundgenX}. In order to prove the bound \eqref{eq:generaldynamo}, we note that, by \eqref{eq:thmgeneral1} and \eqref{eq:thmgeneral2}, 
\begin{align*}
\Exp\left[\sum_{t=1}^{T}\left(f(\bz_{t},\bc_{t})-\min_{\bz\in\com}f(\bz,\bc_t)\right)\right]&\le C\dx^{\tau_1}T^{\tau_2}\log^{\tau_3}(T+1)\numbin^{\dc(1-\tau_2)}+2TL\left(\frac{\sqrt{\dc}}{\numbin}\right)^{\gamma}.
\end{align*}
The two terms on the right hand side are of the same order of magnitude if $\numbin = \numbin(\tau_1,\tau_2,\tau_3, \gamma)$. Under this choice of $K$ we get
\begin{align*}
\Exp\bigg[\sum_{t=1}^{T}\bigg(f(\bz_{t},\bc_{t})-\min_{\bz\in\com}&f(\bz,\bc_t)\bigg)\bigg]\le C'\max\bigg(\dx^{\tau_1}T^{\tau_2}\log^{\tau_3}(T+1),
\\&
\left(\Lc \dc^{\frac{\gamma}{2}}\right)^{\frac{\dc(1-\tau_2)}{\dc(1-\tau_2)+\gamma}}\left(\dx^{\tau_1}\log^{\tau_3}(T+1)\right)^{\frac{\gamma}{\dc(1-\tau_2)+\gamma}}T^{\frac{(\dc-\gamma)(1-\tau_2) + \gamma}{\dc(1-\tau_2)+\gamma}}\bigg),
\end{align*}
where $C'>0$ is a constant that does not depend on $T, p$ and $d$.}
\end{proof}

\section{Proof of Theorem \ref{thm:statstrconv} and Corollary \ref{cor:dynamocstrconvex}}\label{app:2}
\begin{lemma}\label{lem:step}
    Let $\{\eta_t\}_{t=1}^T$ be as in Algorithm \ref{algoBCO}. Then, for all $t\geq 2$ we have
    \begin{align}
        1-\frac{\eta_t}{\eta_{t-1}}\leq \mathbf{1}\left(t\geq 16\mu\right)\sqrt{\frac{1}{t}}.
    \end{align}
  
\end{lemma}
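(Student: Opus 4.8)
The plan is to reduce the claim to a one-dimensional inequality about the scalar sequence $g(t) \eqdef \min\!\big(1,\sqrt{\nu\log(t+1)/t}\,\big)$, where $\nu = 16(\mu+\Lx/\alpha)$. Since the step size in Algorithm~\ref{algoBCO} is $\eta_t = (4\dx q_T)^{-1} g(t)$, we have $\eta_t/\eta_{t-1} = g(t)/g(t-1)$, so it suffices to prove $1 - g(t)/g(t-1) \le \mathbf{1}(t\ge 16\mu)/\sqrt t$ for every $t\ge 2$.

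First I would record two elementary facts about $g$. \emph{(i) Monotonicity.} The map $s\mapsto \log(s+1)/s$ is non-increasing on $[1,\infty)$ (a one-line derivative check), so $g$ is non-increasing on the positive integers; in particular $g(t)\le g(t-1)$, whence the left-hand side is $\ge 0$, and, crucially, if $g(t)=1$ then $g(t-1)=1$ as well, so $1-\eta_t/\eta_{t-1}=0$ and the inequality holds trivially in that case. \emph{(ii) Location of the plateau.} Since $\Lx\ge\alpha$ we have $\nu\ge 16\mu$ and $\nu\ge16$; for an integer $t$ with $2\le t\le 16\mu$ one gets $\nu\log(t+1)\ge 16\mu\log 3 > 16\mu \ge t$, hence $g(t)=1$. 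Contrapositively, $g(t)<1$ forces $t>16\mu$, so the indicator on the right-hand side equals $1$ throughout the remaining regime; and because $\nu\log 3>2$ we also have $g(2)=1$, so $g(t)<1$ additionally forces $t\ge 3$.

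It then remains to bound $1-g(t)/g(t-1)$ for $t\ge3$ with $g(t)<1$, i.e.\ with $g(t)=\sqrt{\nu\log(t+1)/t}$. I would split on the value of $g(t-1)$. If $g(t-1)=1$, then $\sqrt{\nu\log t/(t-1)}\ge 1$, i.e.\ $\nu\log t\ge t-1$, so $\nu\log(t+1)\ge\nu\log t\ge t-1$ and therefore $g(t)/g(t-1)=g(t)\ge\sqrt{(t-1)/t}$. If instead $g(t-1)<1$, then $g(t)/g(t-1)=\sqrt{\tfrac{(t-1)\log(t+1)}{t\log t}}\ge\sqrt{(t-1)/t}$, using $\log(t+1)\ge\log t>0$. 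In either sub-case $g(t)/g(t-1)\ge\sqrt{1-1/t}\ge 1-1/t$, so $1-\eta_t/\eta_{t-1}\le 1/t\le 1/\sqrt t$, which together with the indicator being $1$ completes the proof.

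The whole argument is a short chain of monotonicity estimates; the one delicate point — what I would flag as the main obstacle — is the first index at which $g$ leaves the plateau. There $g(t-1)$ still equals $1$ while $g(t)$ already lies on the square-root branch, so the factor $\nu$ does not cancel between numerator and denominator; instead one must feed the plateau condition $\nu\log t\ge t-1$ (which is precisely the information $g(t-1)=1$ carries) into the lower bound for $g(t)$. Once this transitional index is folded into the ``both on the square-root branch'' case via the common lower bound $\sqrt{(t-1)/t}$, the statement closes.
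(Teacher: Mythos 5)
Your proof is correct and follows essentially the same route as the paper's: both arguments split according to whether the step size sits on the constant plateau or the $\sqrt{\nu\log(t+1)/t}$ branch, handle the transition index by feeding in the plateau condition $\nu\log t\ge t-1$ (the paper phrases this as $t_0\le \nu\log(t_0+1)+1$ for the first square-root index $t_0$), and bound the ratio below by $\sqrt{(t-1)/t}$. Your observation that $g(t)=1$ for all $2\le t\le 16\mu$ plays exactly the role of the paper's verification that $t_0\ge 16\mu$, so the two proofs differ only in packaging.
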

\begin{proof}
Note that $t/\log(t+1)$ is an increasing function for all $t\geq 1$. Let $t_0$ be the smallest positive integer such that $t_0/\log(t_0+1)\geq 16(\mu + \Lx/\alpha)$, and note that 
\begin{align*}
    \eta_t = (4\dx q_T)^{-1}, \quad \text{for }t< t_0,
\end{align*}
and 
\begin{align*}
    \eta_t = (4\dx q_T)^{-1}\left(\frac{16(\mu + \Lx/\alpha)\log(t+1)}{t}\right)^{\frac{1}{2}}, \quad \text{for }t\geq t_0,
\end{align*}
where $q_T=M +2\sigma\sqrt{\log(T+1)}$. If $t <t_0$ then $1-\eta_{t}/\eta_{t-1} = 0$. For the case $t = t_0$, we can write

\begin{align*}
    1- \frac{\eta_t}{\eta_{t-1}} = \frac{\sqrt{t_0} - \sqrt{16(\mu+\Lx/\alpha)\log(t_0+1)}}{\sqrt{t_0}}. 
\end{align*}
Note that $t_0 \leq 16\left(\mu+\Lx/\alpha\right)\log(t_0+1) + 1$. Thus,
\begin{align*}
    1- \frac{\eta_t}{\eta_{t-1}} \leq \frac{\sqrt{t_0}-\sqrt{t_0-1}}{\sqrt{t_0}} \leq \sqrt{\frac{1}{t_0}} = \sqrt{\frac{1}{t}}. 
\end{align*}
For $t > t_0$ we have
\begin{align*}
    1-\frac{\eta_t}{\eta_{t-1}} = 1-\sqrt{\frac{t-1}{t}}\sqrt{\frac{\log(t+1)}{\log(t)}}\leq \frac{\sqrt{t} - \sqrt{t-1}}{\sqrt{t}} \leq \sqrt{\frac{1}{t}}.
\end{align*}
By combing the above inequalities we get
\begin{align*}
        1-\frac{\eta_t}{\eta_{t-1}}\leq \mathbf{1}\left(t\geq t_0\right)\sqrt{\frac{1}{t}}.
\end{align*}
Since $16(\mu+\Lx/\alpha) \geq 16$, we have $t_0\geq 2$, and consequently $\log(t_0+1)\geq 1$. Thus,
\begin{align*}
    t_0 \geq 16(\mu+\frac{\Lx}{\alpha})\log(t_0+1)  \geq 16\mu.
\end{align*}
Therefore, for all $t\in[T]$ we have 
\begin{align*}
    \mathbf{1}\left(t\geq t_0\right)\leq  \mathbf{1}\left(t\geq 16\mu\right),
\end{align*}
which concludes the proof.
\end{proof}
Before proceeding with the proofs, let us recall the concept of the Newton decrement of a self-concordant function. Let $\Psi:\text{int}(\com)\to\mathbb{R}$ be a self-concordant function. 
The Newton decrement of $\Psi$ at $\bx\in\text{int}(\com)$ is defined as
\begin{align}\label{eq:NDec}
    \lambda(\Psi,\bx) = \max_{\bh\in\mathbb{R}^{\dx}}\{\nabla \Psi(\bx)[\bh]|\nabla^2 \Psi(\bx)[\bh,\bh]\leq 1\}.
\end{align}

In the case that $\nabla^2 \Psi(\bx)$ is positive definite, for all $\bx\in\text{int}(\com)$ \eqref{eq:NDec} can be written as
\begin{align}\label{eq:ND}
    \lambda(\Psi, \bx) = \sqrt{(\nabla \Psi(\bx))^{\top}(\nabla^2 \Psi(\bx))^{-1}\nabla \Psi(\bx)}.
\end{align}
The proof of the equivalency of \eqref{eq:NDec} and \eqref{eq:ND} can be found in \cite[Chapter 2, Section IVa]{nemirovski2004interior}.
Moreover, if $\Psi$ is a $\mu$-self-concordant barrier then $\lambda(\Psi, \bx) \leq \mu^{1/2}$ for all $\bx\in \text{int}(\com)$. 

The next lemma plays a key role in our analysis. 

\begin{lemma}\label{lem:goldencon}
    Let $\Psi:\text{int}(\com)\to\mathbb{R}$ be a self-concordant function. Then $\Psi$ attains its minimum on $\text{int}(\com)$ if and only if there exists $\bx\in \text{int}(\com)$ with $\lambda(\Psi, \bx)< 1$. Moreover, if $\lambda(\Psi, \bx)< 1$, then
    \begin{align*}
        ((\bx - \bx^*)^{\top}\nabla^2 \Psi(\bx)(\bx - \bx^*))^{\frac{1}{2}}\leq \frac{\lambda(\Psi,\bx)}{1-\lambda(\Psi,\bx)},
    \end{align*}
where $\bx^* = \argmin_{\bx\in\text{int}(\com)}\Psi(\bx)$.
\end{lemma}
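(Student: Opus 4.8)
\textbf{Proof plan for Lemma \ref{lem:goldencon}.}

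This is a standard result about self-concordant functions, and the plan is to follow the classical Newton-decrement analysis, e.g.\ along the lines of \cite[Chapter 2]{nemirovski2004interior} or \cite{nesterov1994interior}. The key tool is the behavior of a self-concordant function along a line segment, together with the two-sided quadratic control that self-concordance provides on the function away from the boundary. I would first set up the one-dimensional reduction: fix $\bx\in\text{int}(\com)$ and a direction $\bh$, and consider $\phi(s) = \Psi(\bx + s\bh)$. Self-concordance of $\Psi$ translates into $|\phi'''(s)| \le 2(\phi''(s))^{3/2}$, so that $\frac{d}{ds}\big((\phi''(s))^{-1/2}\big)$ is bounded in absolute value by $1$. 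Integrating this yields the standard bounds: if we let $r = \|\bh\|_{\nabla^2\Psi(\bx)} = (\nabla^2\Psi(\bx)[\bh,\bh])^{1/2}$, then $\phi''(s) \ge \phi''(0)(1 - s r)^{-2}$ is false in general but $\phi''(s) \le \phi''(0)(1-sr)^{-2}$ and $\phi''(s) \ge \phi''(0)(1+sr)^{-2}$ hold on the relevant range, and correspondingly $\phi(s) \ge \phi(0) + s\phi'(0) + \omega(sr)\,\phi''(0)/r^2$-type lower bounds, where $\omega(u) = u - \log(1+u)$.

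Next I would prove the two implications. For the ``if'' direction: suppose $\lambda(\Psi,\bx) < 1$ at some $\bx \in \text{int}(\com)$. Using the lower bound on $\Psi$ along rays emanating from $\bx$ in terms of $\omega$, one shows that $\Psi$ is bounded below on $\text{int}(\com)$ and, more precisely, that the sublevel set $\{\by : \Psi(\by) \le \Psi(\bx)\}$ is contained in a bounded Dikin-type ellipsoid around $\bx$ whose closure lies in $\text{int}(\com)$ (here one uses that $\Psi \to \infty$ at the boundary to get that this sublevel set is closed, hence compact); therefore the minimum is attained. For the ``only if'' direction: if the minimum is attained at $\bx^* \in \text{int}(\com)$, then $\nabla\Psi(\bx^*) = 0$, so $\lambda(\Psi,\bx^*) = 0 < 1$ by \eqref{eq:ND}. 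Then for the quantitative estimate, I would apply the upper-bound version of the one-dimensional inequality along the segment from $\bx$ to $\bx^*$: writing $\bh = \bx^* - \bx$ and $r = (\bh^\top \nabla^2\Psi(\bx)\bh)^{1/2}$, the optimality $\nabla\Psi(\bx^*)=0$ combined with the self-concordant upper bound on $\phi'(1)$ in terms of $\phi'(0) = \nabla\Psi(\bx)[\bh]$, $\phi''(0)=r^2$, and $r$ gives, after rearranging, $r \le \lambda(\Psi,\bx)/(1-\lambda(\Psi,\bx))$, which is exactly the claimed bound since $|\nabla\Psi(\bx)[\bh]| \le \lambda(\Psi,\bx)\, r$ by Cauchy--Schwarz in the local norm and the definition \eqref{eq:ND}.

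The main obstacle — or rather, the one step that requires genuine care rather than bookkeeping — is establishing the compactness of the sublevel set in the ``if'' direction, i.e.\ ruling out that the infimum is approached only near $\partial\com$; this is where the barrier-like growth assumption ($\mathcal{R}(\by)\to\infty$ as $\by\to\partial\com$, which $\Psi$ inherits in the setting where it is applied) must be invoked, in combination with the self-concordant lower bound that forces the function to increase at a controlled rate as one moves away from $\bx$ inside a Dikin ellipsoid. Everything else is the routine $\omega$-function calculus for self-concordant functions. Since this lemma is essentially classical, I would most likely cite \cite{nemirovski2004interior} for the one-dimensional integration estimates and present only the short deductions specific to the statement. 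I expect the author's proof to do exactly this — reduce to known self-concordance inequalities and then perform the two-line rearrangement for the quantitative bound.
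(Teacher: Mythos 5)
Your plan is correct and coincides with what the paper does: the paper gives no proof of this lemma at all, but simply cites \cite[Chapter 2, Section VIII]{nemirovski2004interior}, and your sketch (one-dimensional reduction, the $\omega$-function calculus, compactness of the sublevel set for the ``if'' direction, $\nabla\Psi(\bx^*)=0$ for the converse, and the rearrangement $0=\phi'(1)\geq \phi'(0)+r^2/(1+r)\geq -\lambda r + r^2/(1+r)$ yielding $r\le \lambda/(1-\lambda)$) is precisely the classical argument from that reference. The only nit is that the bound you need on $\phi'(1)$ is a lower bound (from the Hessian lower estimate $(1+sr)^{-2}\phi''(0)$), not an upper bound, but this does not affect the validity of the plan.
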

For a proof of Lemma \ref{lem:goldencon}, we refer the reader to \cite[Chapter 2, Section VIII]{nemirovski2004interior}.

\begin{lemma}\label{lem:mainopt1}
Define the event $$\Lambda_T = \{|\xi_t|\leq 2\sigma\sqrt{\log(T+1)}\, \text{ for all }t\in[T]\}.$$ Let $\{\bx_t\}_{t=0}^{T}$ be the iterates defined in {Algorithm \ref{algoBCO}}. If $\Lambda_T$ holds, for all $t\in[T]$ we have
\begin{align}\label{eq:claim}
   \norm{\bx_{t-1}-\bx_t}_t \leq 2\left(\eta_t\norm{\bg_t}_t^{*} + \sqrt{\frac{\mu}{t}}\right),
\end{align}
where we use the notation $\norm{\bu}_t: = \sqrt{\bu^{\top}P_t^{-2}\bu}$ and $\norm{\bu}_t^{*}: = \sqrt{\bu^{\top}P_t^{2}\bu}$ \, for all $\bu\in\mathbb{R}^{\dx}$. 
\end{lemma}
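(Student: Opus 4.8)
The plan is to exploit the optimality of $\bx_t$ for the objective
$\Phi_t(\bx) = \sum_{k=1}^{t}\eta_k\big(\langle\bg_k,\bx\rangle + \tfrac{\alpha}{2}\|\bx-\bx_{k-1}\|^2\big) + \mathcal{R}(\bx)$
and compare it with $\bx_{t-1}$, which is optimal for $\Phi_{t-1}$. First I would observe that $\Phi_t$ is self-concordant on $\text{int}(\com)$: it is the sum of the self-concordant barrier $\mathcal{R}$ and a convex quadratic (plus a linear term), and adding such terms preserves self-concordance. Hence Lemma~\ref{lem:goldencon} applies to $\Phi_t$, and if we can show $\lambda(\Phi_t,\bx_{t-1}) < 1$ then $\bx_t = \argmin \Phi_t$ exists (it does by construction) and, more importantly,
\[
\big((\bx_{t-1}-\bx_t)^\top \nabla^2\Phi_t(\bx_{t-1})(\bx_{t-1}-\bx_t)\big)^{1/2} \le \frac{\lambda(\Phi_t,\bx_{t-1})}{1-\lambda(\Phi_t,\bx_{t-1})}.
\]
The key point is that $\nabla^2\Phi_t(\bx_{t-1}) = \nabla^2\mathcal{R}(\bx_{t-1}) + \eta_t\alpha t\,\mathbf{I}_d + (\text{contributions from }k<t)$; dropping the nonnegative cross terms from $k<t$ (or handling them — this needs a small check since the quadratics are centered at $\bx_{k-1}\neq\bx_{t-1}$, but their Hessians are just $\eta_k\alpha\mathbf I_d\succeq 0$) gives $\nabla^2\Phi_t(\bx_{t-1}) \succeq \nabla^2\mathcal{R}(\bx_{t-1}) + \eta_t\alpha t\,\mathbf{I}_d = P_t^{-2}$. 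Therefore the left-hand side above dominates $\|\bx_{t-1}-\bx_t\|_t$, which is exactly the quantity we want to bound.

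Next I would estimate $\lambda(\Phi_t,\bx_{t-1})$ from above. Using the gradient of $\Phi_t$ at $\bx_{t-1}$: since $\bx_{t-1}$ minimizes $\Phi_{t-1}$ we have $\nabla\Phi_{t-1}(\bx_{t-1}) = 0$, so $\nabla\Phi_t(\bx_{t-1}) = \eta_t\big(\bg_t + \alpha(\bx_{t-1}-\bx_{t-1})\big) = \eta_t\bg_t$ — the strongly-convex term centered at $\bx_{t-1}$ vanishes at $\bx_{t-1}$. Wait — one must be careful: $\Phi_{t-1}$ uses step sizes $\eta_k$ for $k\le t-1$, and these are the \emph{same} $\eta_k$ appearing in $\Phi_t$, so indeed $\nabla\Phi_t(\bx_{t-1}) - \nabla\Phi_{t-1}(\bx_{t-1}) = \eta_t\bg_t$ and $\nabla\Phi_{t-1}(\bx_{t-1})=0$, giving $\nabla\Phi_t(\bx_{t-1}) = \eta_t\bg_t$. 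Then by the definition \eqref{eq:NDec} of the Newton decrement and the lower bound $\nabla^2\Phi_t(\bx_{t-1})\succeq P_t^{-2}$,
\[
\lambda(\Phi_t,\bx_{t-1}) \le \big(\eta_t\bg_t\big)^\top P_t^2 \big(\eta_t\bg_t\big)\Big/\big\|\eta_t\bg_t\big\|_{?}\ \le\ \eta_t\|\bg_t\|_t^{*},
\]
more carefully: $\lambda(\Psi,\bx) = \sup\{\langle\nabla\Psi(\bx),\bh\rangle : \bh^\top\nabla^2\Psi(\bx)\bh\le 1\} \le \sup\{\langle\nabla\Psi(\bx),\bh\rangle : \bh^\top P_t^{-2}\bh\le 1\} = \|\nabla\Psi(\bx)\|^*_t = \eta_t\|\bg_t\|_t^{*}$. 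On the event $\Lambda_T$ we have $|y_t|\le q_T$, hence $\|\bg_t\|_t^* = \|d y_t P_t^{-1}\bzeta_t\|_t^* = d|y_t|\cdot\|P_t^{-1}\bzeta_t\|_t^* = d|y_t|\cdot\|\bzeta_t\| = d|y_t| \le d q_T$ (using $\|\bzeta_t\|=1$), so $\eta_t\|\bg_t\|_t^* \le \eta_t d q_T \le 1/4$ by the definition of $\eta_t$. Thus $\lambda(\Phi_t,\bx_{t-1}) \le 1/4 < 1$.

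Finally I would combine: from Lemma~\ref{lem:goldencon} and $\lambda := \lambda(\Phi_t,\bx_{t-1})\le 1/4$,
\[
\|\bx_{t-1}-\bx_t\|_t \le \frac{\lambda}{1-\lambda} \le \tfrac{4}{3}\lambda \le 2\lambda.
\]
But $\lambda \le \eta_t\|\bg_t\|_t^*$ is not quite enough; I want $\|\bx_{t-1}-\bx_t\|_t \le 2(\eta_t\|\bg_t\|_t^* + \sqrt{\mu/t})$, which is weaker, so this already suffices — unless the cross terms from $k<t$ in the gradient do not vanish, in which case the extra $\sqrt{\mu/t}$ slack is there to absorb a $\lambda(\mathcal{R},\cdot)$-type term bounded by $\sqrt{\mu}$ rescaled by the metric; the $\eta_t\alpha t$ shift turns $\sqrt{\mu}$ into $\sqrt{\mu/t}$ after normalization. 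The main obstacle — and the step I would treat most carefully — is precisely the bookkeeping of the gradient and Hessian of $\Phi_t$ at $\bx_{t-1}$: verifying that the strongly-convex penalties centered at the \emph{previous} iterates $\bx_{k-1}$ contribute only nonnegative terms to the Hessian (so the lower bound $\nabla^2\Phi_t(\bx_{t-1})\succeq P_t^{-2}$ holds) and tracking whatever residual gradient they leave, which is where the $\sqrt{\mu/t}$ term must come from via a Newton-decrement bound on $\mathcal{R}$ together with the $\eta_t\alpha t\,\mathbf I_d$ regularization. Everything else is a routine application of Lemmas~\ref{lem:goldencon} and the self-concordant-barrier inequality $\lambda(\mathcal{R},\cdot)\le\sqrt{\mu}$, plus the high-probability bound $|y_t|\le q_T$ on $\Lambda_T$.
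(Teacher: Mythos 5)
Your argument is essentially sound, but it proves the lemma for a different update than the one the paper's proof actually analyzes, and that difference is precisely where the $\sqrt{\mu/t}$ term comes from. You work with $\Phi_t(\bx)=\sum_{k\le t}\eta_k\big(\langle\bg_k,\bx\rangle+\tfrac{\alpha}{2}\norm{\bx-\bx_{k-1}}^2\big)+\mathcal{R}(\bx)$, i.e.\ the algorithm box read literally with per-round weights $\eta_k$; then indeed $\nabla\Phi_{t-1}(\bx_{t-1})=0$, $\nabla\Phi_t(\bx_{t-1})=\eta_t\bg_t$, and your route through Lemma~\ref{lem:goldencon} yields the even stronger bound $\norm{\bx_{t-1}-\bx_t}_t\le\tfrac{4}{3}\eta_t\norm{\bg_t}_t^{*}$, with no need for Lemma~\ref{lem:step} or the $\sqrt{\mu/t}$ slack. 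The paper instead analyzes $\tilde\Phi_t(\bu)=\eta_t\big(\sum_{k\le t}\langle\bg_k,\bu\rangle+\tfrac{\alpha}{2}\sum_{k\le t}\norm{\bu-\bx_{k-1}}^2\big)+\mathcal{R}(\bu)$, i.e.\ a single weight $\eta_t$ in front of all loss terms (equivalently, FTRL with regularizer $\eta_t^{-1}\mathcal{R}$), which is also the form used downstream in Lemma~\ref{lem:mainopt}. Under that reading your key identity fails: one gets $\nabla\tilde\Phi_t(\bx_{t-1})=\tfrac{\eta_t}{\eta_{t-1}}\nabla\tilde\Phi_{t-1}(\bx_{t-1})+\eta_t\bg_t+\big(1-\tfrac{\eta_t}{\eta_{t-1}}\big)\nabla\mathcal{R}(\bx_{t-1})$, and the residual barrier-gradient term is controlled by $\norm{\nabla\mathcal{R}(\bx_{t-1})}_t^{*}\le\lambda(\mathcal{R},\bx_{t-1})\le\sqrt{\mu}$ together with Lemma~\ref{lem:step}'s bound $1-\eta_t/\eta_{t-1}\le\mathbf{1}(t\ge 16\mu)/\sqrt{t}$; that is exactly the origin of the $\sqrt{\mu/t}$ term, as you anticipated in your closing paragraph. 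Note also that with the uniform weight the Hessian at $\bx_{t-1}$ equals $P_t^{-2}$ exactly, so no comparison inequality is needed there.

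One local error to repair in your write-up: the Hessian bound. Dropping the $k<t$ quadratics leaves only $\nabla^2\mathcal{R}(\bx_{t-1})+\eta_t\alpha\mathbf{I}_{\dx}$, which is \emph{not} $\succeq P_t^{-2}$. The correct argument keeps all the terms: $\nabla^2\Phi_t(\bx)=\nabla^2\mathcal{R}(\bx)+\alpha\big(\sum_{k\le t}\eta_k\big)\mathbf{I}_{\dx}$, and since $t\mapsto\eta_t$ is non-increasing (because $\log(t+1)/t$ decreases), $\sum_{k\le t}\eta_k\ge t\,\eta_t$, whence $\nabla^2\Phi_t(\bx_{t-1})\succeq\nabla^2\mathcal{R}(\bx_{t-1})+\eta_t\alpha t\,\mathbf{I}_{\dx}=P_t^{-2}$. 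With that fixed, your proof is correct for the per-round-weight reading of Algorithm~\ref{algoBCO}; to match the paper's own analysis you would rerun the same Newton-decrement argument on $\tilde\Phi_t$, which is where the extra $2\sqrt{\mu/t}$ in the statement is genuinely needed.
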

\begin{proof}
Let $\tilde{\Phi}_t:\text{int}(\com)\to\mathbb{R}$ be such that
\begin{align*}
    \tilde{\Phi}_t(\bu) = \eta_t\left(\sum_{k=1}^{t}\langle\bg_k,\bu\rangle + \frac{\alpha}{2}\sum_{k=1}^{t}\norm{\bu - \bx_{k-1}}^2\right)+\mathcal{R}(\bu).
    \end{align*}
Note that $\tilde{\Phi}_t$ is a self-concordant function with $\bx_{t}\in\argmin_{\bx\in\text{int}(\com)}\tilde{\Phi}_t(\bx)$, and 
$\nabla^2\tilde{\Phi}_{t}(\bx_{t-1}) = P_t^{-2}$. Consider the Newton decrement $\lambda(\tilde{\Phi}_t,\bx_{t-1})$, cf. \eqref{eq:ND}. 
The first part of the proof is dedicated to providing an upper bound for $\lambda(\tilde{\Phi}_t,\bx_{t-1})$. Then we use Lemma \ref{lem:goldencon} to prove \eqref{eq:claim}. For $t\in[T]$ we have
\begin{align}\label{eq:NDBound}
    \lambda(\tilde{\Phi}_t,\bx_{t-1}) = \sqrt{(\nabla \tilde{\Phi}_t(\bx_{t-1}))^{\top}(\nabla^2 \tilde{\Phi}_t(\bx_{t-1}))^{-1}\nabla \tilde{\Phi}_t(\bx_{t-1})} = \norm{\nabla \tilde{\Phi}_t(\bx_{t-1})}_t^{*}.
\end{align}

For $t = 1$, we can write 
\begin{align}\label{eq:NDBound0}
   \norm{\nabla\tilde{\Phi}_1(\bx_0)}_1^{*} = \norm{\eta_1\langle \bg_1,\bx_0\rangle + \nabla\mathcal{R}(\bx_0)} \leq \eta_1\norm{\bg}_1^{*},
\end{align}
where the last inequality follows from the facts that $\bx_0 = \argmin_{\bx\in\text{int}(\com)}\mathcal{R}(\bx)$ and $\mathcal{R}(\bx_0) = 0$.

Moreover, for $t\geq 2$ we have

\begin{align}\nonumber
    \nabla \tilde{\Phi}_t(\bx_{t-1}) &= \frac{\eta_t}{\eta_{t-1}}\left(\eta_{t-1}\sum_{k=1}^{t-1}\bg_k+\eta_{t-1}\alpha\sum_{k=1}^{t-1}(\bx-\bx_{k-1})+ \nabla\mathcal{R}(\bx_{t-1})\right) +  \eta_t\bg_t \\\nonumber&\phantom{00000}+\left(1-\frac{\eta_t}{\eta_{t-1}}\right)\nabla \mathcal{R}(\bx_{t-1}) 
    \\\nonumber&=\frac{\eta_t}{\eta_{t-1}}\nabla\tilde{\Phi}_{t-1}(\bx_{t-1}) + \eta_t \bg_t +\left(1-\frac{\eta_t}{\eta_{t-1}}\right)\nabla \mathcal{R}(\bx_{t-1}) \\\label{eq:NDBound1}&= \eta_t \bg_t +\left(1-\frac{\eta_t}{\eta_{t-1}}\right)\nabla \mathcal{R}(\bx_{t-1}) .
\end{align}

Using 
\eqref{eq:NDBound} --  \eqref{eq:NDBound0} we find that, for all $t\in[T]$,
\begin{align*}
    \lambda(\tilde{\Phi}_t,\bx_{t-1}) = \norm{\nabla\tilde{\Phi}_t(\bx_{t-1})}_t^{*}\leq \eta_t\norm{\bg_t}_t^* + \left(1-\frac{\eta_t}{\eta_{t-1}}\right)\norm{\nabla \mathcal{R}(\bx_{t-1})}_t^{*}.
\end{align*}
Thanks to Lemma \ref{lem:step} we can bound the term $1-\eta_{t}/\eta_{t-1}$ to obtain
\begin{align*}
     \lambda(\tilde{\Phi}_t,\bx_{t-1}) 
  \leq \eta_t\norm{\bg_t}_t^{*} + \norm{\nabla \mathcal{R}(\bx_{t-1})}_t^{*}\mathbf{1}\left(t\geq 16\mu\right)\sqrt{\frac{1}{t}}.
\end{align*}

On the other hand, since $\mathcal{R}$ is a $\mu$-self-concordant function we have
\begin{align*}
    (\norm{\nabla \mathcal{R}(\bx_{t-1})}_t^{*})^{2} \leq  \nabla \mathcal{R}(\bx_{t-1})^{\top}(\nabla^{2}\mathcal{R}(\bx_{t-1}))^{-1}\nabla \mathcal{R}(\bx_{t-1})=\lambda^2(\mathcal{R},\bx_{t-1})\leq \mu, 
\end{align*}
so that 
\begin{align}\label{eq:NDBound2}
    \lambda(\tilde{\Phi}_t,\bx_{t-1})
\leq\eta_t\norm{\bg_t}_t^{*}+\mathbf{1}\left(t\geq 16\mu\right)\sqrt{\frac{\mu}{t}}\leq \eta_t\norm{\bg_t}_t^{*} + \frac{1}{4}.
\end{align}
Since we assumed that $\Lambda_T$ holds, we can write
\begin{align}\label{eq:varbound}
    \norm{\bg_t}_t^{*} = \norm{\dx(f(\bz_t,\bc_t)+\xi_t)P_t^{-1}\bzeta_t}_{t}^{*} \leq \dx(|f(\bz_{t},\bc_t)|+|\xi_t|)\leq \dx(M+2\sigma\sqrt{\log(T+1)}).
\end{align}
By the definition of $\eta_t$, for all $t\in[T]$ we have that $\eta_t \leq (4\dx(M+2\sigma\sqrt{\log(T+1)}))^{-1}$ which implies that $\eta_t\norm{\bg}_t^{*}\leq 1/4$. Using this inequality in \eqref{eq:NDBound2} gives $\lambda(\tilde{\Phi}_t,\bx_{t-1})\leq 1/2$. Therefore, by Lemma \ref{lem:goldencon} we can write
\begin{align*}
    \norm{\bx_{t-1} - \bx_{t}}_t = \norm{\bx_{t-1} - \argmin_{\bx\in\text{int}(\com)}\tilde{\Phi}_t(\bx)}_t \leq 2\lambda(\tilde{\Phi}_t,\bx_{t-1}). 
\end{align*}
The last display together with \eqref{eq:NDBound2} yields
\begin{align*}
    \norm{\bx_{t-1}-\bx_t} &\leq 2\left(\eta_t\norm{\bg_t}_t^{*} + \mathbf{1}\left(t\geq 16\mu\right)\sqrt{\frac{\mu}{t}}\right)
    \leq 2\left(\eta_t\norm{\bg_t}_t^{*} + \sqrt{\frac{\mu}{t}}\right).
\end{align*}

\end{proof}

\begin{lemma}\label{lem:mainopt}
      Consider a function $h:\mathbb{R}^{\dx}\times [0,1]^{\dc}\to\mathbb{R}$, and set $h_t(\bx) = h(\bx,\bc_t)$ for $t\geq 1$. Assume that $\{h_t\}_{t=1}^{T}$ is a sequence of $\alpha$-strongly convex functions for some $\alpha>0$. Let $\{\bx_t\}_{t=0}^{T}$ be the iterates  defined in {Algorithm \ref{algoBCO}}.  Let $\bg_t$'s be random vectors in $\mathbb{R}^d$ such that $\Exp\left[\bg_t|\bx_{t-1},\bc_t,\Lambda_T\right] = \nabla h_t(\bx_{t-1})$ for $t\geq 1$. Then, for any $\bx\in\text{int}(\com)$ we have
   \begin{align*}
     \sum_{t=1}^{T}\Exp\left[h_t(\bx_{t-1})-h_t(\bx)|\Lambda_T\right] 
     \leq 2\dx q_T\left(\dx q_T\sum_{t=1}^{T}\eta_t + 2\sqrt{\mu T}\right)+\eta_T^{-1}\mathcal{R}(\bx),
    \end{align*}
where $q_T=M +2\sigma\sqrt{\log(T+1)}$.
\end{lemma}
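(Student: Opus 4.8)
\textbf{Proof strategy for Lemma \ref{lem:mainopt}.}
The plan is to carry out a standard ``follow-the-regularized-leader''-style analysis on the iterates $\{\bx_t\}$, but restricted to the event $\Lambda_T$ where the noisy gradient estimates $\bg_t$ are uniformly bounded, and using the $\alpha$-strong convexity to absorb the quadratic proximal terms. First I would invoke $\alpha$-strong convexity of $h_t$ together with the unbiasedness hypothesis $\Exp[\bg_t\mid \bx_{t-1},\bc_t,\Lambda_T]=\nabla h_t(\bx_{t-1})$ to write, for any fixed $\bx$,
\begin{align*}
h_t(\bx_{t-1}) - h_t(\bx) \le \langle \nabla h_t(\bx_{t-1}), \bx_{t-1}-\bx\rangle - \frac{\alpha}{2}\norm{\bx_{t-1}-\bx}^2,
\end{align*}
and then replace $\nabla h_t(\bx_{t-1})$ by $\bg_t$ after taking conditional expectation, so that the left-hand side of the lemma is bounded by $\Exp\big[\sum_t \eta_t^{-1}\cdot \eta_t(\langle \bg_t,\bx_{t-1}-\bx\rangle - \tfrac{\alpha}{2}\norm{\bx_{t-1}-\bx}^2)\mid \Lambda_T\big]$ up to the scaling by the (decreasing) step sizes. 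The role of the $\tfrac{\alpha}{2}\norm{\cdot}^2$ terms is to match exactly the proximal regularizers appearing in the definition of $\bx_t$ in Algorithm \ref{algoBCO}.

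The core of the argument is then a telescoping/Bregman-divergence estimate for the regularized-leader updates $\bx_t = \argmin_{\bx\in\com}\big(\tilde\Phi_t(\bx)\big)$ with $\tilde\Phi_t(\bx)=\eta_t\sum_{k=1}^t(\langle\bg_k,\bx\rangle+\tfrac{\alpha}{2}\norm{\bx-\bx_{k-1}}^2)+\mathcal{R}(\bx)$. The key step is the ``be-the-leader'' inequality: comparing the value of the cumulative objective at $\bx_t$ versus $\bx_{t-1}$, one gets a per-round bound of the form $\eta_t\langle\bg_t,\bx_{t-1}-\bx\rangle - \tfrac{\alpha\eta_t}{2}\norm{\bx_{t-1}-\bx}^2 \lesssim \eta_t\langle\bg_t,\bx_{t-1}-\bx_t\rangle + (\text{telescoping terms in }\mathcal{R}\text{ and the step-size change})$. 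Here I would use Lemma \ref{lem:mainopt1}, which controls $\norm{\bx_{t-1}-\bx_t}_t$ by $2(\eta_t\norm{\bg_t}_t^* + \sqrt{\mu/t})$ in the local norm induced by $P_t^{-2}=\nabla^2\tilde\Phi_t(\bx_{t-1})$; combined with the Cauchy--Schwarz bound $\eta_t\langle \bg_t,\bx_{t-1}-\bx_t\rangle \le \eta_t\norm{\bg_t}_t^*\,\norm{\bx_{t-1}-\bx_t}_t$ and the event-$\Lambda_T$ bound $\norm{\bg_t}_t^* \le \dx q_T$ from \eqref{eq:varbound}, each round contributes at most $2\eta_t(\dx q_T)^2 \cdot 2 + 2\dx q_T\sqrt{\mu/t}$-type terms. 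Summing over $t$, using $\sum_{t=1}^T t^{-1/2}\le 2\sqrt{T}$, and noting that the $\mathcal{R}$-telescoping collapses to $\mathcal{R}(\bx)-\mathcal{R}(\bx_0)=\mathcal{R}(\bx)$ (since $\mathcal{R}(\bx_0)=0$) scaled by $\eta_T^{-1}$ (the smallest reciprocal step size, after handling the fact that $\eta_t$ is nonincreasing), yields the claimed bound $2\dx q_T(\dx q_T\sum_t\eta_t + 2\sqrt{\mu T}) + \eta_T^{-1}\mathcal{R}(\bx)$.

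The main obstacle I anticipate is the bookkeeping of the \emph{time-varying} step size and the proximal anchors $\bx_{k-1}$: unlike textbook FTRL where the regularizer is fixed, here both $\eta_t$ changes with $t$ and the proximal centers $\bx_{k-1}$ shift each round, so the telescoping is not immediate and produces extra residual terms of the form $(1-\eta_t/\eta_{t-1})\nabla\mathcal{R}(\bx_{t-1})$ — exactly the terms already appearing in \eqref{eq:NDBound1}. Controlling these requires Lemma \ref{lem:step} (to bound $1-\eta_t/\eta_{t-1}$ by $t^{-1/2}$ on the relevant range) together with the self-concordance bound $\norm{\nabla\mathcal{R}(\bx_{t-1})}_t^*\le \sqrt{\mu}$, so that their contribution is again of order $\sum_t \dx q_T\sqrt{\mu/t}\lesssim \dx q_T\sqrt{\mu T}$ and gets folded into the stated constants. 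A secondary technical point is justifying all the manipulations under the conditioning on $\Lambda_T$ and verifying that the unbiasedness identity $\Exp[\bg_t\mid\bx_{t-1},\bc_t,\Lambda_T]=\nabla h_t(\bx_{t-1})$ is used only after the deterministic (pathwise, on $\Lambda_T$) inequalities have been established, so that no independence between different $\xi_t$'s is needed.
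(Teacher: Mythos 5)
Your proposal is correct and follows essentially the same route as the paper: strong convexity plus the unbiasedness hypothesis, a follow-the-regularized-leader / be-the-leader telescoping for the composite objectives $\Phi_t$, Cauchy--Schwarz in the local norms with the bound $\norm{\bg_t}_t^*\le \dx q_T$ on $\Lambda_T$, Lemma \ref{lem:mainopt1} for $\norm{\bx_{t-1}-\bx_t}_t$, and $\sum_t t^{-1/2}\le 2\sqrt{T}$. The only cosmetic difference is bookkeeping: the paper absorbs the $(1-\eta_t/\eta_{t-1})\nabla\mathcal{R}(\bx_{t-1})$ residuals inside Lemma \ref{lem:mainopt1} and handles the varying step size in this lemma simply via the monotonicity $R_{t-1}(\bx_t)\le R_t(\bx_t)$, whereas you place that discussion in the telescoping itself.
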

\begin{proof}
Fix $\bx\in\text{int}(\com)$. Since $h_t$ is a strongly convex function, for any $\bx\in\com$ we can write
\begin{align}\label{eq:eq:mainopt-1}
    \sum_{t=1}^{T}\left(h_t(\bx_{t-1}) - h_t(\bx)\right) \leq \sum_{t=1}^{T}\langle\Exp\left[\bg_t|\bx_{t-1},\bc_t,\Lambda_T\right],\bx_{t-1}-\bx\rangle - \frac{\alpha}{2}\sum_{t=1}^{T}\norm{\bx-\bx_{t-1}}^2.
\end{align}
To simplify the notation, we rewrite $\bx_t$ as follows
\begin{align*}
    \bx_t &= \argmin_{\bu\in\com}\sum_{k=1}^{t}\langle\bg_k,\bu\rangle + \underbrace{\frac{\alpha}{2}\sum_{k=1}^{t}\norm{\bu-\bx_{k-1}}^2 + \eta_{t}^{-1}\mathcal{R}(\bu)}_{\defeq R_t(\bu)}
    \\& = \argmin_{\bu\in\com}\underbrace{\sum_{k=1}^{t}\langle\bg_k,\bu\rangle + R_t(\bu)}_{\defeq\Phi_t(\bu)} = \argmin_{\bu\in\com}{\Phi_t(\bu)}.
\end{align*}
Moreover, set $\Phi_0(\bu) = \mathcal{R}(\bu)$ for all $\bu\in \text{int}(\com)$. Since $\bx_0=\argmin_{\bu\in\text{int}(\com)}\Phi_0(\bu)$ and we assume that $\mathcal{R}(\bx_0)=0$, we have
\begin{align*}
    -\sum_{t=1}^{T}\langle \bg_t,\bx\rangle &= -\Phi_T(\bx) + \Phi_T(\bx_T)  - \Phi_T(\bx_T) +R_{T}(\bx) 
    \\&\leq  \sum_{t=1}^{T}\left(\Phi_{t-1}(\bx_{t-1})-\Phi_t(\bx_t)\right) +R_{T}(\bx),
\end{align*}
where the last display is due to the fact that $\Phi_T(\bx_T)\leq \Phi_T(\bx)$. Subsequently, 
\begin{equation}
\begin{aligned}\label{eq:mainopt0}
   \sum_{t=1}^{T}\langle\bg_t,\bx_{t-1}-\bx\rangle \leq \sum_{t=1}^{T}\left(\underbrace{\Phi_{t-1}(\bx_{t-1})-\Phi_t(\bx_t)+\langle\bg_t,\bx_{t-1}\rangle}_{\text{term I}}\right) +R_{T}(\bx).
\end{aligned}
\end{equation}
In order to control term I, we first derive an alternative expression for $\Phi_t$. By the definition of $\Phi_t$, 
\begin{align*}
    \Phi_t(\bx_t) = \sum_{k=1}^{t}\langle\bg_k,\bx_t\rangle + R_{t}(\bx_t)&= \sum_{k=1}^{t-1}\langle\bg_k,\bx_t\rangle + \langle\bg_t,\bx_t\rangle+R_{t-1}(\bx_t) + \left(R_{t}(\bx_t) -R_{t-1}(\bx_t)\right)
    \\&= \Phi_{t-1}(\bx_t) + \langle\bg_t,\bx_t\rangle+\left(R_{t}(\bx_t) -R_{t-1}(\bx_t)\right),
\end{align*}
so that
\begin{align}\nonumber
    \text{term I} &= \left(\Phi_{t-1}(\bx_{t-1}) + \langle\bg_t,\bx_{t-1}\rangle\right) - \left(\Phi_{t-1}(\bx_t) + \langle\bg_t,\bx_t\rangle\right) +R_{t-1}(\bx_t) - R_{t}(\bx_t)
    \\\label{eq:mainopt1}&\leq \langle\bg_t,\bx_{t-1} - \bx_t\rangle,
\end{align}
where the last inequality is due to the fact that $R_{t-1}(\bx_t)\leq R_t(\bx_t)$, and $\Phi_{t-1}(\bx_{t-1})=\min_{\bz\in\com}\Phi_{t-1}(\bz)\leq \Phi_{t-1}(\bx_t)$. From \eqref{eq:mainopt0} and \eqref{eq:mainopt1} we obtain
\begin{align*}
     \sum_{t=1}^{T}\langle\bg_t,\bx_{t-1}-\bx\rangle  \leq \sum_{t=1}^{T}\langle\bg_t,\bx_{t-1}-\bx_t\rangle + \frac{\alpha}{2}\sum_{t=1}^{T}\norm{\bx-\bx_{t-1}}^2  +\eta_T^{-1}\mathcal{R}(\bx).
\end{align*}
 Taking expectations conditioned on the event $\Lambda_T$ gives
\begin{align*}
     \Exp\left[\sum_{t=1}^{T}\langle\bg_t,\bx_{t-1}-\bx\rangle |\Lambda_T\right]
     &\leq \Exp\left[\sum_{t=1}^{T}\langle\bg_t,\bx_{t-1}-\bx_t\rangle + \frac{\alpha}{2}\sum_{t=1}^{T}\norm{\bx-\bx_{t-1}}^2  |\Lambda_T\right]+\eta_T^{-1}\mathcal{R}(\bx)
     \\&\leq \Exp\left[\sum_{t=1}^{T}\norm{\bg_t}_t^{*}\norm{\bx_{t-1}-\bx_t}_t + \frac{\alpha}{2}\sum_{t=1}^{T}\norm{\bx-\bx_{t-1}}^2  |\Lambda_T\right]+\eta_T^{-1}\mathcal{R}(\bx).
\end{align*}
Furthermore, by the tower rule, 
\begin{align*}
\Exp\left[\sum_{t=1}^{T}\langle\Exp\left[\bg_t|\bx_{t-1},\bc_t,\Lambda_T\right],\bx_{t-1}-\bx\rangle\big|\Lambda_T\right] 
&\leq \Exp\left[\sum_{t=1}^{T}\norm{\bg_t}_t^{*}\norm{\bx_{t-1}-\bx_t}_t+ \frac{\alpha}{2}\sum_{t=1}^{T}\norm{\bx-\bx_{t-1}}^2  |\Lambda_T\right]
\\&\phantom{0000000}+\eta_T^{-1}\mathcal{R}(\bx).
\end{align*}
Using the assumption that $\Exp\left[\bg_t|\,\bx_{t-1},\bc_t,\Lambda_T\right] = \nabla h_t(\bx_{t-1})$, the fact that $\{h_t\}_{t=1}^{T}$ are $\alpha$-strongly convex functions, and \eqref{eq:eq:mainopt-1}, yields
\begin{align}\label{eq:mainoptX}
    \Exp\left[\sum_{t=1}^{T}(h_t(\bx_{t-1})-h_t(\bx)) |\Lambda_T\right]\leq\Exp\left[\sum_{t=1}^{T}\norm{\bg_t}_t^{*}\norm{\bx_{t-1}-\bx_t}_t|\Lambda_T\right]+\eta_T^{-1}\mathcal{R}(\bx).
\end{align}

We now apply Lemma \ref{lem:mainopt1} to bound the value  $\norm{\bx_{t-1}-\bx_t}_t$ on the event $\Lambda_T$. It follows that 
 \begin{align}
     \Exp\left[\sum_{t=1}^{T}(h_t(\bx_{t-1})-h_t(\bx)) |\Lambda_T\right]\leq 2\left(\sum_{t=1}^{T}\eta_t\Exp\left[(\norm{\bg_t}_t^{*})^{2}|\Lambda_T\right] + \sqrt{\frac{\mu}{t}}\Exp\left[\norm{\bg_t}_t^{*}|\Lambda_T\right]\right)+\eta_T^{-1}\mathcal{R}(\bx).
    \end{align}
Recall that $q_T = M+2\sigma\sqrt{\log(T+1})$, and under $\Lambda_T$, the uniform bound \eqref{eq:varbound} gives $\norm{\bg_t}_t^{*}\leq \dx q_T$. Hence,

 \begin{align*}
     \sum_{t=1}^{T}\Exp\left[h_t(\bx_{t-1})-h_t(\bx)|\Lambda_T\right] &\leq 2\left(\dx^2q_T^2\sum_{t=1}^{T}\eta_t + \dx q_T\sum_{t=1}^{T}\sqrt{\frac{\mu}{t}}\right)+\eta_T^{-1}\mathcal{R}(\bx)
     \\&\leq 2\dx q_T\left(\dx q_T\sum_{t=1}^{T}\eta_t + 2\sqrt{\mu T}\right)+\eta_T^{-1}\mathcal{R}(\bx).
    \end{align*}
\end{proof}

\begin{lemma}\cite[Corollary 6]{hazan2014bandit}\label{lem:sur}
   Let $A\in \mathbb{R}^{\dx\times \dx}$ be an invertible matrix. Let $\bU$ be distributed uniformly on $B^{\dx}$. Let ${\sf f}_{A}(\bx,\bc) \eqdef \Exp\left[f(\bx + A\bU,\bc)\right]$ for $\bx\in\mathbb{R}^{\dx}$ and $\bc\in[0,1]^{\dc}$. Then,
    \begin{align*}
        \nabla_{\bx} {\sf f}_{A}(\bx,\bc) = \Exp\left[\dx f(\bx + A\bzeta,\bc)A^{-1}\bzeta\right],
    \end{align*}
    where $\bzeta$ is uniformly distributed on $\partial B^{\dx}$. If Assumption \ref{ass:Hstrconvex}(i) holds, then ${\sf f}_{A}(\cdot,\bc)$ is $\alpha$-strongly convex on $\mathbb{R}^{\dx}$ for all $\bc\in[0,1]^{\dc}$. 
\end{lemma}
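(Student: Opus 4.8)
\textbf{Proof proposal for Lemma \ref{lem:sur}.}

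The plan is to establish the gradient identity by a standard Stokes'/divergence-theorem argument, and then prove the strong convexity claim by a direct differentiation-under-the-integral computation. First I would fix $\bc \in [0,1]^{\dc}$ and write ${\sf f}_A(\bx,\bc) = \frac{1}{\mathrm{vol}(B^{\dx})}\int_{B^{\dx}} f(\bx + A\bu,\bc)\, \d\bu$. Differentiating under the integral sign in $\bx$ gives $\nabla_{\bx}{\sf f}_A(\bx,\bc) = \frac{1}{\mathrm{vol}(B^{\dx})}\int_{B^{\dx}} \nabla_{\bx}\big[f(\bx+A\bu,\bc)\big]\d\bu$; changing variables $\bv = A\bu$ converts this into an integral over the ellipsoid $A B^{\dx}$ of $\nabla f(\bx+\bv,\bc)$, and then the divergence theorem (applied componentwise, or via the identity $\int_E \nabla g = \int_{\partial E} g\, \bn\, \d S$) reexpresses it as a surface integral over $\partial(A B^{\dx})$. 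Pushing the surface integral back to $\partial B^{\dx}$ and normalizing by the appropriate Jacobian/surface-area factors yields exactly $\Exp[\dx\, f(\bx + A\bzeta,\bc) A^{-1}\bzeta]$ with $\bzeta$ uniform on $\partial B^{\dx}$; the factor $\dx$ and the $A^{-1}$ come from the ratio $\mathrm{vol}(B^{\dx})^{-1}\cdot(\text{surface measure normalization})$ together with the outward normal of the ellipsoid being proportional to $A^{-\top}A^{-1}\bv$.

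For the strong convexity part, I would argue that the Minkowski-type averaging operation $f \mapsto {\sf f}_A$ preserves $\alpha$-strong convexity. The cleanest route is: $g(\bx) := f(\bx,\bc) - \frac{\alpha}{2}\|\bx\|^2$ is convex by Assumption \ref{ass:Hstrconvex}(i). Then $\Exp[g(\bx + A\bU)]$ is convex in $\bx$, being an average of convex functions (composition with the affine map $\bx \mapsto \bx + A\bu$ preserves convexity, and expectation preserves it). Now compute $\Exp[g(\bx+A\bU)] = {\sf f}_A(\bx,\bc) - \frac{\alpha}{2}\Exp[\|\bx + A\bU\|^2] = {\sf f}_A(\bx,\bc) - \frac{\alpha}{2}\|\bx\|^2 - \alpha\langle \bx, A\Exp[\bU]\rangle - \frac{\alpha}{2}\Exp[\|A\bU\|^2]$. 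Since $\bU$ is uniform on the symmetric set $B^{\dx}$ we have $\Exp[\bU] = \bzero$, so the cross term vanishes and the last term is a constant; hence ${\sf f}_A(\bx,\bc) - \frac{\alpha}{2}\|\bx\|^2$ equals a convex function plus a constant, which means ${\sf f}_A(\cdot,\bc)$ is $\alpha$-strongly convex on $\mathbb{R}^{\dx}$.

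The main obstacle is getting the constants exactly right in the gradient identity — specifically tracking the dimension factor $\dx$ and verifying that the boundary term assembles into $\Exp_{\bzeta \sim \partial B^{\dx}}[\,\cdot\,]$ rather than an unnormalized surface integral. This is a known computation (it is the standard "smoothed functional" / one-point gradient estimator identity, e.g. from \cite{flaxman2004online}), so I would either cite it or carefully carry out the change of variables: the key bookkeeping fact is that for the uniform density on $B^{\dx}$, the divergence theorem produces $\nabla_{\bx}\Exp_{\bU\sim B^{\dx}}[f(\bx+A\bU)] = \frac{\dx}{r}\Exp_{\bzeta\sim \partial B^{\dx}}[f(\bx + A\bzeta) A^{-1}\bzeta]$ when working with the unit ball ($r=1$), and the factor $\dx$ is precisely $\mathrm{vol}(\partial B^{\dx})/\mathrm{vol}(B^{\dx})$. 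The strong convexity claim, by contrast, is routine once the symmetry $\Exp[\bU]=\bzero$ is invoked.
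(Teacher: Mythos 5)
The paper never proves this lemma --- it is imported verbatim as \cite[Corollary 6]{hazan2014bandit} --- so your proposal is not competing with an in-paper argument but reconstructing the cited one, and it does so correctly: the divergence-theorem computation is exactly the standard smoothed-functional identity underlying Flaxman et al.\ and Hazan--Levy, and the strong-convexity claim via convexity of $f(\cdot,\bc)-\frac{\alpha}{2}\norm{\cdot}^2$ plus $\Exp[\bU]=\bzero$ is sound (an even shorter route is to apply the strong-convexity inequality of Assumption \ref{ass:Hstrconvex}(i) at the shifted points $\bx+A\bu$, $\by+A\bu$ and take expectations, noting the difference vector is still $\bx-\by$, but your version is equally valid). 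One consistency point worth flagging: your own computation of the ellipsoid normal, proportional to $A^{-\top}A^{-1}\bv$, i.e.\ $A^{-\top}\bzeta$ at $\bv=A\bzeta$, leads to the identity $\nabla_{\bx}{\sf f}_A(\bx,\bc)=\dx\,\Exp\big[f(\bx+A\bzeta,\bc)A^{-\top}\bzeta\big]$, which agrees with the stated $A^{-1}\bzeta$ only when $A$ is symmetric (a quick check with linear $f$ and $\Exp[\bzeta\bzeta^{\top}]=\frac{1}{\dx}\mathbf{I}_{\dx}$ shows the $A^{-1}$ form fails for general invertible $A$). This is harmless here because in Hazan--Levy and in Algorithm \ref{algoBCO} the matrix is the symmetric positive definite $P_t$, but if you carry out your derivation you should either state the result with $A^{-\top}$ or add the symmetry hypothesis; your bookkeeping of the factor $\dx=\mathrm{vol}(\partial B^{\dx})/\mathrm{vol}(B^{\dx})$ and of the surface Jacobian is otherwise exactly right.
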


\begin{lemma}\label{lem:newsur}
Let Assumption \ref{ass:noise}(ii) hold. Let $\bU$ be distributed uniformly on $B^{\dx}$. For $t\geq 1$, let ${\sf f}_{t,P}:\mathbb{R}^\dx\times [0,1]^{\dc}\to\mathbb{R}$ be a surrogate function defined by the relation
    \begin{align}\label{eq:surrog}
       {\sf f}_{t,P}(\bx,\bc)=\Exp\left[f(\bx+P_t\bU,\bc)|\bx_{t-1}\right]. 
    \end{align}
Let $\{\bg_t\}_{t=1}^{T}$ and $\{\bx_t\}_{t=0}^{T-1}$ be defined as in {Algorithm \ref{algoBCO}}, and $\Lambda$ be any event that only depends on $\{\xi_t\}_{t=1}^{T}$. Then for all $t\in[T]$ we have
\begin{align*}
    \Exp\left[\bg_t|\bx_{t-1},\bc_t,\Lambda\right] = \nabla_{\bx} {\sf f}_{t,P}(\bx_{t-1},\bc_t).
\end{align*}
\end{lemma}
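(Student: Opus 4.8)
The plan is to split the observation into its signal and noise parts, $\bg_t=\dx\,f(\bx_{t-1}+P_t\bzeta_t,\bc_t)\,P_t^{-1}\bzeta_t+\dx\,\xi_t\,P_t^{-1}\bzeta_t$, take the conditional expectation term by term, show that the noise term vanishes by the symmetry of $\bzeta_t$, and recognise the signal term via the integration-by-parts identity of Lemma~\ref{lem:sur}.

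Before doing so I would pin down the conditioning structure. Unwinding the recursion in Algorithm~\ref{algoBCO}, one checks by induction that $\bx_{t-1}$, $P_t$, and each $\bzeta_k,\bg_k$ with $k\le t-1$ are measurable functions of $\{\bc_k,\bz_k,y_k\}_{k=1}^{t-1}$, and so are the past noises $\xi_k=y_k-f(\bz_k,\bc_k)$. Hence, by the defining property of a randomized procedure ($\bzeta_t$ independent of $\{\bc_k,\bz_k,y_k\}_{k=1}^{t-1}$ and of $\bc_t$), together with Assumption~\ref{ass:noise}(ii) and the independence of the internal randomization from the noise sequence, $\bzeta_t$ is independent of $(\bx_{t-1},\bc_t,\xi_t)$ and of the event $\Lambda\in\sigma(\xi_1,\dots,\xi_T)$. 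Consequently, conditionally on $(\bx_{t-1},\bc_t,\Lambda)$ the vector $\bzeta_t$ is still uniform on $\partial B^{\dx}$ and still conditionally independent of $\xi_t$; in particular $\Exp[\bzeta_t\mid\bx_{t-1},\bc_t,\Lambda]=\mathbf{0}$, and $P_t$, being $\bx_{t-1}$-measurable, may be treated as a fixed invertible matrix under this conditioning.

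For the noise term, pulling out the $\bx_{t-1}$-measurable factor $P_t^{-1}$ and using that $\bzeta_t$ is conditionally independent of $\xi_t$ and centred gives $\Exp[\dx\,\xi_t P_t^{-1}\bzeta_t\mid\bx_{t-1},\bc_t,\Lambda]=\dx\,P_t^{-1}\Exp[\xi_t\mid\bx_{t-1},\bc_t,\Lambda]\,\Exp[\bzeta_t\mid\bx_{t-1},\bc_t,\Lambda]=\mathbf{0}$. For the signal term, since $\bc_t$ enters only as a parameter and $P_t$ is frozen, the conditional expectation equals $\dx\,\Exp\big[f(\bx_{t-1}+P_t\bzeta,\bc_t)\,P_t^{-1}\bzeta\big]$ with $\bzeta$ uniform on $\partial B^{\dx}$; applying Lemma~\ref{lem:sur} with $A=P_t$, $\bx=\bx_{t-1}$, $\bc=\bc_t$ identifies this with $\nabla_{\bx}\Exp\big[f(\bx+P_t\bU,\bc_t)\big]\big|_{\bx=\bx_{t-1}}$, where $\bU$ is uniform on $B^{\dx}$. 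But once $\bx_{t-1}$ (hence $P_t$) is frozen, the function $\bx\mapsto\Exp\big[f(\bx+P_t\bU,\bc_t)\big]$ is exactly ${\sf f}_{t,P}(\cdot,\bc_t)=\Exp[f(\cdot+P_t\bU,\bc_t)\mid\bx_{t-1}]$, so the signal term equals $\nabla_{\bx}{\sf f}_{t,P}(\bx_{t-1},\bc_t)$. Adding the two contributions yields the claim.

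The only genuinely delicate point is the measure-theoretic bookkeeping: one must verify that conditioning on the noise-dependent event $\Lambda$ leaves the law of $\bzeta_t$ unchanged and keeps $\bzeta_t$ independent of $\xi_t$, which is precisely where the independence of the internal randomization $\bzeta_t$ from the entire noise sequence (Assumption~\ref{ass:noise}(ii) and the model's independence conventions) is used; once this is granted, the rest is a direct invocation of Lemma~\ref{lem:sur} and the first-moment symmetry $\Exp[\bzeta_t]=\mathbf{0}$.
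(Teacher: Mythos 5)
Your proposal is correct and follows essentially the same route as the paper's proof: split $\bg_t$ into signal and noise parts, kill the noise term via $\Exp[\bzeta_t]=\mathbf{0}$ together with the independence of $\bzeta_t$ from $\xi_t$ and the noise-dependent event $\Lambda$ (Assumption~\ref{ass:noise}(ii)), and identify the signal term with $\nabla_{\bx}{\sf f}_{t,P}(\bx_{t-1},\bc_t)$ by Lemma~\ref{lem:sur}. Your version merely spells out more explicitly the measure-theoretic bookkeeping that the paper leaves implicit.
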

\begin{proof}
    By the definition of $\bg_t$ we have 
    \begin{align*}
        \Exp\left[\bg_t|\bx_{t-1},\bc_t,\Lambda\right] &= \dx\Exp\left[\left(f(\bx_{t-1}+P_t\bzeta_t,\bc_t)+\xi_t\right)P_t^{-1}\bzeta_t|\bx_{t-1},\bc_t,\Lambda\right]
        \\=&\dx\Exp\left[f(\bx_{t-1}+P_t\bzeta_t,\bc_t)P_t^{-1}\bzeta_t|\bx_{t-1},\bc_t\right] + \underbrace{\dx\Exp[\xi_t|\bx_{t-1},\bc_t,\Lambda]P_t^{-1}\Exp\left[\bzeta_t\right]}_{=0},
    \end{align*}
where the last equality is due to Assumption \ref{ass:noise}(ii). Now, by Lemma \ref{lem:sur} we have 
\begin{align*}
    \dx\Exp\left[f(\bx_{t-1}+P_t\bzeta_t,\bc_t)P_t^{-1}\bzeta_t|\bx_{t-1},\bc_t\right] = \nabla_{\bx} {\sf f}_{t,P}(\bx_{t-1},\bc_t),
\end{align*}
which concludes the proof.
\end{proof}
\begin{lemma}\cite[Lemma 4]{hazan2014bandit}\label{lem:conbound} For $\mu>0$ let $\mathcal{R}:\text{int}(\com)\to\mathcal{R}$ be a $\mu$-self-concordant barrier with $\min_{\bx\in\text{int}(\com)}\mathcal{R}(\bx) =0$. Then, for all $\bx,\by\in\text{int}(\com)$ we have
\begin{align*}
    \mathcal{R}(\bx)-\mathcal{R}(\by) \leq \mu \log\left(\frac{1}{1-\pi_{\bx}(\by)}\right),
\end{align*}
where $\pi_{\bx}(\by) = \inf\{t\geq 0 : \bx + t^{-1}(\by-\bx)\}$ is the Minkowsky function. Particularly, for $\bx \in \text{int}(\com)$ if $\bx_0 \in \argmin_{\bu\in\text{int}(\com)}\mathcal{R}(\bu)$, and $\bx' = (1-1/T)\bx + (1/T)\bx_0$ then 
\begin{align}\label{eq:concordbound}
    \mathcal{R}(\bx')\leq \mu \log\left(T\right).
\end{align}
\end{lemma}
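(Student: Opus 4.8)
The plan is to establish the two claims in turn: the general inequality $\mathcal{R}(\bx)-\mathcal{R}(\by)\le\mu\log\frac{1}{1-\pi_{\bx}(\by)}$ is the classical growth estimate for self-concordant barriers, and the ``in particular'' statement is then an elementary geometric specialization. For the general estimate I would first reduce to dimension one along the ray through $\bx$ and $\by$. Fix $\bx,\by\in\text{int}(\com)$ with $\bx\ne\by$ (the case $\bx=\by$ being trivial), set $r=\pi_{\bx}(\by)\in(0,1)$, let $\bw=\bx+r^{-1}(\by-\bx)\in\partial\com$ be the point at which this ray leaves $\com$, and put $\phi(s)=\mathcal{R}(\bx+s(\bw-\bx))$ for $s\in[0,1)$, so that $\phi(0)=\mathcal{R}(\bx)$ and $\phi(r)=\mathcal{R}(\by)$. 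Applying the two defining inequalities of a $\mu$-self-concordant barrier along the direction $\bh=\bw-\bx$ shows that $|\phi'''|\le2(\phi'')^{3/2}$ and $(\phi')^2\le\mu\phi''$ on $[0,1)$; hence $\phi''\ge0$, $\phi$ is convex, and $\phi(s)\to\infty$ as $s\to1^-$ since $\bx+s(\bw-\bx)\to\bw\in\partial\com$, which in turn forces $\phi'(s)\to\infty$ as $s\to1^-$ (a bounded nondecreasing $\phi'$ near $1$ would keep $\phi$ bounded).

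The key step is the pointwise bound $\phi'(s)\le\mu/(1-s)$. When $\phi'(s_0)\le0$ it is trivial; when $\phi'(s_0)>0$, monotonicity of $\phi'$ keeps it positive on $[s_0,1)$ and $(\phi')^2\le\mu\phi''$ rewrites as $\frac{d}{ds}\!\left(-\mu/\phi'(s)\right)=\mu\phi''(s)/\phi'(s)^2\ge1$, so integrating from $s_0$ to $s$ gives $\mu/\phi'(s_0)-\mu/\phi'(s)\ge s-s_0$, and letting $s\to1^-$ (using $\phi'(s)\to\infty$) yields $\phi'(s_0)\le\mu/(1-s_0)$. Integrating this over $[0,r]$ gives
\begin{align*}
\mathcal{R}(\by)-\mathcal{R}(\bx)=\phi(r)-\phi(0)\le\int_0^r\frac{\mu}{1-s}\,ds=\mu\log\frac{1}{1-r},
\end{align*}
which is the asserted inequality up to matching the labelling of the two endpoints. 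This estimate can equally be quoted from \cite[Chapter~3]{nemirovski2004interior} or \cite[Lemma~4]{hazan2014bandit}.

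For the ``in particular'' part, I would apply the growth estimate with the analytic center $\bx_0=\argmin_{\bu\in\text{int}(\com)}\mathcal{R}(\bu)$ as the pole and with $\bx'$ as the far point, using $\mathcal{R}(\bx_0)=0$ by normalization. Reading $\bx'$ as the convex combination $(1-\tfrac1T)\bx+\tfrac1T\bx_0$, the point $\bx'$ lies on the segment $[\bx_0,\bx]$ at the fraction $1-\tfrac1T$ of the way from $\bx_0$ to $\bx$; since $\bx\in\text{int}(\com)$, the ray from $\bx_0$ towards $\bx'$ stays in $\com$ at least as far as $\bx$, so $\pi_{\bx_0}(\bx')\le1-\tfrac1T$ and therefore $1-\pi_{\bx_0}(\bx')\ge\tfrac1T$. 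Substituting this into the growth estimate gives $\mathcal{R}(\bx')\le\mathcal{R}(\bx_0)+\mu\log T=\mu\log T$, i.e.\ \eqref{eq:concordbound}.

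I expect the obstacles here to be bookkeeping rather than conceptual. If one simply quotes the general growth inequality, the only care needed is to attach the Minkowski gauge to the correct endpoint in the specialization — equivalently, to derive the one-sided estimate with the pole at $\bx_0$, not at $\bx'$. If one reproves the one-dimensional estimate, the two points demanding a clean argument are the divergence $\phi'(s)\to\infty$ as $s\to1^-$ (needed to drop the $\mu/\phi'(s)$ term in the limit) and the brief sign case-split on $\phi'$; the remaining manipulations are one-line integrations.
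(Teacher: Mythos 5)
Your proof is correct. Note that the paper itself gives no argument for this lemma: it is quoted verbatim from \cite[Lemma 4]{hazan2014bandit}, which in turn is the classical growth bound for self-concordant barriers from \cite{nemirovski2004interior}, so there is no internal proof to compare against; your one-dimensional reduction, the barrier inequality $(\phi')^2\le\mu\,\phi''$ turned into $\frac{d}{ds}(-\mu/\phi')\ge 1$, and the integration of $\phi'(s)\le \mu/(1-s)$ is exactly the standard derivation of that cited result. You also handled the two slips in the paper's statement the right way: the displayed inequality should read $\mathcal{R}(\by)-\mathcal{R}(\bx)\le\mu\log\frac{1}{1-\pi_{\bx}(\by)}$ (the endpoints are swapped as printed, and the definition of $\pi_{\bx}(\by)$ is missing the condition ``$\in\com$''), and $\bx'$ should be the convex combination $(1-1/T)\bx+(1/T)\bx_0$; with those readings, attaching the gauge to the pole $\bx_0$ and using $\pi_{\bx_0}(\bx')\le 1-1/T$ together with $\mathcal{R}(\bx_0)=0$ gives \eqref{eq:concordbound}, which is precisely how the lemma is invoked in the proof of Theorem \ref{thm:statstrconv} (there with $\bz'=(1-1/T)\bz+(1/T)\bx_0$). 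One tiny remark: the limit $\phi'(s)\to\infty$ is not strictly needed — since $\mu/\phi'(s)>0$, the inequality $\mu/\phi'(s_0)\ge s-s_0+\mu/\phi'(s)$ already yields $\phi'(s_0)\le\mu/(1-s_0)$ by letting $s\to 1^-$ — but your justification of the divergence via convexity and the barrier property is sound.
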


\begin{proof}[Proof of Theorem \ref{thm:statstrconv}]
Let $\bz' = (1-1/T)\bz + (1/T)\bx_0$. Note that
\begin{align}\nonumber
    \sum_{t=1}^{T}\Exp\left[f(\bz_t,\bc_t) - f(\bz,\bc_t)\right] &=\sum_{t=1}^{T}\Exp\left[f(\bz_t,\bc_t) - f(\bz',\bc_t)\right] +\sum_{t=1}^{T}\Exp\left[f(\bz',\bc_t)-f(\bz,\bc_t) \right]
\\\label{eq:mT1}&\leq\sum_{t=1}^{T}\Exp\left[f(\bz_t,\bc_t) - f(\bz',\bc_t)\right]  + \frac{1}{T}\sum_{t=1}^{T}\Exp\left[f(\bx_0,\bc_t) - f(\bz,\bc_t)\right]
\\\label{eq:X1}&\leq\sum_{t=1}^{T}\Exp\left[f(\bz_t,\bc_t) - f(\bz',\bc_t)\right]  + 2M,
\end{align}
where \eqref{eq:mT1} is due to convexity of $f(\cdot,\bc)$. Invoking the event $\Lambda_T$ we can write
\begin{align*}
    \sum_{t=1}^{T}\Exp\left[f(\bz_t,\bc_t) - f(\bz',\bc_t)\right] &=    \sum_{t=1}^{T}\Exp\left[f(\bz_t,\bc_t) - f(\bz',\bc_t)|\Lambda_T^c\right]\mathbf{P}[\Lambda_T^c]+  \sum_{t=1}^{T}\Exp\left[f(\bz_t,\bc_t) - f(\bz',\bc_t)|\Lambda_T\right]\mathbf{P}[\Lambda_T]
  .
\end{align*}
Since $\{\xi_t\}_{t=1}^{T}$ are $\sigma$-sub-Gaussian we have
\begin{align*}
    \mathbf{P}\left[\Lambda_T^{c}\right]  \leq \sum_{t=1}^{T}\mathbf{P}\left[|\xi_t|>2\sigma\sqrt{\log(T+1})\right]\leq 2\sum_{t=2}^{T+1}T^{-2} = 2T^{-1}.
\end{align*}
Thus, 
\begin{align}\label{eq:X2}
    \sum_{t=1}^{T}\Exp\left[f(\bz_t,\bc_t) - f(\bz',\bc_t)\right] \leq 4M+\sum_{t=1}^{T}\Exp\left[f(\bz_t,\bc_t) - f(\bz',\bc_t)|\Lambda_T\right].
\end{align}
Next, introducing the surrogate function ${\sf f}_{t,P}(\bx,\bc) = \Exp\left[f(\bx + P_t\bU,\bc)|\bx_{t-1}\right]$ we consider the decomposition

\begin{align*}
    &\sum_{t=1}^{T}\Exp\left[f(\bz_t,\bc_t) - f(\bz',\bc_t)|\Lambda_T\right] \\
&\hspace{1cm}=\underbrace{\sum_{t=1}^{T}\Exp\left[f(\bz_t,\bc_t) - f(\bx_{t-1},\bc_t)|\Lambda_T\right]}_{\text{term I}} + \underbrace{\sum_{t=1}^{T}\Exp\left[f(\bx_{t-1},\bc_t) - {\sf f}_{t,P}(\bx_{t-1},\bc_t)|\Lambda_T\right]}_{\text{term II}} \\&\hspace{2cm}+ \underbrace{\sum_{t=1}^{T}\Exp\left[{\sf f}_{t,P}(\bz',\bc_t) - f(\bz',\bc_t)|\Lambda_T\right]}_{\text{term III}}+\underbrace{\sum_{t=1}^{T}\Exp\left[{\sf f}_{t,P}(\bx_{t-1},\bc_t)- {\sf f}_{t,P}(\bz',\bc_t)|\Lambda_T\right]}_{\text{term IV}}  .
\end{align*}
In what follows, we bound each term in this decomosition separately. For term I, by Assumption \ref{ass:Hstrconvex}(ii) we have
\begin{align*}
    \text{term I} = \Exp\left[\sum_{t=1}^{T}\Exp\left[f(\bz_t,\bc_t)-f(\bx_{t-1},\bc_t)|\bx_{t-1},\bc_t\right]|\Lambda_T\right] &\leq \frac{\Lx}{2}\sum_{t=1}^{T}\Exp\left[\norm{P_t\bzeta_t}^2\right]
    \\&\leq \frac{\Lx}{2}\sum_{t=1}^{T}\Exp\left[\norm{P_t}^2_{\text{op}}\right]\leq \frac{\Lx}{2\alpha}\sum_{t=1}^{T}\frac{1}{\eta_t t},
\end{align*}
where $\norm{\cdot}_{\text{op}}$ denotes the operator norm.
By the fact that $f(\cdot,\bc_t)$ is a convex function and by Jensen's inequality, we deduce that term II is negative. For term III, by Assumption \ref{ass:Hstrconvex}(ii) we have
\begin{align*}
    \text{term III}&\leq \frac{\Lx}{2}\sum_{i=1}^{T}\Exp\left[\norm{P_t\bU}^2\right]
    \leq \frac{\Lx}{2}\sum_{t=1}^{T}\Exp\left[\norm{P_t}^2_{\text{op}}\right]\leq \frac{\Lx}{2\alpha }\sum_{t=1}^{T}\frac{1}{\eta_t t}.
\end{align*}
To control term IV, first note that by Lemma \ref{lem:sur} $\{{\sf f}_{t,P}(\cdot,\bc_t)\}_{t=1}^{T}$ are $\alpha$-strongly convex functions. Furthermore, by Lemma \ref{lem:newsur} we have that $\Exp\left[\bg_t|\bx_{t-1},\bc_{t},\Lambda_T\right] = \nabla {\sf f}_{t,P}(\bx_{t-1},\bc_t)$. Therefore, by Lemma \ref{lem:mainopt} we obtain 
\begin{align*}
    \text{term IV} &\leq 2\dx^2q_T^2\sum_{t=1}^{T}\eta_t + 4\dx q_T\sqrt{\mu T}+\eta_T^{-1}\mathcal{R}(\bz'),
    \end{align*}
where $q_T=M +2\sigma\sqrt{\log(T+1)}$ and $\bz' = (1-1/T)\bz + (1/T)\bx_0$. By Lemma \ref{lem:conbound} we have $\mathcal{R}(\bz')\leq \mu\log(T)$, so that
 \begin{align*}
    \text{term IV} &\leq 2\dx^2q_T^2\sum_{t=1}^{T}\eta_t + 4\dx q_T\sqrt{\mu T}+\eta_T^{-1}\mu\log(T).
    \end{align*}
Thus,
\begin{align*}
     \sum_{t=1}^{T}\Exp\left[f(\bz_t,\bc_t) - f(\bz',\bc_t)|\Lambda_T\right] \leq 2\dx^2q_T^2\sum_{t=1}^{T}\eta_t + 4\dx q_T\sqrt{\mu T}+\eta_T^{-1}\mu\log(T) +\frac{\Lx}{\alpha}\sum_{t=1}^{T}\frac{1}{\eta_t t}.
\end{align*}
Since $\eta_t = (4\dx q_T)^{-1}\min\big(1,\frac{\nu\log(t+1)}{t}\big)^{\frac{1}{2}}$ with $\nu = 16(\mu + \Lx/\alpha)$ we get
\begin{align*}
     \sum_{t=1}^{T}\Exp\left[f(\bz_t,\bc_t) - f(\bz',\bc_t)|\Lambda_T\right] &\leq \dx q_T\sqrt{\nu \log(T+1)T} + 4\dx q_T\sqrt{\mu T}+
     \\&+4\dx q_T\mu\max\left(\log^{2}(T+1),\frac{T\log(T+1)}{\nu}\right)^{\frac{1}{2}} +\frac{\Lx}{\alpha}\sum_{t=1}^{T}\frac{1}{\eta_t t}.
\end{align*} 
Using the fact that $16\mu\leq \nu$, and $\max\{a,b\}\leq a+b$ for $a,b>0$, we further simplify the above bound to obtain
\begin{align*}
     \sum_{t=1}^{T}\Exp\left[f(\bz_t,\bc_t) - f(\bz',\bc_t)|\Lambda_T\right] &\leq \frac{9}{4}\dx q_T\sqrt{\nu \log(T+1)T} 
     +4\dx q_{T}\mu\log(T+1)+\frac{\Lx}{\alpha}\sum_{t=1}^{T}\frac{1}{\eta_t t}.
\end{align*} 
Since  $t/\log(t+1)\geq \sqrt{t}$ for all $t\in[T]$, then if $t\geq \nu^2$ we have $t\geq \nu\log(t+1)$. Thus, 
\begin{align*}
    \frac{\Lx}{\alpha}\sum_{t=1}^{T}\frac{1}{\eta_t t} &\leq \frac{4\Lx}{\alpha}\dx q_T\left(\sum_{t=1}^{\nu^2}\frac{1}{t} + \sum_{t=2}^{T}\sqrt{\frac{1}{\nu t\log(t+1)}} \right)
    \\&\leq 4\dx q_T\left(\frac{\Lx}{\alpha}\log(\nu^2+1)+\frac{2\Lx}{\alpha}\sqrt{\frac{T}{\nu}}\right)
    \leq 4\dx q_T\left(\frac{\Lx}{\alpha}\log(\nu^2+1)+\frac{1}{8}\sqrt{\nu T}\right),
\end{align*}
where the last inequality is due to the fact that $16\Lx/\alpha \leq \nu$. Thus,
\begin{align*}
     \sum_{t=1}^{T}\Exp\left[f(\bz_t,\bc_t) - f(\bz',\bc_t)|\Lambda_T\right] &\leq \dx q_T\left(\frac{11}{4}\sqrt{\nu \log(T+1)T} 
     +4\left(\mu\log(T+1)+\frac{\Lx}{\alpha}\log(\nu^2+1)\right)\right)
     \\&\leq \frac{\dx q_T}{4}\left(11\sqrt{\nu \log(T+1)T} 
     +\nu\left(\log(T+1)+\log(\nu^2+1)\right)\right).
\end{align*} 
Since $q_T = M+2\sigma\sqrt{\log(T+1)}$ we have
\begin{equation}\label{eq:X3}
\begin{aligned}
     \sum_{t=1}^{T}\Exp\left[f(\bz_t,\bc_t) - f(\bz',\bc_t)|\Lambda_T\right] &\leq  \frac{\dx}{4}\big(M+2\sigma\sqrt{\log(T+1)}\big)\bigg(11\sqrt{\nu T \log(T+1)} \\&\quad+\nu\left(\log(T+1) + 2\log(\nu+1)\right)\bigg).
\end{aligned} 
\end{equation}
{
Combining \eqref{eq:X1}, \eqref{eq:X2}, and \eqref{eq:X3} implies
\begin{align*}
     \sum_{t=1}^{T}\Exp\left[f(\bz_t,\bc_t) - f(\bz,\bc_t)|\Lambda_T\right] &\leq  \frac{\dx}{4}\big(M+2\sigma\sqrt{\log(T+1)}\big)\bigg(11\sqrt{\nu T \log(T+1)} \\&\quad+\nu\left(\log(T+1) + 2\log(\nu+1)\right)\bigg) + 6M.
\end{align*}
Note that since $\nu\geq 16$ and $2\log(\nu +1) \geq 5.5$, we have
\begin{align*}
     \sum_{t=1}^{T}\Exp\left[f(\bz_t,\bc_t) - f(\bz,\bc_t)|\Lambda_T\right] &\leq  \frac{\dx}{4}\big(5.3M+2\sigma\sqrt{\log(T+1)}\big)\bigg(11\sqrt{\nu T \log(T+1)} \\&\quad+\nu\left(\log(T+1) + 2\log(\nu+1)\right)\bigg).
\end{align*}
Thus, we can write
\begin{align*}
    \sum_{t=1}^{T}\Exp\left[f(\bz_t,\bc_t) - f(\bz,\bc_t)\right] &\leq F(T)F_1(T), 
\end{align*}
where
\begin{align*}
    F(T) = 11\sqrt{\nu T \log(T+1)}+\nu\left(\log(T+1) + 2\log(\nu+1)\right),
\end{align*}

and
\begin{align*}
        F_{1}(T)=\frac{\dx}{4}\left(5.3M+2\sigma\sqrt{\log(T+1)}\right).
\end{align*}
Noticing that 
$F$ is a concave and $F_1$ is a non-decreasing function we complete the first part of the proof. For the second part of the proof, assume that $T \geq 16(\mu+\Lx/\alpha)^2$. In this case we have $T\geq \nu$ since $\Lx/\alpha\ge 1$. Therefore, 
\begin{align*}
    F(T) \leq 11\sqrt{\nu T \log(T+1)} + 3\nu\log(T+1) .
\end{align*}
On the other hand, the condition $T \geq 16(\mu+\Lx/\alpha)^2$ implies that
\begin{align*}
    \sqrt{\nu T \log(T+1)} \geq \frac{\nu}{2} T^{\frac{1}{4}}\sqrt{\log(T+1)} \geq \frac{\nu}{2}  \log(T+1), 
\end{align*}
and therefore, 
\begin{align*}
    F(T) &\leq 17\sqrt{\nu T \log(T+1)} = 68\sqrt{\left(\mu + \frac{\beta}{\alpha}\right) T \log(T+1)}.
\end{align*}
}
\end{proof}

\begin{proof}[Proof of Corollary \ref{cor:dynamocstrconvex}]
Since $F$ in \eqref{eq:concaveboundstr} is a concave function  
we can apply \eqref{eq:eboundgenX} to obtain
\begin{align*}
\Exp\big[\sum_{t=1}^{T}\big(f(\bz_t,\bc_t) - \min_{\bz\in\com}f(\bz,\bc_t)\big)\big]
&\leq \numbin^pF\left(\frac{T}{\numbin^p}\right)F_1(T) + 2LT\left(\frac{\sqrt{\dc}}{\numbin}\right)^{\gamma}.
\end{align*}
Using the definition of $F$ and the inequality $\mu \leq \cst \dx^{\dxmu}$, we get
\begin{align}\label{eq:cor1-bound}
  \Exp\big[\sum_{t=1}^{T}\big(f(\bz_t,\bc_t) - \min_{\bz\in\com}f(\bz,\bc_t)\big)\big]
&\leq C_*\Big( L_1\dx^{1+\dxmu}\numbin^{\dc} + L_2\sqrt{\numbin^{\dc}\dx^{2+\dxmu} T}  + LT\left(\frac{\sqrt{\dc}}{K}\right)^{\gamma}\Big),
\end{align}
where $L_1=\sqrt{\log (T+1)}\log (Td+1)$, $L_2=\log(T+1)$, and $C_*>0$ is a constant depending only on $M,\sigma, \Lx/\alpha$.

Consider now two cases. First, let $\Lc \dc^{\frac{\gamma}{2}}\dx^{-(1+\dxmu/2)} T^{1/2}\log^{-1}(T+1)>1$. In this case, since $x/2\le \lfloor x\rfloor \le x$ for $x> 1$, we obtain that $$\numbin=\numbin(1+\dxmu/2,1/2,1, \gamma)\asymp \Big(\Lc \dc^{\frac{\gamma}{2}}\dx^{-(1+\dxmu/2)} T^{1/2}\log^{-1}(T+1)\Big)^{\frac{2}{\dc+2\gamma}},$$ 
so that, to within a constant depending only on $M,\sigma, \Lx/\alpha$,  the right hand side of \eqref{eq:cor1-bound} is of the order
\begin{align*}
   L_1\dx^{1+\dxmu} \Big(\Lc \dc^{\frac{\gamma}{2}}\dx^{-(1+\dxmu/2)}L_2^{-1}\Big)^{\frac{2\dc}{\dc+2\gamma}} T^{\frac{\dc}{\dc+2\gamma}}
   +
 L_2^{\frac{2\gamma}{\dc+2\gamma}} \Big(\Lc \dc^{\frac{\gamma}{2}}\Big)^{\frac{\dc}{\dc+2\gamma}}\dx^{\frac{(2+\dxmu)\gamma}{\dc+2\gamma}}T^{\frac{\dc + \gamma}{\dc+2\gamma}} .
\end{align*}
In the second case, $\Lc \dc^{\frac{\gamma}{2}}\dx^{-(1+\dxmu/2)} T^{1/2} \log^{-1}(T+1)\le 1$. Then $\numbin(1+\dxmu/2,1/2,1,\gamma)=1$, so that the right hand side of \eqref{eq:cor1-bound} does not exceed
\begin{align*}
 C_*(L_1\dx^{1+\dxmu} + L_2\sqrt{\dx^{2+\dxmu} T} + (\Lc \dc^{\frac{\gamma}{2}} T^{1/2}) \sqrt{T})
 \le 
 C_*(L_1\dx^{1+\dxmu} + (L_2+ 1)\sqrt{\dx^{2+\dxmu} T}). 
\end{align*}
Putting these remarks together we get
\begin{align*}
 &   \Exp\big[\sum_{t=1}^{T}\big(f(\bz_t,\bc_t) - \min_{\bz\in\com}f(\bz,\bc_t)\big)\big]
\le C \max\bigg\{L_1\dx^{1+\dxmu} + L_2\sqrt{\dx^{2+\dxmu} T}, \, 
    \\&
    \qquad \qquad \qquad \qquad \qquad 
    L_1\Big(\Lc\dc^{\frac{\gamma}{2}}L_2^{-1}\Big)^{\frac{2\dc}{\dc+2\gamma}}
    \dx^{\frac{2\gamma(1+\dxmu)-\dc}{\dc+2\gamma}}
    T^{\frac{\dc}{\dc+2\gamma}},
    \,
    L_2^\frac{2\gamma}{\dc+2\gamma}\Big(\Lc \dc^{\frac{\gamma}{2}}\Big)^{\frac{\dc}{\dc+2\gamma}}
    \dx^{\frac{(2+\dxmu)\gamma}{\dc+2\gamma}}
    T^{\frac{\dc + \gamma}{\dc+2\gamma}}  
    \bigg\},
\end{align*}
where $C>0$ is a constant depending only on $M,\sigma, \Lx/\alpha$.
\end{proof}

\section{Proof of Theorem \ref{thm:lw_STRc_cont}}\label{app:3}
\begin{lemma}\label{lem:con} Suppose that $\mathbf{P}_1$ and $\mathbf{P}_2$ are two probability measures that are defined on a probability space $(X,
\mathcal{M})$, such that $\mathbf{P}_1$ is absolutely continuous with respect to $\mathbf{P}_2$. Let $\Upsilon\in \mathcal{M}$ be such that $\mathbf{P}_1(\Upsilon) = \mathbf{P}_2(\Upsilon)\neq 0$. Let $\tilde{\mathbf{P}}_1$ and $\tilde{\mathbf{P}}_2$ be the conditional probability measures of $\mathbf{P}_1$ and $\mathbf{P}_2$ given the event $\Upsilon$, respectively, i.e., for $i = 1, 2$ and $\Gamma \in \mathcal{M}$
\begin{align*}
\tilde{\mathbf{P}}_{i}(\Gamma) = \mathbf{P}_{i}(\Gamma\cap \Upsilon) \mathbf{P}_{i}^{-1}(\Upsilon).
\end{align*}
Then,
\begin{enumerate}
    \item For $i=1,2$
    $$\frac{\d\tilde{\mathbf{P}}_{i}(\bx)}{\d\mathbf{P}_{i}(\bx)} = \mathbb{1}(\bx\in \Upsilon)\mathbf{P}_{i}^{-1}(\Upsilon).$$
    \item $KL(\tilde{\mathbf{P}}_{1},\tilde{\mathbf{P}}_{2})=\int \log\left(\frac{\d\mathbf{P}_{1}(\bx)}{\d\mathbf{P}_{2}(\bx)}\right)\mathbb{1}(\bx\in \Upsilon)\mathbf{P}_{1}^{-1}(\Upsilon)\d\mathbf{P}_{1}(\bx).$
\end{enumerate}
\end{lemma}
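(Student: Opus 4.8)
The plan is to verify both claims directly, using only the definition of the conditional measures and the uniqueness of Radon--Nikodym densities.

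For part 1, for every $\Gamma\in\mathcal{M}$ one has
\[
\tilde{\mathbf{P}}_i(\Gamma)=\frac{\mathbf{P}_i(\Gamma\cap\Upsilon)}{\mathbf{P}_i(\Upsilon)}=\int_\Gamma \frac{\mathbb{1}(\bx\in\Upsilon)}{\mathbf{P}_i(\Upsilon)}\,\d\mathbf{P}_i(\bx),
\]
which in particular shows $\tilde{\mathbf{P}}_i\ll\mathbf{P}_i$; since $\Gamma$ is arbitrary, uniqueness of the density yields $\d\tilde{\mathbf{P}}_i/\d\mathbf{P}_i=\mathbb{1}(\cdot\in\Upsilon)/\mathbf{P}_i(\Upsilon)$, which is part 1.

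For part 2, I would first observe that $KL(\tilde{\mathbf{P}}_1,\tilde{\mathbf{P}}_2)$ is well defined: if $\tilde{\mathbf{P}}_2(\Gamma)=0$ then $\mathbf{P}_2(\Gamma\cap\Upsilon)=0$, hence $\mathbf{P}_1(\Gamma\cap\Upsilon)=0$ by $\mathbf{P}_1\ll\mathbf{P}_2$, hence $\tilde{\mathbf{P}}_1(\Gamma)=0$, i.e. $\tilde{\mathbf{P}}_1\ll\tilde{\mathbf{P}}_2$. The key step is then to identify $\d\tilde{\mathbf{P}}_1/\d\tilde{\mathbf{P}}_2$ with $\d\mathbf{P}_1/\d\mathbf{P}_2$ as a $\tilde{\mathbf{P}}_2$-a.e.\ identity. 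This follows by testing against an arbitrary $\Gamma\in\mathcal{M}$: using part 1 for $i=2$,
\[
\int_\Gamma \frac{\d\mathbf{P}_1}{\d\mathbf{P}_2}(\bx)\,\d\tilde{\mathbf{P}}_2(\bx)=\frac{1}{\mathbf{P}_2(\Upsilon)}\int_{\Gamma\cap\Upsilon}\frac{\d\mathbf{P}_1}{\d\mathbf{P}_2}(\bx)\,\d\mathbf{P}_2(\bx)=\frac{\mathbf{P}_1(\Gamma\cap\Upsilon)}{\mathbf{P}_2(\Upsilon)}=\tilde{\mathbf{P}}_1(\Gamma),
\]
where the last equality uses the hypothesis $\mathbf{P}_1(\Upsilon)=\mathbf{P}_2(\Upsilon)$; uniqueness of the density gives the claim.

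Finally, substituting this into $KL(\tilde{\mathbf{P}}_1,\tilde{\mathbf{P}}_2)=\int\log\big(\d\tilde{\mathbf{P}}_1/\d\tilde{\mathbf{P}}_2\big)\,\d\tilde{\mathbf{P}}_1$ and rewriting $\d\tilde{\mathbf{P}}_1$ via part 1 as $(\mathbb{1}(\cdot\in\Upsilon)/\mathbf{P}_1(\Upsilon))\,\d\mathbf{P}_1$ produces exactly the stated integral. There is no real obstacle here beyond routine measure-theoretic bookkeeping; the one point worth flagging is the cancellation $\mathbf{P}_1(\Upsilon)=\mathbf{P}_2(\Upsilon)$, which is precisely what makes $\d\tilde{\mathbf{P}}_1/\d\tilde{\mathbf{P}}_2$ coincide with $\d\mathbf{P}_1/\d\mathbf{P}_2$ on $\Upsilon$ (together with the fact that $\mathbf{P}_2(\Upsilon)>0$, which keeps all the densities above a.e.\ finite).
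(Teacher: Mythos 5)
Your proof is correct. Note that the paper states Lemma \ref{lem:con} without giving any proof (it is invoked as a routine fact inside the proof of Lemma \ref{lem:conKL}), so there is no authorial argument to compare against; your derivation — part 1 from the identity $\tilde{\mathbf{P}}_i(\Gamma)=\int_\Gamma \mathbb{1}(\bx\in\Upsilon)\mathbf{P}_i^{-1}(\Upsilon)\,\d\mathbf{P}_i(\bx)$ and Radon--Nikodym uniqueness, and part 2 from checking $\tilde{\mathbf{P}}_1\ll\tilde{\mathbf{P}}_2$ and showing $\d\tilde{\mathbf{P}}_1/\d\tilde{\mathbf{P}}_2=\d\mathbf{P}_1/\d\mathbf{P}_2$ $\tilde{\mathbf{P}}_2$-a.e.\ via the cancellation $\mathbf{P}_1(\Upsilon)=\mathbf{P}_2(\Upsilon)$ — is exactly the standard argument the authors leave implicit, and it is complete.
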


\begin{proof}[Proof of Theorem \ref{thm:lw_STRc_cont}]
We choose a finite set of functions in a way that its members cannot be distinguished from each other with positive probability but are separated enough from each other to guarantee that the maximal regret for this family is not smaller than the desired lower bound.

Set $\numbin=\max\left(1,\floor{\left(\min\left(1,\Lc^2\right)T\right)^{
\frac{1}{\dc+2\gamma}}}\right)$.
Let $(B_i)_{i=1}^{\numbin^{\dc}}$ be a partition of $[0,1]^{\dc}$, where $B_i$'s are disjoint cubes with edges of length~$1/\numbin$. Then, for any $\bc\in[0,1]^{\dc}$ there exists $i\in\{1,\dots, \numbin^{\dc}\}$ that satisfies $d(\bc,\partial B_i)\leq \delta$, where $\partial B_i$ is the boundary of $B_i$, $d(\bc,\partial B_i) = \min_{\bu\in\partial B_i}\norm{\bc-\bu}$  and $\delta = \frac{1}{2\numbin}$,  cf. \eqref{eq1-lem-proj}.

Let $\eta_0:\mathbb{R}\to\mathbb{R}$ be an infinitely many times differentiable function such that
\begin{align*}
    \eta_0(x) = \begin{cases}
    =1 \quad\quad &\text{if }|x|\leq 1/4,\\
    \in (0,1)\quad\quad &\text{if }1/4<|x|< 1,\\
    =0 \quad\quad &\text{if }|x|\geq 1.
    \end{cases}
\end{align*}
Set $\eta(x) = \int_{-\infty}^{x}\eta_0(u)\d u$. Let $\Omega = \{-1,1\}^{\numbin^{\dc}}$ and $\numhypo$ be the sets of binary sequences of length $\numbin^{\dc}$ and $\dx-1$, respectively. Consider the finite set of functions $f_{\bomega,\btau}:\mathbb{R}^{\dx}\times [0,1]^{\dc}\to\mathbb{R}$, $\bomega \in \Omega$, $\btau\in\numhypo$ such that:
\begin{equation}\label{lw:function}
\begin{aligned}
    f_{\bomega,\btau}(\bx,\bc) = \alpha\norm{\bx}^2+r_1\Lc\eta\Big(x_1\delta^{-\frac{\gamma}{2}}\Big)&\left(\sum
_{j=1}^{\numbin^{\dc}}\omega_jd^{\gamma}(\bc, \partial B_j)\mathbb{1}\left(
\bc\in B_j\right)\right)\\
&+ r_2h^{2}\left(\sum_{i=2}^{\dx}\tau_i\eta\left(\frac{x_i}{h}\right)\right),\quad\quad \bx = (x_1,\dots,x_{\dx}),
\end{aligned}
\end{equation}
where $\omega_j,\tau_i \in \{-1,1\}$, $h = \min\left(\dx^{-\frac{1}{2}},T^{-\frac{1}{4}}\right)$ and $r_1,r_2$ are fixed positive numbers that will be chosen small enough.
By Lemma \ref{lem:family} we have that $f_{\bomega,\btau}\in \mathcal{F}_{\alpha,\Lx}(M)\cap \mathcal{F}_{\gamma}(\Lc)$ provided $r_1\leq \min\left(1, 1/\Lc^2\right)\min\left(1/2\eta(1), \alpha/L'\right), r_2\leq \min\left(1/2\eta(1), \alpha/L'\right)$, where $L' = \max_{x\in\mathbb{R}}|\eta''\left(x\right)|$. Furthermore, by choosing $r_1\leq \alpha/\left(2\max\left(1,\Lc\right)\right)$ and $r_2\leq \alpha/2$ we have $$\alpha^{-1}\delta^{-\gamma}r_1\Lc\sum
_{j=1}^{\numbin^{\dc}}\omega_j d^{\gamma}(\bc_t,\partial B_j)\mathbb{1}\left(
    \bc\in B_j\right)\leq \frac{1}{2},\quad\quad\text{}\quad\quad \alpha^{-1}r_2 \leq \frac{1}{2},$$ and that under this condition the equation $\nabla_{\bx}f_{\bomega,\btau}(\bx,\bc) = 0$ has the solution
\begin{align*}
    \bx^{*}(\bomega,\btau,\bc) = \left(x^{*}_{1}(\bomega,\btau,\bc),\dots, x^{*}_{\dx}(\bomega,\btau,\bc)\right),
\end{align*}
where $x^{*}_{1}(\bomega,\btau,\bc) = -(1/2)\alpha^{-1}\delta^{-\frac{\gamma}{2}}r_1\Lc\sum
_{j=1}^{\numbin^{\dc}}\omega_j d^{\gamma}(\bc_t,\partial B_j)\mathbb{1}\left(
\bc\in B_j\right)$ and $x^{*}_{i}(\bomega,\btau,\bc) = -(1/2)\tau_ir_2\alpha^{-1}h$, for $2\leq i\leq \dx$.
Furthermore,
\begin{align*}
    \norm{\bx^{*}(\bomega,\btau,\bc)}^2\leq \frac{1}{4\alpha^2}\left(r_1^2\Lc^2\delta^{\gamma}+ \dx r_2^2h^2\right)\leq \frac{1}{8},
\end{align*}
which implies $\bx^{*}(\bomega,\btau,\bc)\in \com$.

For any fixed $\bomega\in\Omega,\btau\in\numhypo$, and $1\leq t\leq T$, we denote by $\mathbf{P}_{\bomega, \btau,t}$ the probability measure corresponding to the joint distribution of $(\bz_1, y_1^{t},\bc_1^t)$ where $y_k = f_{\bomega,\btau}(\bz_k,\bc_k)+\xi_k$ with independent identically distributed $\xi_k$’s such that $\eqref{con:lw}$ holds and $\bz_k$’s chosen by the randomized procedure $\pi$. We have
\begin{align}\label{lw:dist}
    \d\mathbf{P}_{\bomega,\btau, t}\left(\bz_1, y_1^{t},\bc_1^t\right) = \d F\left(y_1-f_{\bomega,\btau}\left(\bz_1,
    \bc_1\right)\right)\d \mathbb{P}_{\numbin}(\bc_1)\prod_{i=2}^{t}\d F\left(y_i-f_{\bomega,\btau}\left(\Phi_i\left(\bz_1, y_1^{i-1},\bc_1^{i}\right),
    \bc_i\right)\right)\d \mathbb{P}_{\numbin}(\bc_i).
\end{align}
Without loss of generality, we omit here the dependence of $\Phi_i$ on $\bz_2, \dots, \bz_{i-1}$ since $\bz_i, i \geq 2$, is a Borel function of $\bz_1,y_1,\dots,y_{i-1},\bc_1,\dots, \bc_i$. Let $\Exp_{\bomega,\btau,t}$ denote the expectation w.r.t. $\mathbf{P}_{\bomega,\btau,t}$.

Note that
\begin{equation}\label{eq:strongc}
  \sum_{t=1}^{T}\Exp_{\bomega,\btau,t}\left[f_{\bomega,\btau}(\bz_t,\bc_t) - \min_{\bx\in\com}f_{\bomega,\btau}(\bx,\bc_t)\right] \ge \frac{\alpha}{2} \sum_{t=1}^{T}\Exp_{\bomega,\btau,t}\left[\norm{\bz_t-\bx^{*}(\bomega,\btau,\bc_t)}^2\right]  
\end{equation}
due to the uniform strong convexity of functions $\bx\mapsto f_{\bomega,\btau}(\bx,\bc)$.

Consider the statistics 
\begin{align*}
    (\hat{\bomega}_t, \hat{\btau}_t)\in \argmin_{\bomega\in \Omega,\btau\in\numhypo}\norm{\bz_t-\bx^{*}(\bomega,\btau, \bc_t)}.
\end{align*}
Since $\norm{\bx^{*}(\hat{\bomega}_t,\hat{\btau}_t,\bc_t) -\bx^{*}(\bomega,\btau,\bc_t)}\leq\norm{\bz_t - \bx^{*}(\hat{\bomega}_t,\hat{\btau}_t,\bc_t)}+\norm{\bz_t-\bx^{*}(\bomega,\btau,\bc_t)}\leq 2\norm{\bz_t-\bx^{*}(\bomega,\btau,\bc_t)}$ for all $\bomega\in\Omega,\btau\in\numhypo$ we obtain
\begin{align*}
    \Exp_{\bomega,\btau,t}\left[\norm{\bz_t-\bx^{*}(\bomega,\btau,\bc_t)}^{2}\right]&\geq \frac{1}{4}\Exp_{\bomega,\btau,t}\left[\norm{\bx^{*}(\hat{\bomega}_t,\hat{\btau}_t,\bc_t)-\bx^{*}(\bomega,\btau,\bc_t)}^{2}\right]\\
    &=\frac{1}{4}\alpha^{-2}\Lc^2\delta^{-\gamma}r_1^2\Exp_{\bomega,\btau,t}\left[\sum_{j=1}^{\numbin^{\dc}}d^{2\gamma}(\bc_t, \partial B_j)\mathbb{1}\left(\hat{\omega}_{t,j}\neq\omega_j
    \right)\mathbb{1}\left(
    \bc_t\in B_j\right)\right]
\\&\phantom{00000}+\frac{1}{4}\alpha^{-2}h^{2}r_2^2\Exp_{\bomega,\btau,t}\left[\rho(\btau,\hat{\btau_t})\right],
\end{align*}
where $\rho(\btau,\hat{\btau}_t) = \sum_{i=2}^{\dx}\mathbb{1}\left(\btau_i\neq\hat{\btau}_{t,i}\right)$ is the Hamming distance between $\btau$ and $\hat{\btau}_t$, with $\hat{\bomega}_t=\left(\hat{\omega}_{t,1},\dots,\hat{\omega}_{t,\numbin^{\dc}}\right)$ and $ \hat{\btau}_t=\left(\hat{\tau}_{t,2},\dots,\hat{\tau}_{t,\dx}\right)$. 
Note that, due to \eqref{eq2-lem-proj}, for $\varepsilon=1/2$ we have
$$
\mathbb{P}_{\numbin}\left[d(\bc_t, \partial B_j)\geq \frac{1}{4\numbin}\Big |\bc_t\in B_j\right] \geq 
\mathbb{P}_{\numbin}\left[\norm{\bc_t - \bb_j}_{\infty} \leq \frac{1}{4\numbin}\Big |\bc_t\in B_j\right] = 1,$$
which yields
$$\mathbb{P}_{\numbin}\left[d^{2\gamma}(\bc_t, \partial B_j)\geq \left(\frac{\delta^2}{4\dc^2}\right)^{\gamma}\Big|\bc_t\in B_j\right] = 1.$$

Thus,
\begin{align*}
    \Exp_{\bomega,\btau,t}&\left[\norm{\bz_t-\bx^{*}(\bomega,\bc_t)}^{2}\right]\geq
    \frac{1}{4}\alpha^{-2}\delta^{-\gamma} r_1^2\Lc^2\sum_{j=1}^{\numbin^{\dc}}\Exp_{\bomega,\btau,t}\left[d^{2\gamma}(\bc_t, \partial B_j)\mathbb{1}\left(\hat{\omega}_{t,j}\neq\omega_j
    \right)|\bc_t\in B_j\right]\mathbb{P}_{\numbin}\left(\bc_t\in B_j\right)
\\&\phantom{1000000000000}+\frac{1}{4}\alpha^{-2}r_2^2h^{2}\Exp_{\bomega,\btau,t}\left[\rho(\btau,\hat{\btau_t})\right]
\\&= \frac{1}{4}\alpha^{-2}r_1^2\Lc^2\delta^{-\gamma} \sum_{j=1}^{\numbin^{\dc}}\bigg(\Exp_{\bomega,\btau,t}\left[d^{2\gamma}(\bc_t, \partial B_j)\mathbb{1}\left(\hat{\omega}_{t,j}\neq\omega_j
    \right)\Big|\bc_t\in B_j,d^{2\gamma}(\bc_t, \partial B_j)\geq \left(\frac{\delta^2}{4\dc^2}\right)^{\gamma}\right]\\&\phantom{10000}\mathbb{P}_{\numbin}\left(d^{2\gamma}(\bc_t, \partial B_j)\geq \left(\frac{\delta^2}{4\dc^2}\right)^{\gamma}\Big|\bc_t\in B_j\right)\mathbb{P}_{\numbin}\left(\bc_t\in B_j\right)\bigg)
+\alpha^{-2}r_2^2h^{2}\Exp_{\bomega,\btau,t}\left[\rho(\btau,\hat{\btau_t})\right]
\\&\geq
\frac{\alpha^{-2}}{4^{\gamma+1}}r_1^2\Lc^2\delta^{\gamma} \sum_{j=1}^{\numbin^{\dc}}\Exp_{\bomega,\btau,t}\left[\mathbb{1}\left(\hat{\omega}_{t,j}\neq\omega_j
    \right)\Big|\bc_t\in B_j,d^{2\gamma}(\bc_t, \partial B_j)\geq \left(\frac{\delta^2}{4\dc^2}\right)^{\gamma}\right]\mathbb{P}_{\numbin}\left(\bc_t\in B_j\right)
\\&\phantom{1000000000000}+\frac{1}{4}\alpha^{-2}r_2^2h^{2}\Exp_{\bomega,\btau,t}\left[\rho(\btau,\hat{\btau_t})\right].
\end{align*}
Introduce the random event $\Upsilon_{t,j}=\{\bc_t\in B_j, \ d(\bc_t,\partial B_j)\geq \frac{\delta}{2\dc}\}$. Summing both sides from $1$ to $T$, taking the maximum over $\Omega$ and $\numhypo$ and then taking the minimum over all statistics $\hat{\bomega}_{t}$ with values in $\Omega$ and $\hat{\btau}_{t}$ with values in $\numhypo$ we obtain
\begin{align*}
\min_{\hat{\bomega}_1,\dots,\hat{\bomega}_T}&\max_{\bomega\in\Omega}\sum_{t=1}^{T}\Exp_{\bomega,\btau,t}\left[\norm{\bz_t-\bx^{*}(\bomega,\bc_t)}^{2}\right]
\geq 
\frac{\alpha^{-2}}{4^{\gamma+1}}r_1^2\Lc^2 \delta^{\gamma} \numbin^{-\dc}\underbrace{\min_{\hat{\bomega}_1,\dots,\hat{\bomega}_T}\max_{\bomega\in\Omega}\sum_{t=1}^{T}\sum_{j=1}^{\numbin^{\dc}}\Exp_{\bomega,\btau,t}\left[\mathbb{1}\left(\hat{\omega}_{t,j}\neq\omega_j
\right)|\Upsilon_{t,j}\right]}_{\text{term I}}\\&\phantom{000000}+\frac{1}{4}\alpha^{-2}r_2^2h^{2}\underbrace{\min_{\hat{\btau}_1,\dots,\hat{\btau}_T}\max_{\bomega\in\Omega,\btau\in\numhypo}\sum_{t=1}^{T}\Exp_{\bomega,\btau,t}\left[\rho(\btau,\hat{\btau_t})\right]}_{\text{term II}}.
\end{align*}
We will treat terms I and II separately. For term I we can write
\begin{align*}
    \text{term I} &\geq 2^{-\numbin^{\dc} - \dx + 1}\min_{\hat{\bomega}_1,
\dots,\hat{\bomega}_T}\sum_{t=1}^{T}\sum_{j=1}^{\numbin^{\dc}}\sum_{\btau\in\numhypo}\sum_{\bomega\in\Omega}\Exp_{\bomega,\btau,t}\left[\mathbb{1}\left(\hat{\omega}_{t,j}\neq\omega_j
\right)|\Upsilon_{t,j}\right]
\\&\geq 2^{-\numbin^{\dc} - \dx + 1}\sum_{t=1}^{T}\sum_{j=1}^{\numbin^{\dc}}\sum_{\btau\in\numhypo}\sum_{\bomega\in\Omega}\min_{\hat{\bomega}_{t,j}}\Exp_{\bomega,\btau,t}\left[\mathbb{1}\left(\hat{\omega}_{t,j}\neq\omega_j
\right)|\Upsilon_{t,j}\right]
\end{align*}
Let $\Omega_j =\{\bomega\in\Omega: \omega_j = 1\}$, and for any $\bomega\in\Omega_j$, denote $\bar{\bomega}$ such that $\omega_i=\bar{\omega}_i$, for any $i\neq j$, and $\bar{\omega}_j=-1$. Thus,
\begin{align*}
    \text{term I} &\geq
     2^{-\numbin^{\dc}-\dx+1}\sum_{t=1}^{T}
     \sum_{j=1}^{\numbin^{\dc}}\sum_{
    \btau\in\numhypo
}\sum_{\bomega\in\Omega_j}\min_{\hat{\omega}_{t,j}}\left(\Exp_{\bomega,\btau,t}\left[\mathbb{1}\left(\hat{\omega}_{t,j}\neq1\right)|\Upsilon_{t,j}\right] +\Exp_{\bar{\bomega},\btau,t}\left[\mathbb{1}\left(\hat{\omega}_{t,j}\neq-1\right)|\Upsilon_{t,j}\right]\right)\\
    &\geq 
    \frac{1}{2}\sum_{t=1}^{T} \sum_{j=1}^{\numbin^{\dc}}\min_{\btau\in\numhypo,\bomega\in \Omega_j}\min_{\hat{\omega}_{t,j}}\left(\Exp_{\bomega,\btau,t}\left[\mathbb{1}\left(\hat{\omega}_{t,j}\neq1\right)|\Upsilon_{t,j}\right] +\Exp_{\bar{\bomega},\btau,t}\left[\mathbb{1}\left(\hat{\omega}_{t,j}\neq-1\right)|\Upsilon_{t,j}\right]\right).
\end{align*}
For any $\bomega\in\Omega_j$ and $\btau\in\numhypo$, let $\tilde{\mathbf{P}}_{\bomega,\btau,t}$ and $\tilde{\mathbf{P}}_{\bar{\bomega},\btau,t}$ be the conditional probability distributions of $\mathbf{P}_{\bomega,\btau,t}$ and $\mathbf{P}_{\bar{\bomega},\btau,t}$ on the event $\Upsilon_{t,j}$, respectively. Then, by Lemma \ref{lem:conKL} we have
\begin{align*}
KL\bigg(\tilde{\mathbf{P}}_{\bomega,\btau,t},\tilde{\mathbf{P}}_{\bar{\bomega},\btau,t}\bigg) &\leq I_0\sum_{i=1}^{t-1}\int\max_{\bx\in\com
}|f_{\bomega,\btau}(\bx,\bc_i) - f_{\bar{\bomega},\btau}(\bx,\bc_i)|^{2}\d \mathbb{P}_{\numbin}(\bc_i)\\&\phantom{0000}+I_0\int\max_{\bx\in\com
}|f_{\bomega,\btau}(\bx,\bc_t) - f_{\bar{\bomega},\btau}(\bx,\bc_t)|^{2}\mathbb{1}(\bc_t\in\Upsilon_{t,j})\mathbb{P}_{\numbin}^{-1}(\Upsilon_{t,j})\d \mathbb{P}_{\numbin}(\bc_t)
\\&\leq I_0\sum_{i=1}^{t-1}\int|2r_1\Lc\eta(1)d^{\gamma}(\bc_i, \partial B_j)|^2\mathbb{1}\left(\bc_i\in B_j\right)\d \mathbb{P}_{\numbin}(\bc_i) \\&\phantom{0000}+ I_0\int |2r_1\eta(1)d^{\gamma}(\bc_t, \partial B_j)|^2\mathbb{1}(\bc_t\in\Upsilon_{t,j})\mathbb{P}_{\numbin}^{-1}(\Upsilon_{t,j})\d \mathbb{P}_{\numbin}(\bc_t)
\\&\leq 4I_0r_1^2\Lc^2\eta^{2}(1)\delta^{2\gamma}\left(\sum_{i=1}^{t-1}\int\mathbb{1}\left(\bc_i\in B_j\right)\d \mathbb{P}_{\numbin}(\bc_i)+1\right)\leq4I_0r_1^2\Lc^2\eta^{2}(1)\delta^{2\gamma}\left(t\numbin^{-\dc}+1\right).
\end{align*}
Since $\delta = 1/(2\numbin)$, $\numbin = \max\left(1,\floor{\left(\min\left(1,\Lc^2\right)T\right)^{
\frac{1}{\dc+2\gamma}}}\right)$ we obtain that 
\begin{align*}
KL\left(\tilde{\mathbf{P}}_{\bomega,\btau,t},\tilde{\mathbf{P}}_{\bar{\bomega},\btau,t}\right) \leq \log(2) 
\end{align*}
if we choose $r_1\leq \left(\log(2)/(8I_0\eta^{2}(1)\max(1,\Lc^2))\right)^{1/2}$.
Therefore, using Theorem 2.12 from \cite{Tsybakov09} we get
\begin{align*}
    \text{term I} \geq \frac{T\numbin^{\dc}}{4}\exp(-\log(2)) = \frac{T\numbin^{\dc}}{8}.
\end{align*}
Moreover, for any $\bomega\in\Omega$ and $\btau,\btau'\in\numhypo$ such that $\rho(\btau,\btau')=1$ we have
\begin{align*}
KL\left(\mathbf{P}_{\bomega,\btau,t},\mathbf{P}_{\bomega,\btau',t}\right) &= \int \log\left(\frac{\d\mathbf{P}_{\bomega,\btau,t}}{\d\mathbf{P}_{\bomega,\btau',t}}\right)\d\mathbf{P}_{\bomega,\btau,t}\\
&=\int\bigg[\log\left(\frac{\d F(y_1-f_{\bomega,\btau}(\bz_1,\bc_1))}{\d F(y_1-f_{\bomega,\btau'}(\bz_1,\bc_1))}\right)+
\\&\phantom{0000000}+\sum_{i=2}^{t}\log\left(\frac{\d F(y_i-f_{\bomega,\btau}\left(\Phi_i\left(\bz_1^{i-1},y_1^{i-1},\bc_1^i\right),\bc_i)\right)}{\d F(y_i-f_{\bomega,\btau'}\left(\Phi_i\left(\bz_1^{i-1},y_1^{i-1},\bc_1^i\right),\bc_i)\right)}\right)\bigg]
\\&\phantom{000000000}\d F\left(y_i-f_{\bomega,\btau}\left(\bz_1^{i-1},\bc_1
\right)\right)\prod_{i=2}^{t}\d F\left(y_i-f_{\bomega,\btau}\left(\Phi_i\left(\bz_1^{i-1},y_{1}^{i-1},\bc_1^i\right),\bc_i
\right)\right)
\\&\leq I_0\sum_{i=1}^{t}\max_{\bx\in\com
}|f_{\bomega,\btau}(\bx,\bc_i) - f_{\bomega,\btau'}(\bx,\bc_i)|^{2}
= 4tI_0r_2^2h^{4}\eta^2(1).
\end{align*}
Using the fact that $h \leq T^{-\frac{1}{4}}$, choosing $r_2\leq \left(\log(2)/(4I_0\eta^{2}(1))\right)^{1/2}$ and applying again Theorem 2.12 from \cite{Tsybakov09} implies
\begin{align*}
    \text{term II}\geq \frac{T(\dx-1)}{4}\exp(-\log(2)) \geq \frac{T\dx}{16}.
\end{align*}
Putting together the bounds for terms I and II we get
\begin{align}\nonumber
\min_{\hat{\bomega}_1,\hat{\btau}_1\dots,\hat{\bomega}_T,\hat{\btau}_T}\max_{\bomega\in\Omega,\tau\in\numhypo}\sum_{t=1}^{T}\Exp_{\bomega,\btau,t}&\left[\norm{\bz_t-\bx^{*}(\bomega,\btau,\bc_t)}^{2}\right]\geq\frac{\alpha^{-2}}{256}r_1^2T\delta^{\gamma}  + \frac{\alpha^{-2}}{64}r_2^2 \dx Th^{2}  
\\\label{eq:lwX}&\geq \frac{\alpha^{-2}}{2048}r_3^2\min\left(1,\Lc^{\frac{2(d+\gamma)}{d+2\gamma}}\right)T^{\frac{d+\gamma}{d+2\gamma}} + \frac{\alpha^{-2}}{64}r_2^2\min\left(T, \dx T^{\frac{1}{2}}\right),
\end{align}
where $r_3 = r_2/\min(1,\Lc)$. 
Considering the fact that $r_2$ and $r_3$ are independent of 
 $\dx,\dc,\Lc$, and $T$, \eqref{eq:lwX} combined with \eqref{eq:strongc} implies the theorem.
\end{proof}

Recall the definition of $\mathbb{P}_{\numbin}$ in \eqref{eq:badcontexbehave}. The following figure illustrates the support of $\mathbb{P}_{\numbin}$ for $\numbin=3$ and $\dc = 2$. By definition, $\mathbb{P}_{\numbin}$ yields a uniform distribution for each black square in the figure.
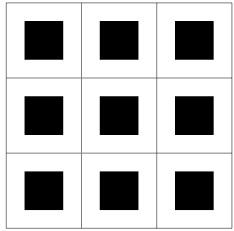
\begin{figure}[htbp]
  \centering
  \begin{tikzpicture}
    \draw[step=1cm,gray,very thin] (0,0) grid (3,3);
    \foreach \x in {0,...,2} {
      \foreach \y in {0,...,2} {
        \filldraw[black] (\x+0.25,\y+0.25) rectangle (\x+0.75,\y+0.75);
      }
    }
  \end{tikzpicture}
  \captionsetup{labelformat=empty}
  \caption{Support of $\mathbb{P}_{\numbin}$ for $\numbin=3$ and $\dc = 2$}
\end{figure}

The next lemma provides an upper bound on the Kullback-Leibler divergence of two probability measures, which plays a key role in our analysis of the lower bound.

\begin{lemma}\label{lem:conKL}
    With the same notation as in the proof of Theorem \ref{thm:lw_STRc_cont}, for $\bomega,\bomega'\in\Omega$ and $t\geq 1$ denote $\mathbf{P}_{\bomega,t}$ and $\mathbf{P}_{\bomega',t}$ as they are defined in \eqref{lw:dist}. Let $\Upsilon\subseteq [0,1]^{\dc}$ such that $\mathbb{P}_{\numbin}(\Upsilon)\neq 0$. Let $\tilde{\mathbf{P}}_{\bomega,t}$ $\tilde{\mathbf{P}}_{\bomega',t}$ be the conditional probability measures of ${\mathbf{P}}_{\bomega,t}$ ${\mathbf{P}}_{\bomega',t}$ given the event $\bc_t \in \Upsilon$, respectively. Then, if \eqref{con:lw} holds, we have
\begin{align*}
KL(\tilde{\mathbf{P}}_{\bomega,t},\tilde{\mathbf{P}}_{\bomega',t}) &\leq I_0\sum_{i=1}^{t-1}\max_{\bx\in\com}|f_{\bomega,t}(\bx,\bc_i)-f_{\bomega',t}(\bx,\bc_i)|\d \mathbb{P}_{\numbin}(\bc_i) \\&\phantom{0000}+I_o\int \max_{\bx\in\com}|f_{\bomega,t}(\bx,\bc_i)-f_{\bomega',t}(\bx,\bc_i)|\mathbb{1}(\bc_t\in\Upsilon)\mathbb{P}_{\numbin}^{-1}(\Upsilon)\d \mathbb{P}_{\numbin}(\bc_t).
\end{align*}
\end{lemma}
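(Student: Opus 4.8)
The plan is to first reduce the conditional Kullback--Leibler divergence to an integral against the unconditional measure via Lemma~\ref{lem:con}, and then to peel off the coordinates one at a time, exploiting the tensorized form \eqref{lw:dist} of $\mathbf{P}_{\bomega,t}$ together with the noise condition \eqref{con:lw}.

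I would apply Lemma~\ref{lem:con} with $\mathbf{P}_1=\mathbf{P}_{\bomega,t}$, $\mathbf{P}_2=\mathbf{P}_{\bomega',t}$, and the event of the underlying space taken to be $\{\bc_t\in\Upsilon\}$. This event carries the same (nonzero) probability $\mathbb{P}_{\numbin}(\Upsilon)$ under both measures, since the marginal law of the contexts does not depend on $\bomega$, and $\mathbf{P}_{\bomega,t}\ll\mathbf{P}_{\bomega',t}$ because condition \eqref{con:lw} presupposes that the translates of $F$ are mutually absolutely continuous. Part~2 of Lemma~\ref{lem:con} then gives
\[
KL(\tilde{\mathbf{P}}_{\bomega,t},\tilde{\mathbf{P}}_{\bomega',t}) = \frac{1}{\mathbb{P}_{\numbin}(\Upsilon)}\int \log\!\left(\frac{\d\mathbf{P}_{\bomega,t}}{\d\mathbf{P}_{\bomega',t}}\right)\mathbb{1}(\bc_t\in\Upsilon)\,\d\mathbf{P}_{\bomega,t}.
\]
From \eqref{lw:dist}, the context factors $\d\mathbb{P}_{\numbin}(\bc_i)$ cancel in the likelihood ratio, so $\log(\d\mathbf{P}_{\bomega,t}/\d\mathbf{P}_{\bomega',t})=\sum_{i=1}^{t}L_i$, where $L_i=\log\big(\d F(y_i-f_{\bomega,\btau}(\Phi_i,\bc_i))/\d F(y_i-f_{\bomega',\btau}(\Phi_i,\bc_i))\big)$, with $\Phi_1=\bz_1$ and, for $i\ge 2$, $\Phi_i=\Phi_i(\bz_1,y_1^{i-1},\bc_1^{i})$ the query point at step $i$; crucially, $L_i$ is measurable with respect to $(\bz_1,y_1^{i},\bc_1^{i})$.

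The core of the argument is to integrate $\sum_i L_i$ term by term. For $i\le t-1$, the factor $\mathbb{1}(\bc_t\in\Upsilon)$ and the variables $(y_j,\bc_j)_{j>i}$ are ``future'' with respect to $L_i$: integrating them out first contributes $\int\mathbb{1}(\bc_t\in\Upsilon)\,\d\mathbb{P}_{\numbin}(\bc_t)=\mathbb{P}_{\numbin}(\Upsilon)$ and a factor $1$ from each $\int\d F(y_j-\cdot)$ and each $\int\d\mathbb{P}_{\numbin}(\bc_j)$, reducing the $i$-th contribution to $\mathbb{P}_{\numbin}(\Upsilon)\int L_i\,\d\mathbf{P}_{\bomega,i}$. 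Conditioning on $(\bz_1,y_1^{i-1},\bc_1^{i})$ and integrating out $y_i\sim F(\cdot-f_{\bomega,\btau}(\Phi_i,\bc_i))$ turns $\int L_i\,\d F$ into $\int\log\big(\d F(u)/\d F(u+v_i)\big)\,\d F(u)$ with $v_i=f_{\bomega,\btau}(\Phi_i,\bc_i)-f_{\bomega',\btau}(\Phi_i,\bc_i)$. Since $\Phi_i\in\com$ (or, if queries in $\mathbb{R}^{\dx}$ are allowed, since the $\bomega$-dependence of $f_{\bomega,\btau}$ enters only through a globally bounded factor, so that the supremum over $\mathbb{R}^{\dx}$ is attained on $\com$), we have $|v_i|\le\max_{\bx\in\com}|f_{\bomega,\btau}(\bx,\bc_i)-f_{\bomega',\btau}(\bx,\bc_i)|$, which is $<v_0$ by the choice of $r_1$; hence \eqref{con:lw} yields $\int L_i\,\d F\le I_0 v_i^2\le I_0\max_{\bx\in\com}|f_{\bomega,\btau}(\bx,\bc_i)-f_{\bomega',\btau}(\bx,\bc_i)|^{2}$. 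As this bound depends only on $\bc_i$, integrating out the remaining variables leaves $I_0\int\max_{\bx\in\com}|f_{\bomega,\btau}(\bx,\bc_i)-f_{\bomega',\btau}(\bx,\bc_i)|^{2}\,\d\mathbb{P}_{\numbin}(\bc_i)$, and the prefactor $\mathbb{P}_{\numbin}(\Upsilon)$ cancels the $1/\mathbb{P}_{\numbin}(\Upsilon)$ from Lemma~\ref{lem:con}. For $i=t$ the same computation applies, except that $y_t$ is integrated last and the factor $\mathbb{1}(\bc_t\in\Upsilon)$ is retained (it is no longer a constant, since $\max_{\bx\in\com}|f_{\bomega,\btau}(\bx,\bc_t)-f_{\bomega',\btau}(\bx,\bc_t)|^{2}$ depends on $\bc_t$); after dividing by $\mathbb{P}_{\numbin}(\Upsilon)$ this gives the term $I_0\int\max_{\bx\in\com}|f_{\bomega,\btau}(\bx,\bc_t)-f_{\bomega',\btau}(\bx,\bc_t)|^{2}\,\mathbb{1}(\bc_t\in\Upsilon)\,\mathbb{P}_{\numbin}^{-1}(\Upsilon)\,\d\mathbb{P}_{\numbin}(\bc_t)$. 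Summing over $i$ yields the asserted inequality (with squared differences).

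The main obstacle is bookkeeping: one must carefully track the measurability of each $L_i$ and apply Fubini and the tower rule in the correct order, so that the conditioning event $\{\bc_t\in\Upsilon\}$ behaves as an innocuous normalizing factor for the first $t-1$ terms but survives in the last one. The only genuinely analytic ingredient is the quadratic per-step bound coming from \eqref{con:lw}, which requires verifying $|v_i|<v_0$ — guaranteed by choosing the amplitudes $r_1,r_2$ small, as elsewhere in the proof of Theorem~\ref{thm:lw_STRc_cont}.
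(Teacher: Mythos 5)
Your proposal is correct and follows essentially the same route as the paper: apply Lemma~\ref{lem:con} to reduce the conditional KL divergence to an integral of the factorized log-likelihood ratio $\sum_{i=1}^t L_i$, then bound each term via \eqref{con:lw} after integrating out $y_i$, with the indicator $\mathbb{1}(\bc_t\in\Upsilon)\mathbb{P}_{\numbin}^{-1}(\Upsilon)$ integrating to one for $i\le t-1$ and surviving only in the $i=t$ term. Your explicit tracking of the Fubini/tower-rule order and of the condition $|v_i|<v_0$ (and your note that the bound holds with squared differences, as in the paper's own proof and as actually used in Theorem~\ref{thm:lw_STRc_cont}) only makes explicit details the paper leaves implicit.
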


\begin{proof}
    By Lemma \ref{lem:con}(i), for any $\bomega\in\Omega$ we have
    \begin{align*}
    \d\tilde{\mathbf{P}}_{\bomega,t}(\bz_1,y_1^{t},\bc_1^t)&=\d F(y_1-f_{\bomega}(\bz_1,\bc_1))\d \mathbb{P}_{\numbin}(\bc_1)\left(\prod_{i=2}^{t-1}\d F(y_i-f_{\bomega}(\pi_i(\bz_1,y_1^{i-1},\bc_1^{i-1})))\d \mathbb{P}_{\numbin}(\bc_i)\right)\\&\phantom{00000}\d F(y_t-f_{\bomega}(\pi_i(\bz_1,y_1^{t-1},\bc_1^{t})))\mathbb{1}(\bc_t\in\Upsilon)\mathbb{P}_{\numbin}^{-1}(\Upsilon)\d \mathbb{P}_{\numbin}(\bc_t).
    \end{align*}
Therefore, for any $\bomega,\bomega'\in\Omega$, by Lemma \ref{lem:con}(ii) we can write
\begin{align*}
&KL\bigg(\tilde{\mathbf{P}}_{\bomega,t},\tilde{\mathbf{P}}_{\bomega',t}\bigg) = \int \log\left(\frac{\d\mathbf{P}_{\bomega,t}}{\d\mathbf{P}_{\bomega',t}}\right)\d\tilde{\mathbf{P}}_{\bomega,t}\\
&\phantom{00000000000}=\int\bigg[\log\left(\frac{\d F(y_1-f_{\bomega}(\bz_1,\bc_1))}{\d F(y_1-f_{\bomega'}(\bz_1,\bc_1))}\right)+
\\&\phantom{0000000}+\sum_{i=2}^{t}\log\left(\frac{\d F(y_i-f_{\bomega}\left(\pi_i\left(\bz_1,y_1^{i-1},\bc_1^i\right),\bc_i)\right)}{\d F(y_i-f_{\bomega'}\left(\pi_i\left(\bz_1,y_1^{i-1},\bc_1^i\right),\bc_i)\right)}\right)\bigg]
\\&\d F\left(y_1-f_{\bomega}\left(\bz_1,\bc_1
\right)\right)\d\mathbb{P}_{\numbin}(\bc_1)\left(\prod_{i=2}^{t-1}\d F\left(y_i-f_{\bomega}\left(\pi_i\left(\bz_1,y_{1}^{i-1},\bc_1^i\right),\bc_i
\right)\right)\d\mathbb{P}_{\numbin}(\bc_i)\right)\\&\phantom{00000000000000000}\d F\left(y_t-f_{\bomega}\left(\pi_t\left(\bz_1,y_{1}^{t-1},\bc_1^t\right),\bc_t
\right)\right)\mathbb{1}(\bc_t\in\Upsilon)\mathbb{P}_{\numbin}^{-1}(\Upsilon)\d \mathbb{P}_{\numbin}(\bc_t)
\\&\leq I_0\sum_{i=1}^{t-1}\int\max_{\bx\in\com
}|f_{\bomega}(\bx,\bc_i) - f_{\bomega'}(\bx,\bc_i)|^{2}\d \mathbb{P}_{\numbin}(\bc_i)\\&\phantom{00000}+I_0\int\max_{\bx\in\com
}|f_{\bomega}(\bx,\bc_t) - f_{\bomega'}(\bx,\bc_t)|^{2}\mathbb{1}(\bc_t\in\Upsilon)\mathbb{P}_{\numbin}^{-1}(\Upsilon)\d \mathbb{P}_{\numbin}(\bc_t).
\end{align*}
\end{proof}

\begin{lemma}\label{lem:family}
    With the same notation as in the proof Theorem \ref{thm:lw_STRc_cont}, let $f_{\bomega,\btau}:\mathbb{R}^{\dx}\times[0,1]^{\dc}\to\mathbb{R}$ denoted the function that it is defined in \eqref{lw:function}. Then, for all $\bomega\in\Omega$ and $\btau\in\numhypo$, $f_{\bomega,\btau}\in\mathcal{F}_{\alpha,\Lx}(M)\cap \mathcal{F}_{\gamma}(\Lc).$
\end{lemma}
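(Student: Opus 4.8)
The plan is to verify, under the smallness conditions on $r_1,r_2$ recorded just after \eqref{lw:function} (namely $r_1\le\min(1,1/\Lc^2)\min(1/(2\eta(1)),\alpha/L')$ and $r_2\le\min(1/(2\eta(1)),\alpha/L')$, with $L'=\max_{x}|\eta''(x)|$), the three properties defining $\mathcal{F}_{\alpha,\Lx}(M)$ together with the H\"older-in-context inequality \eqref{eq:goodalgo}. Recall that $\eta$ is the antiderivative of the smooth cutoff $\eta_0$, hence $C^\infty$, nondecreasing, identically $0$ on $(-\infty,-1]$ and identically $\eta(1)$ on $[1,\infty)$, so $0\le\eta\le\eta(1)$. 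Two observations will be used throughout: for $\bc\in B_j$ the in-radius of a cube of edge $1/\numbin=2\delta$ gives $d(\bc,\partial B_j)\le\delta$, so the bracketed coefficient $\psi(\bc):=\sum_{j}\omega_j d^{\gamma}(\bc,\partial B_j)\mathbb{1}(\bc\in B_j)$ satisfies $|\psi(\bc)|\le\delta^{\gamma}$; and on $\com=B^{\dx}$ one has $\norm{\bx}\le1$ and $h^{2}\le1/\dx$ by the definition of $h$.

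I would treat strong convexity and smoothness together. Since $\eta$ is $C^\infty$, each $f_{\bomega,\btau}(\cdot,\bc)$ is $C^\infty$ on $\mathbb{R}^{\dx}$, so it suffices to control the Hessian, which is $2\alpha\mathbf{I}_{\dx}+H$ with $H$ diagonal: $H_{11}=r_1\Lc\,\delta^{-\gamma}\eta''(x_1\delta^{-\gamma/2})\,\psi(\bc)$ and $H_{ii}=r_2\tau_i\eta''(x_i/h)$ for $2\le i\le\dx$. The scalings $\delta^{-\gamma/2}$ and $h^{2}$ in \eqref{lw:function} are chosen precisely so that these entries stay of order $r_1$ and $r_2$: using $|\eta''|\le L'$, the bound $|\psi(\bc)|\le\delta^{\gamma}$, $|\tau_i|=1$, and the constraints on $r_1,r_2$, one gets $|H_{11}|\le r_1\Lc L'\le\alpha$ and $|H_{ii}|\le r_2 L'\le\alpha$, hence $\alpha\mathbf{I}_{\dx}\preceq\nabla_{\bx}^2 f_{\bomega,\btau}(\bx,\bc)\preceq3\alpha\mathbf{I}_{\dx}$ for all $\bx,\bc$. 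This gives $\alpha$-strong convexity and $3\alpha$-smoothness; since the hypotheses of Theorem~\ref{thm:lw_STRc_cont} include $\Lx\ge3\alpha$, the function is also $\Lx$-smooth. For $\max_{\bx\in\com}|f_{\bomega,\btau}(\bx,\bc)|\le M$ I would estimate the three summands of \eqref{lw:function} on $B^{\dx}$ separately: $0\le\alpha\norm{\bx}^2\le\alpha$; the context term is at most $r_1\Lc\eta(1)\delta^{\gamma}\le1/2$ (using $\delta<1$ and $r_1\Lc\eta(1)\le(\Lc/2)\min(1,1/\Lc^{2})\le1/2$); and the last term is at most $r_2 h^{2}(\dx-1)\eta(1)\le r_2\eta(1)\le1/2$ (using $h^{2}(\dx-1)<\dx h^{2}\le1$). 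With the trivial bound $f_{\bomega,\btau}\ge-1$ on $B^{\dx}$ this yields $|f_{\bomega,\btau}(\bx,\bc)|\le\alpha+1\le M$.

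The delicate part is the H\"older bound in $\bc$. Only the context summand of \eqref{lw:function} depends on $\bc$, so for $\bx\in\com$ and $\bc,\bc'\in[0,1]^{\dc}$ one has $|f_{\bomega,\btau}(\bx,\bc)-f_{\bomega,\btau}(\bx,\bc')|=r_1\Lc|\eta(x_1\delta^{-\gamma/2})|\,|\psi(\bc)-\psi(\bc')|\le r_1\Lc\eta(1)\,|\psi(\bc)-\psi(\bc')|$, and since $r_1\eta(1)\le1/2$ it suffices to show $|\psi(\bc)-\psi(\bc')|\le2\norm{\bc-\bc'}^{\gamma}$. I would first note that $\phi_j(\bc):=d(\bc,\partial B_j)\mathbb{1}(\bc\in B_j)$ equals the distance from $\bc$ to $\mathbb{R}^{\dc}\setminus\text{int}(B_j)$, hence is $1$-Lipschitz, and that for each $\bc$ at most one $\phi_j(\bc)$ is nonzero since the interiors of the cells are disjoint. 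A short case analysis then finishes: if $\bc,\bc'$ belong to the same cell, $|\psi(\bc)-\psi(\bc')|=|\phi_j(\bc)^{\gamma}-\phi_j(\bc')^{\gamma}|\le|\phi_j(\bc)-\phi_j(\bc')|^{\gamma}\le\norm{\bc-\bc'}^{\gamma}$, using $|a^{\gamma}-b^{\gamma}|\le|a-b|^{\gamma}$ for $a,b\ge0$ and $\gamma\in(0,1]$; and if $\bc\in\text{int}(B_j)$, $\bc'\in\text{int}(B_{j'})$ with $j\ne j'$, taking the point $\bc''$ where the segment $[\bc,\bc']$ first meets $\partial B_j$ gives $\phi_j(\bc)\le\norm{\bc-\bc''}$ and $\phi_{j'}(\bc')\le\norm{\bc''-\bc'}$, so $|\psi(\bc)-\psi(\bc')|\le\phi_j(\bc)^{\gamma}+\phi_{j'}(\bc')^{\gamma}\le2^{1-\gamma}(\norm{\bc-\bc''}+\norm{\bc''-\bc'})^{\gamma}=2^{1-\gamma}\norm{\bc-\bc'}^{\gamma}\le2\norm{\bc-\bc'}^{\gamma}$. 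When $\Lc=0$ the context term vanishes and this step is vacuous.

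The main obstacle is precisely this cross-cell estimate: one must convert the disjointness of the cells' interiors into the quantitative fact that the two boundary distances add along the segment $[\bc,\bc']$, so that passing to the power $\gamma$ costs only the factor $2^{1-\gamma}\le2$, which is exactly what the choice $r_1\le1/(2\eta(1))$ absorbs. Everything else is routine bookkeeping of a diagonal Hessian perturbation and of three explicit summands on the unit ball.
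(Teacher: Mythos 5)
Your proposal is correct and follows essentially the same route as the paper: verify the H\"older-in-context bound by a same-cell/different-cell case analysis, check strong convexity and smoothness by bounding the diagonal Hessian perturbation by $\alpha$ (using $|\psi(\bc)|\le\delta^{\gamma}$ and $|\eta''|\le L'$), and bound $|f_{\bomega,\btau}|$ on $\com$ termwise using $M\ge\alpha+1$. Your same-cell step (1-Lipschitzness of the boundary distance plus $|a^{\gamma}-b^{\gamma}|\le|a-b|^{\gamma}$) and the segment-crossing argument giving the constant $2^{1-\gamma}$ are only cosmetic refinements of the paper's projection-based estimates, with the factor $2$ absorbed by $r_1\eta(1)\le 1/2$ exactly as in the paper.
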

\begin{proof}
Fix $\bomega\in\Omega,\btau\in\numhypo$. First, let us prove that for all $\bx\in\mathbb{R}^{\dx}$, $f_{\bomega,\btau}(\bx,\cdot)$ is a $1$-Lipschitz function. For any $\bc,\bc'\in[0,1]^{\dc}$, we can write
\begin{align*}
    |f_{\bomega,\btau}(\bx,\bc)-f_{\bomega,\btau}(\bx,\bc')| &= r_1\Lc\eta\left(x_1\delta^{-\frac{\gamma}{2}}\right)\bigg|\sum_{j=1}^{\numbin^{\dc}}\omega_j\left(d^{\gamma}(\bc,\partial B_j)\mathbb{1}\left(\bc\in B_j\right)-d^{\gamma}(\bc',\partial B_j)\mathbb{1}\left(\bc'\in B_j\right)\right)\bigg|\\&\leq r_1\Lc\eta\left(1\right)\bigg|\sum_{j=1}^{\numbin^{\dc}}\omega_j\left(d(\bc,\partial B_j)\mathbb{1}\left(\bc\in B_j\right)-d^{\gamma}(\bc',\partial B_j)\mathbb{1}\left(\bc'\in B_j\right)\right)\bigg|.
\end{align*}
For $i,j\in \numbin^{\dc}$, let us assume that $\bc\in B_i$ and $\bc'\in B_j$, and let $\by_i\in\partial B_i,\by'_j\in\partial B_j$, such that $d(\bc,\partial B_i)=\norm{\bc-\by_i}$ and $d(\bc',\partial B_j)=\norm{\bc'-\by'_j}$. If $i=j$, then
\begin{align*}
    |f_{\bomega,\btau}(\bx,\bc)-f_{\bomega,\btau}(\bx,\bc')| &\leq r_1\Lc \eta\left(1\right)\bigg|d^{\gamma}(\bc,\partial B_i)-d^{\gamma}(\bc',\partial B_i)\bigg|
\\&=r_1\Lc \eta\left(1\right)\bigg(\mathbb{1}\left(d(\bc,\partial B_i)\geq d\left(\bc',\partial B_i\right)\right)\left(\norm{\bc-\by_i}-\norm{\bc'-\by'_i}\right)\\&\phantom{00000}+\mathbb{1}\left(d(\bc,\partial B_i)< d\left(\bc',\partial B_i\right)\right)\left(\norm{\bc'-\by'_i}^{\gamma}-\norm{\bc-\by_i}^{\gamma}\right)\bigg)
\\&\leq r_1\Lc \eta\left(1\right)\bigg(\mathbb{1}\left(d(\bc,\partial B_i)\geq d\left(\bc',\partial B_i\right)\right)\left(\norm{\bc-\by'_i}^{\gamma}-\norm{\bc'-\by'_i}^{\gamma}\right)\\&\phantom{00000}+\mathbb{1}\left(d(\bc,\partial B_i)< d\left(\bc',\partial B_i\right)\right)\left(\norm{\bc'-\by_i}^{\gamma}-\norm{\bc-\by_i}^{\gamma}\right)\bigg)
\leq \Lc\norm{\bc - \bc'}^{\gamma},
\end{align*}
where the last inequality is obtained by the fact that $r_1\leq 1/(2\eta(1)\max(1,\Lc))$. If $i\neq j$, it means that $\bc$ and $\bc'$ does not belong to the same grid and we have
\begin{align*}
    d(\bc,\partial B_i)\leq \norm{\bc-\bc'}, \quad\quad\text{and}\quad\quad d(\bc',\partial B_j)\leq \norm{\bc-\bc'}.
\end{align*}
Consequently, we can write 
\begin{align*}
    |f_{\bomega,\btau}(\bx,\bc)-f_{\bomega,\btau}(\bx,\bc')| &\leq r_1\Lc\eta\left(1\right)\left(d^{\gamma}\left(\bc,\partial B_i\right)+d^{\gamma}\left(\bc',\partial B_i\right)\right)\leq \Lc\norm{\bc-\bc'}^{\gamma},
\end{align*}
where the last inequality is due to $r_1\leq 1/(2\eta(1)\max(1,\Lc)).$ 

Fixing $\bc\in[0,1]^{\dc}$, we calculate $\nabla_{\bx}^2 f_{\bomega,\btau}(\cdot,\bc)$. Let $\left(\nabla^{2}_{\bx}f_{\bomega,\btau}(\bx,\bc)\right)_{i,j}$ indicate the $(i,j)$ entry of the Hessian matrix. For $1\leq i\neq j\leq \numbin^{\dc}$ it is straightforward to check that 
$$\left(\nabla^{2}_{\bx}f_{\bomega,\btau}(\bx,\bc)\right)_{i,j}=0.$$
Now, for $i=j=1$, we have
\begin{align*}
    |\left(\nabla^{2}_{\bx}f_{\bomega,\btau}(\bx,\bc)\right)_{1,1}-2\alpha|\leq r_1\Lc\delta^{-\gamma}\max_{x\in\mathbb{R}}|\nabla^{2}_{x}\eta\left(x\right)|\sum_{j=1}^{\numbin^{\dc}}d^{\gamma}(\bc,\partial B_j)\mathbb{1}\left(\bc\in B_j\right)\leq \alpha,
\end{align*}
for $r_1\leq \alpha/\left(L'\max\left(1,\Lc\right)\right),$ where we introduced $L' = \max_{x\in\mathbb{R}}|\eta''\left(x\right)|$. Furthermore, for $i=j>1$, we can write
\begin{align*}
    |\left(\nabla^{2}_{\bx}f_{\bomega,\btau}(\bx,\bc)\right)_{i,i}-2\alpha|\leq r_2\max_{x\in\mathbb{R}}|\nabla^{2}_{x}\eta\left(x\right)|\leq \alpha,
\end{align*}
for $r_2\leq \alpha/L'$. Finally, to show that $|f(\bx,\bc)|\leq M$ for all $\bx\in\com$ and $\bc\in[0,1]^{\dc}$, we have
\begin{align*}
    |f(\bx,\bc)| \leq \alpha + r_1L\eta(1) + r_2\eta(1) \leq M.
\end{align*}
\end{proof}
\begin{lemma}\label{lemma-projections}
  For any $\bc\in B_j$ we have 
  \begin{equation}\label{eq1-lem-proj}
      d(\bc,\partial B_j) \le \frac{1}{2K},
  \end{equation}
  and, for any $\varepsilon\in (0,1)$, 
  \begin{equation}\label{eq2-lem-proj}
      \norm{\bc - \bb_j}_{\infty} \le \frac{\varepsilon}{2K} \quad \Longrightarrow \quad d(\bc,\partial B_j) \ge \frac{1-\varepsilon}{2K},
  \end{equation}
  where $\norm{\cdot}_{\infty}$ is the $\ell_\infty$ norm.
\end{lemma}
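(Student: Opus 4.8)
The plan is to reduce both inequalities to one elementary fact about axis-aligned cubes: the Euclidean distance from a point of the cube to its boundary equals the smallest coordinatewise distance to a face. First I would choose coordinates so that $B_j = \prod_{k=1}^{p}[a_k,\, a_k + 1/K]$, so that its barycenter is $\bb_j = \big(a_1 + \tfrac{1}{2K},\, \dots,\, a_p + \tfrac{1}{2K}\big)$, and for $\bc = (c_1, \dots, c_p) \in B_j$ I would set $\phi_k(\bc) = \min\big(c_k - a_k,\; a_k + 1/K - c_k\big) \ge 0$.

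The key step is the identity
\[
  d(\bc, \partial B_j) \;=\; \min_{1 \le k \le p} \phi_k(\bc).
\]
The lower bound ``$\ge$'' holds because every $\bu \in \partial B_j$ has at least one coordinate $u_k$ equal to $a_k$ or $a_k + 1/K$, whence $\norm{\bc - \bu} \ge |c_k - u_k| \ge \phi_k(\bc) \ge \min_m \phi_m(\bc)$; taking the infimum over $\bu \in \partial B_j$ gives the claim. For the matching upper bound, pick $k^\star$ attaining $\min_k \phi_k(\bc)$ and let $\bu^\star$ be $\bc$ with its $k^\star$-th coordinate moved to whichever of $a_{k^\star}$, $a_{k^\star} + 1/K$ is closer to $c_{k^\star}$; then $\bu^\star \in \partial B_j$ and $\norm{\bc - \bu^\star} = \phi_{k^\star}(\bc)$. (If $\bc \in \partial B_j$ both sides vanish and there is nothing to prove.)

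With this identity in hand, \eqref{eq1-lem-proj} is immediate: for any real $t \in [0, 1/K]$ one has $\min(t,\, 1/K - t) \le 1/(2K)$, so applying this with $t = c_k - a_k$ gives $\phi_k(\bc) \le 1/(2K)$ for each $k$, hence $d(\bc, \partial B_j) = \min_k \phi_k(\bc) \le 1/(2K)$. For \eqref{eq2-lem-proj}, the hypothesis $\norm{\bc - \bb_j}_{\infty} \le \varepsilon/(2K)$ says $\big|c_k - (a_k + \tfrac{1}{2K})\big| \le \varepsilon/(2K)$ for every $k$, which yields simultaneously $c_k - a_k \ge \tfrac{1}{2K} - \tfrac{\varepsilon}{2K} = \tfrac{1-\varepsilon}{2K}$ and $a_k + 1/K - c_k \ge \tfrac{1}{2K} - \tfrac{\varepsilon}{2K} = \tfrac{1-\varepsilon}{2K}$, so $\phi_k(\bc) \ge (1-\varepsilon)/(2K)$ for all $k$ and therefore $d(\bc, \partial B_j) \ge (1-\varepsilon)/(2K)$. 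No step here is a genuine obstacle; the only part needing an argument rather than a one-line estimate is the boundary-distance identity above, and the main points to watch are the degenerate case $\bc \in \partial B_j$ and the fact that the hypothesis of \eqref{eq2-lem-proj} is deliberately stated with the $\ell_\infty$ ball around $\bb_j$ precisely so that the coordinatewise lower bound on $\phi_k$ is available (a Euclidean ball of the same radius would not suffice).
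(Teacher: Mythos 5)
Your proof is correct and follows essentially the same elementary coordinatewise argument as the paper: the upper bound via an explicit boundary witness obtained by pushing one coordinate to the nearest face, and the lower bound by noting that any boundary point has some coordinate on a face, so its Euclidean distance to $\bc$ is at least that coordinate's distance $\ge (1-\varepsilon)/(2K)$. The only cosmetic difference is that you package both directions as the exact identity $d(\bc,\partial B_j)=\min_k\phi_k(\bc)$ and work in general coordinates, whereas the paper normalizes to $K=1$, $\bb_j=0$ and proves just the two one-sided bounds it needs.
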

\begin{proof}
  Without loss of generality, we assume throughout the proof that $K=1$ and $\bb_j=0$, so that the closure of $B_j$ is $[-1/2,1/2]^d$ and $\partial B_j=\{\bu: \norm{\bu}_{\infty}=1/2\}$. Inequality~\eqref{eq1-lem-proj} follows from the fact that 
  $d(\bc,\partial B_j)\le d(\bc,\bc')\le {1}/{2}$, where $\bc'\in \partial B_j$ is a vector such that all its coordinates except the first one coincide with those of $\bc$, and the first coordinate $c'_1$ of $\bc'$ is equal to the projection of the first coordinate $c_1\in [-1/2,1/2]$ of $\bc$ onto the set $\{-1/2,1/2\}$.

  In order to prove \eqref{eq2-lem-proj}, we denote by $\bu^*$ a solution of the problem 
  $$
  \min_{\bu: \norm{\bu}_{\infty}=1/2}  \norm{\bc - \bu}.
  $$
  There exists $i^*$ such that $\vert u_{i^*}^*\vert = 1/2$, where $u_{i^*}^*$ is the $i^*$th coordinate of $\bu^*$. Under the assumption that  $\norm{\bc}_{\infty} \le \varepsilon/2$ we have
  $$
  d^2(\bc,\partial B_j) =\norm{\bc - \bu^*}^2 \ge \vert c_{i^*} - u_{i^*}^*\vert^2\ge \min ( \vert c_{i^*} - 1/2\vert^2, \vert c_{i^*} + 1/2\vert^2) \ge \Big(\frac{1-\varepsilon}{2}\Big)^2.
  $$
This proves \eqref{eq2-lem-proj}.
\end{proof}

\section{General comments on self-concordant barriers}\label{app:self-concord}
In \cite[Proposition 2.3.6]{nesterov1994interior}, the authors demonstrated the existence of a convex body $\com$ such that any $\mu$-self-concordant barrier over $\com$ should have $\mu\geq \dx$. The proposition is stated as follows:

\textbf{Statement of \cite[Proposition 2.3.6]{nesterov1994interior}}: Let $\com \in \mathbb{R}^{\dx}$ be a convex polytope, such that certain boundary points of $\com$ belong to $m$ of $(\dx- 1)$-dimensional facets of $\com$, where $m \in [\dx]$. Moreover, assume that the normal vectors to these facets are linearly independent. Then the value of $\mu$ for any $\mu$-self-concordant barrier on $\com$ cannot be less than $m$. In particular, the $\dx$-dimensional non-negative orthant, simplex, and hyper-cube do not admit barriers with the parameter less than $\mu$.

For the case when $\com$ is a polytope, one can construct the following self-concordant barrier \cite[Corollary 3.1.1]{nemirovski2004interior}
\begin{itemize}
    \item $\mathcal{R}(\bu) = -\sum_{j=1}^{m}\log(\langle \ba_j, \bu\rangle - b_j)$ is an $m$-self-concordant barrier for the polytope $$\com = \{\bu\in\mathbb{R}^{\dx}: \langle \ba_j, \bu\rangle \geq b_j, \quad\text{for }j\in[m] \quad\text{and}\quad\ba_j\in\mathbb{R}^{\dx}, b_j\in\mathbb{R}\}.$$
\end{itemize}
Note that in the above, if $\com$ is the $\dx$-dimensional hyper-cube then $m = 2\dx$, and in this case \cite[Proposition 2.3.6]{nesterov1994interior} indicates that the above $\mathcal{R}$ has an optimal $\mu$ as a function of $\dx$. Moreover, if $\com$ is the $\dx$-dimensional simplex then $m  = \dx + 1$, which again by \cite[Proposition 2.3.6]{nesterov1994interior} we can see that the above example of $\mathcal{R}$ has an optimal $\mu$ as a function of $\dx$. 

For the case when $\com$ is scaled version of the unit Euclidean ball with ratio $r>0$, one can consider the following self-concordant barrier \cite[Example 9.3.2]{nemirovski2004interior}
\begin{itemize}

    \item $\mathcal{R}(\bu) = -\log(r^2 - \norm{\bu}^2)$ is a $2$-self-concordant barrier for $\com$.
\end{itemize}
Considering the result in \cite[Corollary 2.3.3]{nesterov1994interior}, we have always that $\mu\geq 1$ which shows the optimality of the above $\mathcal{R}$ for the unit Euclidean ball.

We refer the reader to \cite[Chapter 9]{nemirovski2004interior} for further examples of self-concordant barrier functions. 

In the general case of a convex body $\com$, \cite{bubeck2014entropic, hildebrand2014canonical, fox2015schwarz}, proposed $\mu$-self-concordant barriers with $\mu \leq \cst\dx $, where $\cst>0$ is a non-increasing function of $\dx$. Particularly, \cite[Theorem 1]{bubeck2014entropic} shows that $\cst \leq 1 + 100\sqrt{\log(\dx)/\dx}$, for all $\dx\geq 80$. All of these constructions by \cite{bubeck2014entropic, hildebrand2014canonical, fox2015schwarz} are computationally expensive (see \cite[Section 2.2]{bubeck2014entropic} for further details on the computational aspects of these methods). Now, let us consider the special case where $\com$ is the unit $\ell_q$-ball for $q \in (2,\infty]$, i.e., 
$$\com = \{\bu\in\mathbb{R}^{\dx}: \sum_{j=1}^{\dx} |\bu_j|^q\leq 1\},$$ 
for $q\in(2,\infty)$, and 
$$\com = \{\bu\in\mathbb{R}^{\dx}: \max_{j\in[\dx]}|\bu_j|\leq 1\},$$
for $q =\infty$. In \cite[Lemma 11 and Theorem 5]{bubeck2018homotopy}, the authors showed that any $\mu$-self-concordant barrier on $\com$ should satisfy
$$\mu \geq \cst\max\left(\dx^{\frac{1}{q}}\log(\dx)^{-\frac{q}{q-2}} , \dx^{1-\frac{2}{q}}\right),$$
where $\cst > 0$ only depends on $q$. The lower bound indicates that for the case of $q = \infty$, the $(\dx + 1)$-self-concordant barrier proposed by \cite{bubeck2014entropic, hildebrand2014canonical, fox2015schwarz} is optimal with respect to $\mu$ as a function of $\dx$. However, constructing an optimal $\mu$-self-concordant barrier for any $\ell_q$-ball for $q\in(2,\infty)$ remains an open problem.

\end{document}